\definecolor{LightCyan}{rgb}{0.58, 0.8, 0.85}
\definecolor{LightGray}{rgb}{0.9,0.9,0.9}
\titlespacing*{\section}{0pt}{*0.5}{*0.5}
\titlespacing*{\subsection}{0pt}{*0.3}{*0.3}
\titlespacing*{\subsubsection}{0pt}{*0.3}{*0.3}
\theoremstyle{plain}
\newtheorem{theorem}{Theorem}[section]
\newtheorem{lemma}[theorem]{Lemma}
\newtheorem{corollary}[theorem]{Corollary}
\theoremstyle{definition}
\newtheorem{definition}[theorem]{Definition}
\newtheorem{assumption}[theorem]{Assumption}
\theoremstyle{remark}
\newtheorem{remark}[theorem]{Remark}
\newcommand{\todoq}[2][]{\todo[size=\scriptsize,color=orange!20!white,#1]{Quanquan: #2}}
\icmltitlerunning{Uncertainty-Aware Reward-Free Exploration with General Function Approximation}
\newcommand{\alg}{$\texttt{GFA-RFE}$}
\begin{document}

\twocolumn[
\icmltitle{Uncertainty-Aware Reward-Free Exploration with General Function Approximation}



\icmlsetsymbol{equal}{*}

\begin{icmlauthorlist}
\icmlauthor{Junkai Zhang}{equal,ucla}
\icmlauthor{Weitong Zhang}{equal,ucla}
\icmlauthor{Dongruo Zhou}{iub}
\icmlauthor{Quanquan Gu}{ucla}
\end{icmlauthorlist}

\icmlaffiliation{ucla}{Department of Computer Science, University of California, Los Angeles, California, USA}
\icmlaffiliation{iub}{Department of Computer Science, Indiana University Bloomington, Indiana, USA}

\icmlcorrespondingauthor{Quanquan Gu}{qgu@cs.ucla.edu}
\icmlkeywords{Machine Learning, ICML}

\vskip 0.3in
]



\printAffiliationsAndNotice{\icmlEqualContribution} 

\begin{abstract}
Mastering multiple tasks through exploration and learning in an environment poses a significant challenge in reinforcement learning (RL). Unsupervised RL has been introduced to address this challenge by training policies with intrinsic rewards rather than extrinsic rewards. However, current intrinsic reward designs and unsupervised RL algorithms often overlook the heterogeneous nature of collected samples, thereby diminishing their sample efficiency.
To overcome this limitation, in this paper, we propose a reward-free RL algorithm called \alg. The key idea behind our algorithm is an \emph{uncertainty-aware intrinsic reward} for exploring the environment and an \emph{uncertainty-weighted} learning process to handle heterogeneous uncertainty in different samples. Theoretically, we show that in order to find an $\epsilon$-optimal policy, \alg~needs to collect $\tilde{O} (H^2 \log N_{\cF} (\epsilon) \dim (\cF) / \epsilon^2 )$ number of episodes, where $\cF$ is the value function class with covering number $N_{\cF} (\epsilon)$ and generalized eluder dimension $\dim (\cF)$. Such a result outperforms all existing reward-free RL algorithms. We further implement and evaluate \alg~across various domains and tasks in the DeepMind Control Suite.  Experiment results show that \alg~outperforms or is comparable to the performance of state-of-the-art unsupervised RL algorithms.
\end{abstract}



%




\section{Introduction}



Deep reinforcement learning (RL) has been the source of many breakthroughs in games (e.g., Atari game \citep{mnih2013playing} and Go game \citep{silver2016mastering}) and robotic control \citep{levine2016end}\todoq{add reference} over the last ten years. A key component of RL is exploration, which requires the agent to explore different states and actions before finding a near-optimal policy. Traditional exploration strategy involves iteratively executing a policy guided by a specific reward function, limiting the trained agent to solving only the single task for which it was trained. Designing an efficient exploration strategy agnostic to reward functions is crucial, as it prevents the agent from repeated learning under different reward functions, thereby avoiding inefficiency and potential intractability in sample complexity.


\begin{figure*}[t!]
\centering
\includegraphics[width=0.8\textwidth]{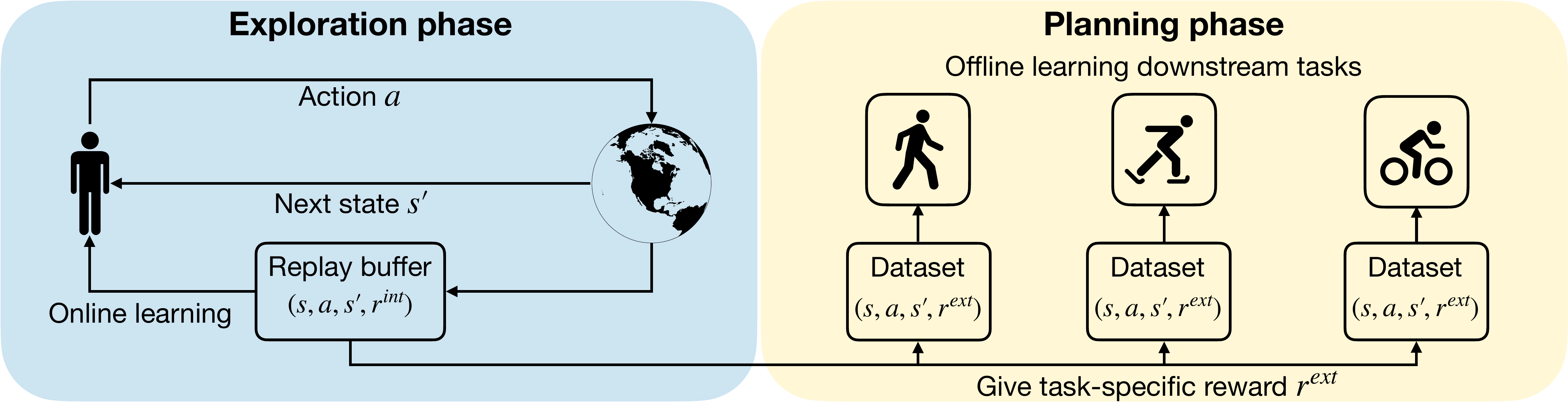}
\caption{The overall framework of reward-free exploration.}
\label{fig:framework}
\vspace{-1em}
\end{figure*}

To achieve this goal, \citet{jin2020reward} introduced a two-phase RL framework known as ``reward-free exploration" (RFE) for the basic tabular MDP setting. In this framework, the agent only interacts with the environment in the first phase without reward. Upon receiving the specific reward in the second phase, the algorithm returns a near-optimal policy without further interactions. The overall framework is displayed in Figure~\ref{fig:framework}. A series of subsequent works extended the idea to more complex settings, such as linear MDPs \citep{wang2020reward, zanette2020provably,wagenmaker2022reward,hu2022towards} and linear mixture MDPs \citep{zhang2021reward, chen2022nearoptimal, zhang2023optimal}. \todoq{miss your own paper?} RFE diverges from classical RL approaches by not relying on a specific reward function for exploration. Instead, RFE utilizes an ``intrinsic reward'', a.k.a., pseudo-reward function, defined based on all previously explored samples. This encourages the agent to venture into unexplored states and actions. In particular, in the realm of deep RL where no structural assumptions are made, recent studies \citep{pathak2017curiosity,burda2018exploration,eysenbach2018diversity,lee2019efficient,pathak2019self,liu2021aps,liu2021behavior} \todoq{can you verify all these works are unsupervised RL?} have developed RFE (a.k.a., unsupervised RL) algorithms by employing various intrinsic reward functions, demonstrating promising performance in finding the near-optimal policy.

Despite the success of intrinsic reward functions in facilitating RFE, the design of these functions in prior studies could be further optimized. For example, \citet{kong2021online} defined an intrinsic reward based on the maximum difference between function pairs that show similarity in past data. This approach essentially treats each collected sample equally. It is a well-established principle in RL that in order to achieve optimal sample efficiency, different samples should be treated distinctively based on their importance. Notably, \citet{zhang2023optimal} utilized variance-dependent weights to address the heteroscedasticity observed in samples, thereby achieving optimal sample complexity in linear mixture MDPs. However, this approach calculates its intrinsic reward by nested iterative optimization, which hampers computational efficiency and practical applicability. Therefore, for RFE or more generally unsupervised RL, we are faced with the following question:
\begin{center}
    Is it possible to craft an intrinsic reward function that excels both theoretically and empirically?
\end{center}

We answer the above question affirmatively by proposing a variance-adaptive intrinsic reward for RFE. Theoretically, we show that our method enjoys a finite sample complexity in finding the near-optimal policy for any given reward, and our theoretical guarantee is tighter than existing methods. Empirically, we show that by incorporating variance information, a series of existing reward-free RL baselines can be further improved in terms of sample efficiency. The main contributions of our work can be summarized as follows.
\begin{itemize}[leftmargin=*]
    \item We propose a new algorithm \alg~under the RFE framework. The key innovation of \alg~is a new intrinsic reward, which depends on an uncertainty estimation of each past state-action pair that appeared during the exploration phase. Our designed intrinsic reward relies more on observed samples with lower uncertainty, and it encourages the agent to explore states and actions with larger uncertainty. Intuitively speaking, such a strategy ensures the agent to find samples that are universally suitable for all reward functions that it may encounter during the planning phase, without knowing the extrinsic rewards in hindsight. 
    \item Theoretically, we prove that during the planning phase, given any reward function $r$, \alg~achieves an $\tilde{O} (H^2 \log N_{\cF} (\epsilon) \dim (\cF) / \epsilon^2 )$ sample complexity to find the $\epsilon$-optimal policy w.r.t. the reward $r$, where $\cF$ is the value function class with covering number $N_{\cF} (\epsilon)$ and generalized eluder dimension $\dim (\cF)$. Our sample complexity outperforms the existing sample complexity result achieved by \citet{kong2021online} (see Table \ref{tab:1}), which verifies our claim that an adaptive intrinsic reward improves exploration efficiency. 
    \item We also show that our variance-adaptive intrinsic reward can efficiently explore the environment in practice through extensive experiments on the DeepMind Control Suite \citep{tassa2018deepmind}. Our theory-guided algorithm \alg~ exhibits compatible or superior performance compared with the state-of-the-art unsupervised exploration methods. This promising result demonstrates the huge potential of incorporating the theories into practice to solve real-world problems.
\end{itemize}



\noindent\textbf{Notation}
We denote by $[n]$ the set $\{1,\cdots, n\}$. For two positive sequences $\{ 
 a_n\}$ and $\{ b_n \}$ with $n = 1,2,\cdots$, we write $a_n = O(b_n) $ if there exists an absolute constant $C>0$ such that $a_n \le C b_n$ holds for all $n\ge 1$, write $a_n = \Omega(b_n) $ if there exists an absolute constant $C>0$ such that $a_n \ge C b_n$ holds for all $n \ge 1$, and write $a_n = o(b_n)$ if $a_n/b_n \rightarrow 0$ as $n\rightarrow\infty$. We use $\tilde{O}(\cdot)$ and $\tilde{\Omega}(\cdot)$ to further hide the polylogarithmic factors.

\section{Related Work}
\newcolumntype{g}{>{\columncolor{LightCyan!40}}c}
\begin{table*}[!ht]
\centering
\caption{Comparison of episodic reward-free RL algorithms in different settings. Column \textbf{Comp. Eff.} (Computational Efficiency) indicates if the algorithm can be efficiently implemented, or can be segregated by some efficient algorithm. Column \textbf{Time Homo.} (Time Homogeneity) indicates if the setting is time homogeneous $\checkmark$ or time inhomogeneous $\times$. Time inhomogeneous settings usually yield an additional $H$ in sample complexity in learning different stages $h \in [H]$. The sample complexity is evaluated under the reward scale $r_h(s_h, a_h) \in [0, 1]$. The results with bounded total reward assumption ($\sum_{h=1}^H r_h(s_h, a_h) \le 1$) are translated by inserting an additional $H^2$ dependency on the reward scale and marked with $(^*)$. Dimension $d$ in general function approximations inherits their original definitions in the paper, and it usually corresponds to the dimension of linear function when reduced to linear function approximations. The row with a light cyan background indicates our results.}
\vspace{0.5em}
\label{tab:1}
\small{
\begin{tabular}{cgggg}
\toprule
\rowcolor{white}
Setting & Algorithm & \Gape[0pt][2pt]{\makecell[c]{Comp. Eff.}} & \Gape[0pt][2pt]{\makecell[c]{Time Homo.}} & Sample Complexity \\
\hline
\rowcolor{LightGray}
 \cellcolor{white} & \citet{wang2020reward}  &$\checkmark$& $\times$&$\widetilde{O}\left(H^6 d^3 \epsilon^{-2}\right)$ \\
\rowcolor{white}
 \cellcolor{white}\makecell[c]{\cellcolor{white}Linear \\ \cellcolor{white}MDP} & \Gape[0pt][2pt]{\makecell[c]{FRANCIS \\ \citep{zanette2020provably}}}  &$\checkmark$& $\times$&$\widetilde{O}\left(H^5 d^3 \epsilon^{-2}\right)$ \\
 \rowcolor{LightGray}
 \cellcolor{white} & \makecell[c]{RFLIN \\ \citep{wagenmaker2022reward}}  &$\checkmark$& $\times$ &$\widetilde{O}\left(H^5 d^2 \epsilon^{-2}\right)$ \\
 \rowcolor{white}
 & \Gape[0pt][2pt]{\makecell[c]{LSVI-RFE \\ \citet{hu2022towards}}} & $\checkmark$ & $\times$ & $\tilde \cO(H^4 d^2 \epsilon^{-2})$\\
\hline
\rowcolor{LightGray}
\cellcolor{white} \makecell[c]{\cellcolor{white} \\ \cellcolor{white} Linear} \vspace{-0.3em}& \Gape[0pt][2pt]{\makecell[c]{UCRL-RFE+ \\ \citep{zhang2021reward}} }&$\checkmark$& \checkmark &$\widetilde{O}\left(H^4d(H+d)\epsilon^{-2}\right)$ \\
\rowcolor{white}
 \cellcolor{white} Mixture & \citet{chen2022nearoptimal} &$\times$& $\times$ &  $\widetilde{O}\left(H^3d(H+d)\epsilon^{-2}\right)$ \\
\rowcolor{LightGray}
 \cellcolor{white}\Gape[0pt][2pt]{\makecell[c]{ \cellcolor{white} MDP\\ \cellcolor{white}}    } & \Gape[0pt][2pt]{\makecell[c]{HF-UCRL-RFE++ \\ \citep{zhang2023optimal}}}&$\times$& \checkmark &$\tilde{O}(H^2d^2\epsilon^{-2})^*$ \\
\hline
\rowcolor{white}
\cellcolor{white} & \citet{kong2021online} & $\checkmark$& $\times$ & $\tilde \cO(H^6d^4\epsilon^{-2})$\\
\rowcolor{LightGray}
\cellcolor{white} \Gape[0pt][2pt]{\makecell[c]{\cellcolor{white} General \\ \cellcolor{white}Function}} \vspace{-0.3em}& \Gape[0pt][2pt]{\makecell[c]{Reward-Free E2D \\ \citep{chen2022unified}}}  & $\times$ & $\times$ & $\tilde \cO(d\log |\cP|\epsilon^{-2})$\\
\rowcolor{white}
\cellcolor{white}\Gape[0pt][2pt]{\makecell[c]{\cellcolor{white} Approximation\\ \cellcolor{white}}} & \Gape[0pt][2pt]{\makecell[c]{RFOlive \\ \citep{chen2022statistical}}} & $\times$ & $\times$ & $\tilde \cO(\mathrm{poly}(H) d^2_{\text{BE}}\log(|\cF||\cR|)\epsilon^{-2})$\\
\cellcolor{white} & \alg~\textbf{(Ours)} & $\checkmark$ & $\times$ & $\tilde \cO(H^4d_{K, \delta}^2\epsilon^{-2})^*$\\
\hline
\rowcolor{white} 
\cellcolor{white}\makecell[c]{\cellcolor{white}Linear \\ \cellcolor{white}(Mixture) MDP} & \Gape[0pt][2pt]{\makecell[c]{Lower bound \\ \citep{hu2022towards}}} & N/A & $\times$ & $\tilde \Omega(H^3d^2\epsilon^{-2})$\\
\bottomrule
\end{tabular}
}
\vspace{-2em}
\end{table*}

\noindent\textbf{Reinforcement learning with general function approximation.}
RL with general function approximation has been widely studied in recent years, due to its ability to describe a wide range of existing RL algorithms. To explore the theoretical limits of RL and understand the practical DRL algorithms, various statistical complexity measurements for general function approximation have been proposed and developed. For instance, Bellman rank~\citep{jiang2017contextual}, Witness rank~\citep{sun2019model}, eluder dimension~\citep{russo2013eluder}, Bellman eluder dimension~\citep{jin2021bellman}, Decision-Estimation Coefficient (DEC)~\citep{foster2021statistical}, Admissible Bellman Characterization~\citep{chen2022general}, generalized eluder dimension \citep{agarwal2022vo}, etc. Among different statistical complexity measurements, \citet{foster2021statistical} showed a DEC-based lower bound of regret which holds for any function class. Specifically, our algorithm falls into the category of generalized eluder dimension function class, which includes linear MDPs \citep{jin2020provably} as its special realization.

\noindent\textbf{Reward-free exploration.}
Unlike standard RL settings where the agent interacts with the environment with reward signals, \emph{reward-free exploration}~\citep{jin2020reward} in RL introduced a two-phase paradigm. In this approach, the agent initially explores the environment without any reward signals. Then, upon receiving the reward functions, it outputs a policy that maximizes the cumulative reward, without any further interaction with the environment. \citet{jin2020reward} first achieved $\tilde{O}(H^5S^2A/\epsilon^2)$ sample complexity in tabular MDPs by executing exploratory policy visiting states with probability proportional to its maximum visitation probability under any possible policy. Subsequent works~\citep{kaufmann2021adaptive, menard2021fast} proposed algorithms RF-UCRL and RF-Express to gradually improve the result to  $\tilde{O}\left(H^3S^2A\epsilon^{-2}\right)$. The optimal sample complexity bound $\widetilde{O}(H^2S^2A\epsilon^{-2})$ was achieved by algorithm SSTP proposed in \citet{zhang2020nearly}, which matched the lower bound provided in \citet{jin2020reward} up to logarithmic factors. Recent years have witnessed a trend of reward-free exploration in RL with function approximations, while most of these works are considering linear function approximation: in the linear MDP setting,~\citet{wang2020reward} propose an exploration-driven reward function and the minimax optimal bound was achieved by~\citet{hu2022towards} by introducing the weighted regression in the algorithm. In linear mixture MDPs,~\citet{zhang2021model} proposed the `pseudo reward' to encourage exploration,~\citet{chen2022nearoptimal, wagenmaker2022reward} improved the sample complexity by introducing a more complicated, recursively defined pseudo reward. The minimax optimal sample complexity, $\tilde \cO(d^2 / \epsilon^2)$ was achieved by~\citet{zhang2023optimal} in the horizon-free setting. Moving forward, in the general function approximation setting,~\citet{kong2021online} used `online sensitivity score' to estimate the information gain thus providing a $\tilde \cO(d^4H^6\epsilon^{-2})$ sample complexity where $d$ is the dimension of contexts when reduced to linear function approximations. Yet another line of works~\citep{chen2022unified, chen2022statistical} aimed to follow the Decision-Estimation Coefficient (DEC, \citealt{foster2021statistical}) and provided a unified framework for reward-free exploration with general function approximations, achieving a $\tilde \cO(\text{ploy}(H)d^2\epsilon^{-2})$, nevertheless, all existing works with general function approximations leave a huge gap between their proposed upper bound and lower bound, even when reduced to linear settings. We record existing results in Table~\ref{tab:1}.

\noindent\textbf{Unsupervised reinforcement learning.}
Witnessing recent advancements in unsupervised CV and NLP tasks, unsupervised reinforcement learning has emerged as a new paradigm trying to learn the environment without supervision or reward signals. As suggested in~\citet{laskin2021urlb}, these works are mainly separated into two lines: unsupervised representation learning in RL and unsupervised behavioral learning.

Unsupervised representation learning in RL mainly addresses issues on how to learn good representations for different states $s$, which can facilitate efficient learning of a policy $\pi(a | s)$. From the theoretical side, a list of works have identified how to select or learn good representations for various RL tasks with linear function approximations, by using MLE~\citep{uehara2021representation}, contrastive learning~\citep{qiu2022contrastive} or model selection~\citep{papini2021reinforcement, zhang2021provably}. From the empirical side, various methods in unsupervised learning or self-supervised learning are applied to RL tasks, including contrastive learning~\citep{laskin2020curl,stooke2021decoupling,yarats2021reinforcement}, autoencoders~\citep{yarats2021improving} and world models~\citep{hafner2019dream,hafner2019learning}.

Unsupervised behavioral learning in RL aims to eliminate this reward signal during exploration. Therefore, the agent can be adapted to different tasks in the downstream fine-tuning. To replace the `extrinsic' reward signals, these methods usually leverage different `intrinsic rewards' during exploration. Many recent algorithms have been proposed to learn from different types of intrinsic reward, which is based on the prediction, information gain or entropy. In particular, \citet{pathak2017curiosity,burda2018large,pathak2019self} are referred to as ``knowledge based'' intrinsic reward~\citep{laskin2021urlb} and they all maintain a neural network $g(s_t, a_t)$ to predict the next state $s_{t+1}$ from the current state and actions. Among these three methods, \citet{pathak2017curiosity} and \citet{burda2018large} are using the prediction error $|s_{t+1} - g(s_t, a_t)|$ as the intrinsic reward, \citet{pathak2019self} is using the variance of $N$ ensemble neural networks (i.e., $\text{Var}~g_i(s_t, a_t)$) as the intrinsic reward. On the other hand, \citet{lee2019efficient, eysenbach2018diversity, liu2021aps} are trying to maximize the mutual information to complete the exploration of the agent, thus they are referred to as the ``complete-based'' algorithms. In addition, \citet{liu2021behavior} is trying to maximize the entropy of the collected observations via a kNN method, which is referred to as the “data-based” algorithm. URLB~\citep{laskin2021urlb} provided a unified framework providing benchmarks for all these intrinsic rewards.  
\section{Problem Setup}
\subsection{Time-Inhomogeneous Episodic MDPs}

We model the sequential decision making problem via time-inhomogeneous episodic Markov decision processes (MDPs), which can be denoted as tuple $\cM = (\cS,\cA, H, \PP = \{ \PP_h\}_{h=1}^H, r=\{r_h\}_{h=1}^H)$ by convention. Here, $\cS$ and $\cA$ are state and action spaces, $H$ is the length of each episode, $\PP_h: \cS \times \cA \times \cS \to [0, 1]$ is the transition probability function at stage $h$ for state $s$ to transit to state $s'$ after executing action $a$, and $r_h: \cS \times \cA \to [0, 1]$ is the deterministic reward function at stage $h$. For any policy $\pi=\{\pi_h\}_{h=1}^{H}$, reward $r=\{r_h\}_{h=1}^H$, and stage $h\in[H]$, the value function $V_h^{\pi}(s;r)$ and the state-action value function $Q_h^{\pi}(s,a;r)$ is defined as:
\begin{align}
Q^{\pi}_h(s,a;r) &= \EE\bigg[ \sum_{h'=h}^H r_{h'}\big(s_{h'}, 
a_{h'}\big)\bigg|s_h=s,a_h=a,\notag \\ 
&  s_{h'+1}\sim \PP_{h'}(\cdot|s_{h'},a_{h'}),a_{h'+1}=\pi(s_{h'+1})\bigg], \notag\\
V_h^{\pi}(s;r) &= Q_h^{\pi}(s, \pi_h(s);r).\notag
\end{align}
Furthermore, the optimal value function $V_h^*(s;r) $ is defines as $ \max_{\pi}V_h^{\pi}(s;r)$, and the optimal action-value function $Q_h^*(s,a;r) $ is defined as $\max_{\pi}Q_h^{\pi}(s,a;r)$. For simplicity, we utilize the following bounded total reward assumption:
\begin{assumption}\label{assumption:bounded_reward}
    The total reward for every possible trajectory is assumed to be within the interval of $(0,1)$. 
\end{assumption}
Up to rescaling, Assumption~\ref{assumption:bounded_reward} is more general than the standard reward scale assumption where $r_{h} \in [0,1]$ for all $h\in [H]$. Assumption~\ref{assumption:bounded_reward} also ensures that the value function $V^\pi_h(s)$ and action-value function $Q_h^{\pi}(s,a;r)$ belong to the interval $[0,1]$. 

For any function $V: \cS \to \RR$ and stage $h \in [H]$, the first-order Bellman operator $\cT_h$ is defined as:
\begin{align*}
     \cT_h V(s, a;r) & = \EE_{ s'\sim \PP(\cdot |s,a) }\Big[r_h(s,a) + V(s';r)\Big].
\end{align*}
For simplicity, we further define the shorthand:
\begin{align*}
    [\PP_h V](s,a;r) &= \EE_{s' \sim \PP_h(\cdot|s,a)}V(s';r), \\
    [\VV_h V](s,a;r) &= [\PP_h V^2 ](s,a;r) - [\PP_h V]^2(s,a;r).
\end{align*}
Throughout the paper, if the reward $r$ is clear in the context, we omit the notation $r$ in $Q$ and $V$ for simplicity.

\subsection{Reward-free Exploration}
\noindent\textbf{Reward-free RL.} In reward-free RL, the real reward function is accessible only after the agent finishes the interactions with the environment. Specifically, the algorithm can be separated into two phases: (i) \textit{Exploration phase}: the algorithm can't access the reward function but collects $K$ episodes of samples by interacting with the environment.  (ii) \textit{Planning phase}: The algorithm is given reward function $\{r_h\}_{h=1}^H$ and is expected to find the optimal policy without interaction with the environment.

To deal with the randomness in learning processes and evaluate the efficiency of algorithms, we adopt the commonly used $(\epsilon, \delta)$-learnability concept, which is formulated in Definition~\ref{def:learnability}.
\begin{definition}\label{def:learnability}
 ($(\epsilon, \delta)$-learnability). Given an MDP transition kernel set $\mathcal{P}$, reward function set $\mathcal{R}$ and a initial state distribution $\mu$, we say a reward-free algorithm can $(\epsilon, \delta)$-learn the problem $(\mathcal{P}, \mathcal{R})$ with sample complexity $K(\epsilon, \delta)$, if for any transition kernel $P \in \mathcal{P}$, after receiving $K(\epsilon, \delta)$ episodes in the exploration phase, for any reward function $r \in \mathcal{R}$, the algorithm returns a policy $\pi$ in planning phase, such that with probability at least $1-\delta,~V_1^*\left(s_1 ; r\right)-V_1^\pi\left(s_1 ; r\right) \leq \epsilon$.
\end{definition}

\subsection{General Function Approximation}
In this work, we focus on the model-free value-based RL methods, which require us to use a predefined function class to estimate the optimal value function $Q^*_h(s,a; r)$ for any reward $r$. We use $\cF := \{\cF_h\}_{h = 1}^H $ to denote the function class we will use during all $H$ stages. To build the statistical complexity of using $\cF$ to learn $Q^*_h(s,a; r)$, we require several assumptions and definitions that characterize the cardinality of the function class.

\begin{assumption}[Completeness, \citet{zhao2023nearly}] \label{assumption:complete}
   Given $\cF := \{\cF_h\}_{h = 1}^H $ which is composed of bounded functions $f_h: \cS \times \cA \to [0, L]$. We assume that for any $h$ and function $V: \cS \to [0, 1]$ and $r: \cS \times \cA\to [0,1]$, there exist $f_1, f_2 \in \cF_h$ such that for any $(s, a) \in \cS \times \cA$, \begin{align*} 
    f_1(s, a) &= \EE_{s' \sim \PP_h(\cdot|s, a)}\big[r(s, a) + V(s')\big] , \\ 
    f_2(s, a) &= \EE_{s' \sim \PP_h(\cdot|s, a)}\Big[\big(r(s, a) + V(s')\big)^2\Big].
   \end{align*}
   We assume that $L = O(1)$ throughout the paper. 
\end{assumption}

\begin{definition}[Generalized eluder dimension, \citealt{agarwal2022vo}] \label{def:ged}
   Let $\lambda \ge 0$ and $h\in[H]$, a sequence of state-action pairs $Z_h = \{z_{i,h} = (s_{h}^i,a_{h}^i)\}_{i \in [K]}$ and a sequence of positive numbers $\bsigma_h = \{\sigma_{i,h}\}_{i \in [K]}$. The generalized eluder dimension of a function class $\cF_h: \cS \times \cA \to [0, L]$ with respect to $\lambda$ is defined by $\dim_{\alpha, K}(\cF_h) := \sup_{Z_h, \bsigma_h:|Z_h| = K, \bsigma_h \ge \alpha} \dim(\cF_h, Z_h, \bsigma_h)$ 
   \begin{align*} 
       \dim&(\cF_h,  Z_h, \bsigma_h) := \\
       & \sum_{i=1}^K \min \bigg(1, \frac{1}{\sigma_i^2} D_{\cF_h}^2(z_{i,h}; z_{[i - 1],h}, \sigma_{[i - 1],h})\bigg), \\
       D_{\cF_h}^2&(z;  z_{[i - 1],h}, \sigma_{[i - 1],h}) := \\
        & \sup_{f_1, f_2 \in \cF_h} \frac{(f_1(z) - f_2(z))^2}{\sum_{s \in [i - 1]} \frac{1}{\sigma_{s,h}^2}(f_1(z_{s,h}) - f_2(z_{s,h}))^2 + \lambda}.
   \end{align*}
   We write $\dim_{\alpha, K}(\cF) := H^{-1} \cdot \sum_{h \in [H]}\dim_{\alpha, K}(\cF_h)$ for short when $\cF$ is a collection of function classes $\cF = \{\cF_h\}_{h = 1}^H$ in the context.
\end{definition}
\begin{remark}
    \citet{kong2021online} introduced a similar definition called ``sensitivity". In particular, it is defined by
    \begin{align*}
        &\mathrm{sensitivity}_{\cZ, \cF}(z) :=\\
        &\qquad\sup_{f_1, f_2 \in \cF} \frac{(f_1(z) - f_2(z))^2}{\min\{\sum_{(s, a) \in \cZ}(f_1(s, a) - f_2(s, a))^2, \lambda\}},
    \end{align*}
    where $\lambda$ is defined by $T(H+1)^2$ for the RL task with $r_h(s, a) \in [0, 1]$ \footnote{We ignore the clipping process making $\mathrm{sensitivity}_{\cZ, \cF}(z) \leftarrow \min\{\mathrm{sensitivity}_{\cZ, \cF}(z)\}$ for the clarity of demonstration}. The major difference between the generalized eluder dimension and sensitivity is that the generalized eluder dimension incorporates the variance $\sigma_s^2$ into the historical observation $\cZ$ to craft the heterogeneous variance in $\cZ$. 
\end{remark}

Since $D_{\cF_h}^2$ in Definition~\ref{def:ged} is not computationally efficient in some circumstances, we approximate it via an oracle $\overline D_{\cF_h}^2$, which is formally defined in Definition~\ref{def:bonus-oracle}.
\begin{definition}[Bonus oracle $\overline D_{\cF_h}^2$] \label{def:bonus-oracle}
The bonus oracle returns a computable function $\overline D_{\cF_h}^2(z; z_{[t],h}, \sigma_{[t],h})$, which computes the estimated uncertainty of a state-action pair $z = (s, a) \in \cS \times \cA$ with respect to historical data $z_{[t],h}$ and corresponding weights $\sigma_{[t],h}$. It satisfies  
\begin{align*} 
D_{\cF_h}(z; z_{[t],h}, \sigma_{[t],h}) & \le \overline D_{\cF_h}(z; z_{[t],h}, \sigma_{[t],h}) \\ 
& \le C \cdot D_{\cF_h}(z; z_{[t],h}, \sigma_{[t],h}),
\end{align*} 
where $C$ is a fixed constant. 
\end{definition}

The covering numbers of the value function class and the bonus function class are introduced in the following definition.
\begin{definition}[Covering numbers of function classes] \label{def:oracle}
   For any $\epsilon > 0$, we define the following covering numbers of involved function classes: 
   \begin{enumerate}[leftmargin=*,nosep]
       \item For each $h \in [H]$, there exists an $\epsilon$-cover $\cC(\cF_h, \epsilon) \subseteq \cF_h$ with size $|\cC(\cF_h, \epsilon)| \le N_{\cF_h} (\epsilon)$, such that for any $f \in \cF_h$, there exists $f' \in \cC(\cF_h, \epsilon)$ satisfying $\|f - f'\|_\infty \le \epsilon$. For any $\epsilon > 0$, we define the uniform covering number of $\cF$ with respect to $\epsilon$ as $N_\cF(\epsilon) := \max_{h \in [H]} N_{\cF_h}(\epsilon)$. 
       \item There exists a bonus function class $\cB=\{B : \cS \times \cA \to \RR\}$ such that for any $t \ge 0$, $z_{[t]} \in (\cS \times \cA)^t$, $\sigma_{[t]} \in \RR^t$, $h \in [H]$, the bonus function $\overline D_{\cF}(\cdot; z_{[t]}, \sigma_{[t]})$ returned by the bonus oracle in Definition \ref{def:bonus-oracle}  belongs to $\cB$. 
       \item For the bonus function class $\cB$, there exists an $\epsilon$-cover $\cC(\cB, \epsilon) \subseteq \cB$ with size $|\cC(\cB, \epsilon)| \le N_{\cB}(\epsilon)$, such that for any $b \in \cB$, there exists $b' \in \cC(\cB, \epsilon)$, such that $\|b - b'\|_\infty \le \epsilon$.
       \item The optimistic function class at stage $h \in [H]$ is:
\begin{small}
    \begin{align*}
            \cV_{h}  = \Big\{V(\cdot)  &= \max_{a \in \cA}  \min \Big(1, f(\cdot, a)  \\
            &\qquad + \beta \cdot b(\cdot, a)\Big) \bigg| f \in \cF_h, b \in \cB \Big\}.
        \end{align*}
\end{small}       
        There exists an $\epsilon$-cover $\cC(\cV_h, \epsilon)$ with size $ |\cC(\cV_h, \epsilon)| \le N_{\cV_h}(\epsilon) $.  For any $\epsilon > 0$, we define the uniform covering number of $\cV$ with respect to $\epsilon$ as $N_\cV(\epsilon) := \max_{h \in [H]} N_{\cV_h}(\epsilon)$. 
   \end{enumerate}
\end{definition}

\section{Algorithm}
In this section, we introduce our algorithm \alg~as presented in Algorithm~\ref{alg:main}. \alg~consists of two phases, where in the first exploration phase, \alg~collects $K$ episodes without reward signal. Then in the second planning phase, \alg~leverages the collected $K$ episodes to learn a policy trying to maximize the cumulative reward given a specific reward function $r$. The details of these two phases are presented in the following subsections.

\subsection{Exploration Phase: Efficient Exploration via Uncertainty-aware Intrinsic Reward}
The ultimate goals of the exploration phase are exploring environments and collecting data in the absence of reward to facilitate finding the near-optimal policy in the next phase. At a high level, \alg~achieves these goals by encouraging the agent to explore regions containing higher uncertainty, which intuitively guarantees the maximal information gained in each episode. 

\noindent\textbf{Intrinsic reward.} \alg~ evaluate the uncertainty by $D_{\cF_h}$ in Definition~\ref{def:ged}, and uses its oracle $\overline D_{\cF_h}$ as the intrinsic reward $r_{k,h} $ in Line~\ref{ln:intrinsic_reward} to generate an uncertainty-target policy in Line~\ref{ln:exploration_policy}. Recall that $D_{\cF_h}^2(z;  z_{[k - 1],h}, \sigma_{[k - 1],h})$ is defined as
\begin{align*}
        \sup_{f_1, f_2 \in \cF_h} \frac{(f_1(z) - f_2(z))^2}{\sum_{s \in [i - 1]} \frac{1}{\sigma_{s,h}^2}(f_1(z_{s,h}) - f_2(z_{s,h}))^2 + \lambda}.
\end{align*}
In particular, a high reward signal means that there exist functions in $\cF_h$ close to each other on all historical observations but divergent for the current state and action pair. This further suggests that the past observations are not enough for the agent to make a precise value estimation for the current state-action pair.

\noindent\textbf{Weighted regression.} The usage of the intrinsic reward $r_{k,h}$ induces an intrinsic action-value function $Q_{k,h}^*(\cdot, \cdot; r_k)$, which serves as a metric for cumulative uncertainty of remaining stages. As in model-free approaches, \alg~aims to estimate $Q_{k,h}^*(\cdot, \cdot; r_k)$ and further finds a policy $\pi_h^k$ that would maximize the cumulative uncertainty over $H$ stages. This part is presented in Algorithm~\ref{alg:main} through Line~\ref{ln:hat_f} to Line~\ref{ln:exploration_policy}.

To reduce the estimation error, \alg~ incorporates the weighted regression proposed in~\citet{zhao2023nearly} into estimating $Q_{k,h}^*(s,a; r_k)$. The algorithm starts at final stage $h = H$ and estimating the $Q_{k,h}^*(s,a; r_k)$ approximated by function $\hat f_{k,h}$ using Bellman equation:  
\begin{align*}
    \hat f_{k,h}(s_h, a_h) &= r_{k,h}(s_h, a_h) + [\PP_h V_{k,h+1}](s_h, a_h) \\
    &\approx r_{k,h}(s_h, a_h) + V_{k,h+1}(s_{h+1}).
\end{align*}
However, estimating $[\PP_h V_{k,h+1}](s_h, a_h)$ using $V_{k,h+1}(s_{h+1})$ may also introduce error since the variance of distribution $\PP_h (\cdot | s, a)$ varies among different state-action pair. Therefore, we tackle this heterogeneous variance issue by minimizing the Bellman residual loss weighted by using the estimated variance $\bar \sigma_{k,h}$ of observed state-action pairs $s_h^i, a_h^i$:
\begin{align*}
 \sum_{i \in [k-1]} \frac{(f_{i,h}(s_h^i, a_h^i) - r_{i,h}(s_h^i,a_h^i) - V_{i,h+1}(s_{h + 1}^i))^2}{\bar \sigma_{i, h}^2}.
\end{align*}
Obviously, a lower variance $\bar \sigma_{i, h}$ yields a larger weight during the regression. The calculation of variances $\bar \sigma_{i, h}$ involves both \textit{aleatoric uncertainty} and \textit{epistemic uncertainty} \citep{kendall2017uncertainties, mai2022sample}, where the \textit{aleatoric uncertainty} is $\sigma_{k,h}$ calculated in Line~\ref{ln:sigma} caused by indeterminism of the transition and \textit{epistemic uncertainty} is $\overline{D}^{1/2}_{\cF_h}$ caused by limited data. Such an approach can be proved to improve the sample efficiency of our algorithm \alg~ (see Theorem \ref{thm:main} and its discussion). Similar approaches have been used in~\citet{zhou2021nearly, ye2023corruption} to provide more robust and efficient estimation.

After obtaining the $\hat f_{k,h}$ function through weighted regression, \alg~follows the standard optimism design in online exploration methods to add the bonus term $b_{k,h}$ for overestimating the $Q_{k,h}^*(s,a; r)$ function in Line~\ref{ln:Q}. Using this optimistic estimation, \alg~thus takes the greedy policy and estimates the value function $V_{k,h}$ in Line~\ref{ln:V} before proceeding to the previous stage $h - 1$.


\subsection{Planning Phase: Effective Planning Using Weighted Regression}
After exploring environments and collecting data in the exploration phase, the agent is now given the reward for a specific task, but no longer interacts with the environment. \alg~enters its planning phase and ensures a policy to maximize the cumulative reward of $r_h$ across all $H$ stages. \alg~estimates $Q_h^*(s,a; r)$ by weighted regression and further finds the optimal policy $\pi_h$, which is the same process as in the exploration phase. This part is presented in Algorithm\ref{alg:main} through Line~\ref{ln:planning_bonus} to Line~\ref{ln:planniing_policy}.

\begin{remark}
    Compared with~\citet{kong2021online}, our algorithm leverages the advantage of generalized elude dimension and incorporates the estimated variance $\sigma$ into 1) weighted regression in Line~\ref{lnn:regression} in the planning phase and Line~\ref{ln:hat_f} in exploration phase; 2) intrinsic reward design in Line~\ref{ln:intrinsic_reward}. Also, our algorithm does not set the reward $r_{k, h} = b_{k, h} / H$ as of~\citet{kong2021online, wang2020reward}, thus the agent can explore more aggressively and more efficiently using the knowledge of variance of the observation. Therefore, \alg~is more sample efficient compared with~\citet{kong2021online}, which is discussed in detail in Remark~\ref{rm:kong}.
\end{remark}

\begin{algorithm}[!h]
\caption{\alg}\label{alg:main}
\label{alg:exp}
\begin{algorithmic}[1]
\REQUIRE Confidence radius $\beta^E$
\REQUIRE Regularization parameter $\lambda$
\STATE \textbf{Phase I: Exploration Phase}
\FOR {$k=1,2,\cdots,K$}
\FOR {$h=H,H-1,\cdots,1$}
\STATE $b_{k,h}(\cdot,\cdot) \gets 2 {\beta}^E \cdot \overline{\cD}_{\cF_h}(\cdot,\cdot;z_{[k-1],h},\bar{\sigma}_{[k-1],h}).$ 
\STATE $r_{k,h}(\cdot,\cdot) \gets b_{k,h}(\cdot.\cdot) / 2$. \label{ln:intrinsic_reward}
\STATE $\hat f_{k, h} \gets  \argmin_{f_h \in \cF_h} \sum_{i \in [k - 1]} \frac{1}{\bar \sigma_{i, h}^2} (f_h(s_h^i, a_h^i) - r_{k,h}(s_h^i,a_h^i) - V_{k,h+1}(s_{h + 1}^i))^2$. \label{ln:hat_f} 
\STATE ${Q}_{k,h}(s,a) \gets \min\Big\{\hat f_{k, h}(s, a)+b_{k, h}(s, a), 1\Big\}$. \label{ln:Q}
\STATE ${V}_{k,h}(s) \gets \max_{a} {Q}_{k,h}(s,a)$. \label{ln:V}
\STATE Set the policy $\pi_h^{k}(\cdot) \gets \argmax_{a \in \cA} Q_{k,h}(\cdot,a)$.  \label{ln:exploration_policy}
\ENDFOR
\STATE Receive the initial state $s_1^k$.
\FOR{stage $h=1,\ldots,H$}
\STATE Take action $a_h^k\leftarrow \pi_h^k(s_h^k)$, receive next state $s_{h+1}^k$.
\STATE $\textstyle{\sigma_{k,h} \leftarrow 2 \sqrt{\log N_\cV(\epsilon) \cdot \min\{\hat f_{k,h} (s_h^k,a_h^k),1  \}}}$.\label{ln:sigma}
\STATE $\bar\sigma_{k,h}\leftarrow \max\big\{\gamma \cdot \overline{D}^{1/2}_{\cF_h} (z_{k,h}; z_{[k - 1],h}, \bar\sigma_{[k - 1],h}), $ $\sigma_{k,h}, \alpha \big\}$. \label{ln:hat_sigma}
\label{algorithm:line2}
\ENDFOR
\ENDFOR 
\STATE \textbf{Phase II: Planning Phase}
\REQUIRE Dataset $\{(s_h^k, a_h^k, \bar \sigma_{k, h}^2)  \}_{(k,h)\in [K] \times [H]}$
\REQUIRE Confidence radius $\beta^P$ 
\REQUIRE Reward function $r=\{ r_h \}_{h \in [H]}$
\STATE Initiate $\hat{V}_{H+1}(\cdot) \gets 0$, $\hat{Q}_{H+1}(\cdot,\cdot) \gets 0 $
\FOR{step $h=H,\cdots,1$}
\STATE $b_h (\cdot,\cdot) \gets \min \{ {\beta}^P  \overline{\cD}_{\cF_h}(z;z_{[K],h},\bar{\sigma}_{[K],h}), 1\}$. \label{ln:planning_bonus}
\STATE $\hat{f}_{h} \gets \argmin_{f_h \in \cF_h} \sum_{i \in [K]} \frac{1}{\bar \sigma_{i, h}^2} (f_h(s_h^i, a_h^i) - r_{h}(s_h^i,a_h^i) - \hat{V}_{h+1}(s_{h + 1}^i))^2$.\label{lnn:regression}
\STATE $\hat{Q}_{h}(s,a) \gets \min\Big\{\hat f_{h}(s, a)+b_{ h}(s, a) ,1\Big\}$. \label{ln:hat_Q} 
\STATE $\hat{V}_h(\cdot) \leftarrow \max_{a\in\cA} \hat{Q}_h(\cdot,a)$. \label{ln:hat_V}
\STATE $\pi_h (\cdot) \leftarrow \argmax_{a\in\cA} \hat{Q}_h(\cdot,a)$. \label{ln:planniing_policy}
\ENDFOR
\ENSURE Policy $\pi$
\end{algorithmic}
\end{algorithm}

\section{Theoretical Results}\label{sec:results}
We analyze \alg\ theoretically in this section. The uncertainty-aware reward-free exploration mechanism leads to efficient learning with provable sample complexity guarantees. The first theorem characterizes how the sub-optimality decays as exploration time grows.
\begin{theorem} \label{thm:main}
    For \alg, set confidence radius $\beta^E = \tilde{O} \big(\sqrt{H \log N_{\cV}(\epsilon)}\big)$ and $\beta^P = \tilde{O} \big(\sqrt{H \log N_{\cF}(\epsilon)}\big)$, and take $\alpha = 1 / \sqrt{H}$ and $\gamma = \sqrt{\log N_{\cV}(\epsilon)}$. Then, for any $\delta \in (0,1)$, with probability at least $1-\delta$, after collecting $K$ episodes of samples, for any reward function $r=\{ r_h \}_{h=1}^H$ such that $\sum_{h=1}^H r_h(s_h, a_h) \le 1$, \alg~outputs a policy $\pi$ satisfying the following sub-optimality bound,
    \begin{align*}
     &\EE_{s_1 \sim \mu} [ V_1^*(s_1;r) - V_1^{\pi}(s_1;r)]\\
     &= \tilde{O} \Big( H \sqrt{ \log N_{\cF} (\epsilon) } \sqrt{\dim_{\alpha,K} (\cF) / K}  \Big).
    \end{align*}
\end{theorem}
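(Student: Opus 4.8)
The plan is to follow the standard optimism-based template for reward-free RL, but threaded through the variance-weighted regression so that the final bound inherits the tight horizon dependence. First I would build a ``good event'' on which all regression estimates are trustworthy. Using the completeness Assumption~\ref{assumption:complete}, the Bellman target $[\cT_h \hat V_{h+1}]$ lies in $\cF_h$, so the weighted least-squares solutions $\hat f_{k,h}$ of Line~\ref{ln:hat_f} and $\hat f_h$ of Line~\ref{lnn:regression} should concentrate around it. Via a Bernstein-type self-normalized concentration inequality, uniformized over the covers $\cC(\cF_h,\epsilon)$, $\cC(\cB,\epsilon)$ and $\cC(\cV_h,\epsilon)$ of Definition~\ref{def:oracle}, I expect to show that on an event of probability at least $1-\delta$,
\begin{align*}
\big|\hat f_{h}(s,a) - [\cT_h \hat V_{h+1}](s,a)\big| \le \beta \cdot D_{\cF_h}(z;z_{[\cdot],h},\bar\sigma_{[\cdot],h}) \le \beta\cdot \overline{D}_{\cF_h}(z;z_{[\cdot],h},\bar\sigma_{[\cdot],h}),
\end{align*}
with $\beta\in\{\beta^E,\beta^P\}$ as specified; the two bonus-oracle inequalities of Definition~\ref{def:bonus-oracle} convert the statistically correct width $D_{\cF_h}$ into the computable $\overline{D}_{\cF_h}$ that the algorithm actually uses.

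On this event, optimism follows by backward induction: since $b_h$ upper-bounds the regression error, $\hat Q_h(\cdot,\cdot;r)\ge Q_h^*(\cdot,\cdot;r)$ and hence $\hat V_1(s_1)\ge V_1^*(s_1;r)$, and symmetrically the exploration value $V_{k,1}$ over-estimates the best achievable cumulative intrinsic reward. A standard performance-difference/telescoping argument then bounds the suboptimality by the planning bonuses collected along the returned policy,
\begin{align*}
V_1^*(s_1;r) - V_1^\pi(s_1;r) \le \hat V_1(s_1) - V_1^\pi(s_1;r) \le 2\,\EE_{\pi}\Big[\textstyle\sum_{h=1}^H b_h(s_h,a_h)\Big].
\end{align*}

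The reward-free crux is to bound $\EE_\pi[\sum_h b_h]$ uniformly over $\pi$ by a quantity that decays in $K$. Because the exploration policy $\pi^k$ is greedy for the intrinsic reward $r_{k,h}=b_{k,h}/2$, optimism gives $\EE_{\pi}[\sum_h b_{k,h}]\le V_{k,1}(s_1^k)$ for \emph{every} $\pi$; and since the planning bonus is built from all $K$ episodes it is no larger than the earlier exploration bonuses, so $\max_\pi\EE_\pi[\sum_h b_h]\lesssim \tfrac1K\sum_{k=1}^K V_{k,1}(s_1^k)$ up to lower-order concentration terms. It then remains to bound the cumulative exploration uncertainty $\sum_{k=1}^K V_{k,1}(s_1^k)$. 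Telescoping its Bellman recursion reduces this to the on-trajectory sum $\sum_{k,h}\overline{D}_{\cF_h}(z_{k,h};z_{[k-1],h},\bar\sigma_{[k-1],h})$, which I would split by Cauchy--Schwarz as $\sqrt{\sum_{k,h}\bar\sigma_{k,h}^2}\cdot\sqrt{\sum_{k,h}\min(1,\bar\sigma_{k,h}^{-2}\overline{D}^2_{\cF_h})}$; the second factor is exactly $\sqrt{H\,\dim_{\alpha,K}(\cF)}$ by Definition~\ref{def:ged} together with the bonus-oracle comparison.

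The decisive step --- and the one I expect to be hardest --- is controlling the first factor $\sum_{k,h}\bar\sigma_{k,h}^2$. Naively this is $O(KH)$, which would cost an extra $\sqrt H$; the tight bound needs a law-of-total-variance argument showing that, under the bounded-total-reward Assumption~\ref{assumption:bounded_reward}, the aleatoric part $\sum_h\sigma_{k,h}^2$ (Line~\ref{ln:sigma}) telescopes to $\tilde O(1)$ per episode, with the choice $\alpha^2=1/H$ absorbing the floor, leaving $\sum_{k,h}\bar\sigma_{k,h}^2=\tilde O(K)$ plus a self-bounding epistemic term that folds back into $\dim_{\alpha,K}(\cF)$. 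The difficulty is that the weights $\bar\sigma_{k,h}$, the variance surrogate $\sigma_{k,h}$, and the regression output $\hat f_{k,h}$ are mutually dependent, so the concentration of the first step and the variance accounting here must be carried out jointly rather than sequentially. Assembling the pieces gives $\sum_k V_{k,1}\lesssim \beta^E\sqrt{K}\cdot\sqrt{H\dim_{\alpha,K}(\cF)}=\tilde O\big(H\sqrt{\log N_\cF(\epsilon)}\sqrt{K\,\dim_{\alpha,K}(\cF)}\big)$, and dividing by $K$ yields the claimed $\tilde O\big(H\sqrt{\log N_\cF(\epsilon)}\sqrt{\dim_{\alpha,K}(\cF)/K}\big)$ rate.
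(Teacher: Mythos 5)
Your proposal reproduces the paper's overall architecture — concentration events for the weighted regressions, optimism in both phases, telescoping the suboptimality into the planning bonuses, dominating those bonuses by the exploration intrinsic rewards via monotonicity of $\overline{D}_{\cF_h}$ in the data, and the bonus-sum/variance-sum machinery for $\sum_k V_{k,1}(s_1^k)$ (cf.\ Lemmas~\ref{lm:value_bonus}, \ref{lm:optimism-exp}, \ref{lm:value-sum}, \ref{lemma:bonus-sum}, \ref{lm:sigma_sum}). However, there is a genuine gap at the single most delicate point: the planning-phase radius. You assert that one Bernstein-type argument ``uniformized over the covers $\cC(\cF_h,\epsilon)$, $\cC(\cB,\epsilon)$, $\cC(\cV_h,\epsilon)$'' yields both $\beta^E=\tilde{O}\big(\sqrt{H\log N_{\cV}(\epsilon)}\big)$ and $\beta^P=\tilde{O}\big(\sqrt{H\log N_{\cF}(\epsilon)}\big)$. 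It cannot: the planning regression target $\cT_h\hat V_{h+1}$ depends on $\hat V_{h+1}\in\cV_{h+1}$, so union-bounding over $\cC(\cV_{h+1},\epsilon)$ — exactly as in the exploration phase (Lemma~\ref{lm:event-exp}) — forces a radius scaling with $\sqrt{H\log N_\cV}$, not $\sqrt{H\log N_\cF}$. The paper obtains the smaller $\beta^P$ in Lemma~\ref{lm:overline-event-plan} by splitting the noise as $\eta_h^k(\hat V_{h+1})=\eta_h^k(\hat V_{h+1}-V^*_{h+1})+\eta_h^k(V^*_{h+1})$: the second piece involves only the \emph{fixed} function $V^*_{h+1}$ (cover of $\cF_h$ alone, giving the $H\log N_\cF$ term), while the first piece is controlled by Freedman's inequality together with the indicator event $[\VV_h(\hat V_{h+1}-V^*_{h+1})](s_h^k,a_h^k)\le\eta^{-1}\bar\sigma_{k,h}^2$ with $\eta=\log N_\cV(\epsilon)$ (Lemma~\ref{lm:indicator}), which in turn rests on the two-sided sandwich $V^*_{h+1}\le\hat V_{h+1}\le V_{k,h+1}+V^*_{h+1}$ (Lemmas~\ref{lm:optimism-plan} and~\ref{lm:over-optimism}) and the inductive event argument of Lemma~\ref{lm:event-plan}; this is the ``truncated Bellman equation'' device of \citet{chen2022nearoptimal}. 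Your final assembly hides the resulting mismatch: $\beta^E\sqrt{KH\dim_{\alpha,K}(\cF)}$ is $\tilde{O}\big(H\sqrt{\log N_\cV}\sqrt{K\dim_{\alpha,K}(\cF)}\big)$, and the claimed $\sqrt{\log N_\cF}$ emerges only after multiplying by the ratio $\beta^P/\beta^E=\sqrt{\log N_\cF/\log N_\cV}$ in Lemma~\ref{lm:value_bonus}, which is legitimate only once $\beta^P$ has been justified at the smaller scale. Since $N_\cV(\epsilon)=N_{\cF}(\epsilon/2)\,N_\cB(\epsilon/2\beta)$ (Lemma~\ref{lemma:opt-cover}) and $N_\cB$ need not be comparable to $N_\cF$, what your argument actually proves is strictly weaker than the theorem's statement.

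A secondary but real flaw: the inequality $\EE_\pi\big[\sum_h b_{k,h}\big]\le V_{k,1}(s_1^k)$ ``for every $\pi$'' is false as written, because the exploration bonuses $b_{k,h}=2\beta^E\overline{D}_{\cF_h}$ are unclipped (their sum can far exceed $1$) while $V_{k,1}\le 1$ by the truncation in Line~\ref{ln:Q}. The correct statement — and the reason the paper introduces the truncated optimal values of Definition~\ref{def:truncated_optimal} — is $\EE_\pi\big[\min\{\sum_h r_{k,h},1\}\big]\le\tilde V_1^*(\cdot;r_k)\le V_{k,1}(\cdot)$ (Lemma~\ref{lm:optimism-exp}), so the telescoping must carry $\min\{\cdot,1\}$ inside at every stage, as the proof of Theorem~\ref{thm:main} does. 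Finally, the step you single out as hardest — showing $\sum_{k,h}\bar\sigma_{k,h}^2=\tilde{O}(K)$ plus lower-order terms — is comparatively routine in the paper: $\sigma_{k,h}^2\propto\log N_\cV\cdot\min\{\hat f_{k,h},1\}$, $\hat f_{k,h}$ is related on $\cE^E$ to $\PP_h V_{k,h+1}$ plus a bonus, and $\sum_{k,h}\PP_h V_{k,h+1}$ is self-bounded by bonus sums (Lemmas~\ref{lm:expection-sum} and~\ref{lm:sigma_sum}); no law-of-total-variance argument is needed because all value functions lie in $[0,1]$ under Assumption~\ref{assumption:bounded_reward}. The genuinely hard step is the one your proposal assumes away.
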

We are now ready to present the sample complexity of \alg~for the reward-free exploration.
\begin{corollary}\label{col:main}
    Under the same conditions in Theorem~\ref{thm:main}, with probability at least $1-\delta$, for any reward function $r=\{ r_h \}_{h=1}^H$ such that $\sum_{h=1}^H r_h(s_h, a_h) \le 1$, \alg~returns an $\epsilon$-optimal policy after collecting $K \le \tilde \cO(H^2 \log N_{\cF}(\epsilon) \dim_{\alpha,K}(\cF)\epsilon^{-2})$ episodes during the exploration phase.
\end{corollary}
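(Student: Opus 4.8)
The plan is to derive the corollary directly from the sub-optimality bound of Theorem~\ref{thm:main} by inverting it in $K$. The theorem guarantees that, with probability at least $1-\delta$, the returned policy $\pi$ satisfies
\[
\EE_{s_1 \sim \mu}[V_1^*(s_1;r) - V_1^\pi(s_1;r)] = \tilde O\big(H\sqrt{\log N_\cF(\epsilon)}\sqrt{\dim_{\alpha,K}(\cF)/K}\big).
\]
To certify $\epsilon$-optimality it therefore suffices to choose $K$ large enough that the right-hand side is at most $\epsilon$. Imposing $H\sqrt{\log N_\cF(\epsilon)}\sqrt{\dim_{\alpha,K}(\cF)/K} \le \epsilon$ and squaring both sides gives the requirement $K \ge H^2 \log N_\cF(\epsilon)\,\dim_{\alpha,K}(\cF)/\epsilon^2$, which up to the suppressed polylogarithmic factors is exactly the claimed bound. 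The failure probability is inherited verbatim from Theorem~\ref{thm:main}, so the $(\epsilon,\delta)$ guarantee of Definition~\ref{def:learnability} is met.

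The one genuine subtlety — and the step I would treat most carefully — is that the generalized eluder dimension $\dim_{\alpha,K}(\cF)$ appearing in the rate is itself a function of $K$, since Definition~\ref{def:ged} takes a supremum over sequences $Z_h$ of length $K$. Thus the inequality $K \ge H^2 \log N_\cF(\epsilon)\,\dim_{\alpha,K}(\cF)/\epsilon^2$ is implicit in $K$ rather than an explicit threshold, and one cannot naively "solve for $K$" as though the right-hand side were constant. To resolve this I would invoke the fact that $\dim_{\alpha,K}(\cF)$ grows only polylogarithmically in $K$ (e.g.\ in the linear realization it is $\tilde O(d)$, with the $\tilde O$ absorbing a $\log K$ factor). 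Because of this slow growth, the map $K \mapsto K/\dim_{\alpha,K}(\cF)$ is eventually increasing and unbounded, so a finite threshold $K^\star$ exists beyond which the implicit inequality holds, and $K^\star$ is itself at most $\tilde\cO(H^2 \log N_\cF(\epsilon)\,\dim_{\alpha,K}(\cF)/\epsilon^2)$ once the residual $K$-dependence is reabsorbed into $\tilde\cO$. This is precisely why the corollary is phrased as ``$K \le \tilde\cO(\cdots)$'': it asserts the existence of such a threshold rather than a closed form.

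Finally I would verify the bookkeeping on parameters: Theorem~\ref{thm:main} is stated under fixed choices of $\beta^E$, $\beta^P$, $\alpha = 1/\sqrt H$, and $\gamma = \sqrt{\log N_\cV(\epsilon)}$, and the corollary inherits ``the same conditions,'' so no further tuning is required. I would also note that $\epsilon$-optimality here is measured in expectation over the initial distribution $\mu$, matching the left-hand side of the theorem; this coincides with the pointwise form in Definition~\ref{def:learnability} when $\mu$ is a point mass and is the natural reading otherwise. Since everything after Theorem~\ref{thm:main} reduces to elementary algebra together with the monotonicity argument for the implicit dimension term, I expect no obstacle beyond the self-referential $K$-dependence described above.
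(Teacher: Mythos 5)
Your proposal is correct and follows essentially the same route as the paper's proof, which simply sets the sub-optimality bound of Theorem~\ref{thm:main} to be at most $\epsilon$ and solves for $K$, obtaining $K \ge H^2 \log N_{\cF}(\epsilon) \dim_{\alpha,K}(\cF)/\epsilon^2$. In fact, your treatment is more careful than the paper's: the paper silently ignores the self-referential dependence of $\dim_{\alpha,K}(\cF)$ on $K$, whereas you correctly identify it and resolve it via the polylogarithmic growth of the generalized eluder dimension in $K$.
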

\begin{remark}
Let $d_{K, \delta}$ be $\max\{\log N_{\cF}(\epsilon), \dim_{\alpha,K}(\cF)\}$, \alg~yields an $\tilde \cO(H^2 d^2_{K, \delta}\epsilon^{-2})$ sample complexity for reward-free exploration with high probability. In tabular setting, $d_{K, \delta} = \tilde \cO(SA)$, thus yields an $\tilde \cO(H^2S^2A^2\epsilon^{-2})$  sample complexity. In linear MDPs and generalized linear MDPs with dimension $d$, $d_{K, \delta} = \tilde \cO(d)$, thus yields an $\tilde \cO(H^2d^2\epsilon^{-2})$ sample complexity which matches the result from~\citet{hu2022towards}.
For a more general setting where the function class with eluder dimension $d$, $d_{K, \delta} = \tilde \cO(d)$, which yields a $\tilde \cO(H^2 d^2 \epsilon^{-2})$ sample complexity.
\end{remark}
For a fair comparison with some existing works, we translate our sample complexity result to the case where the reward scale is $r_h \in [0,1]$, $\forall h \in [H]$. The result can be trivially obtained by replacing $r \rightarrow r / H$ in \alg. 
\begin{corollary}
    With probability at least $1 - \delta$, for any reward function such that $r_h(s, a) \in [0, 1]$ or the total reward is bounded by $\sum_{h=1}^H r_h(s_h, a_h) \le H$,  \alg~returns an $\epsilon$-optimal policy after collecting $K \le \tilde \cO(H^4 d_{K, \delta}^2\epsilon^{-2})$ episodes in the exploration phase. 
\end{corollary}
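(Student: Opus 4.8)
The plan is to reduce the claim to \cref{col:main} by a reward-rescaling argument, exploiting the fact that the exploration phase of \alg~is entirely reward-agnostic. First I would note that both hypotheses in the statement — either $r_h(s,a)\in[0,1]$ for all $h$, or $\sum_{h=1}^H r_h(s_h,a_h)\le H$ — imply that the total reward along any trajectory is at most $H$. Defining the rescaled reward $r'_h := r_h/H$ then gives $\sum_{h=1}^H r'_h(s_h,a_h)\le 1$, so $r'$ satisfies the bounded-total-reward condition of \cref{assumption:bounded_reward}. Crucially, since the $K$ episodes collected in the exploration phase make no reference to the extrinsic reward, the same dataset $\{(s_h^k,a_h^k,\bar\sigma_{k,h}^2)\}$ can be fed into the planning phase run on $r'$; no additional interaction with the environment is required.

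Next I would track how the guarantee of \cref{col:main} transfers under this rescaling. Because the value function is linear in the reward, for every policy $\pi$ we have $V_1^{\pi}(s_1;r) = H\cdot V_1^{\pi}(s_1;r')$, and since scaling the reward by a positive constant does not change the optimal policy, also $V_1^{*}(s_1;r) = H\cdot V_1^{*}(s_1;r')$. Hence the sub-optimality gaps satisfy $V_1^*(s_1;r)-V_1^{\pi}(s_1;r) = H\cdot\big(V_1^*(s_1;r')-V_1^{\pi}(s_1;r')\big)$, so to certify an $\epsilon$-optimal policy for $r$ it suffices to run the planning phase on $r'$ targeting accuracy $\epsilon':=\epsilon/H$. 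I would also verify that the structural hypotheses remain intact: since $r'_h\in[0,1/H]\subseteq[0,1]$ and the induced value functions lie in $[0,1]$ (consistent with the clipping to $1$ in Line~\ref{ln:Q} and Line~\ref{ln:hat_Q}), the completeness requirement of \cref{assumption:complete} and the boundedness invoked throughout the analysis carry over verbatim.

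Finally I would substitute $\epsilon\to\epsilon/H$ into the bound of \cref{col:main}. This replaces the $\epsilon^{-2}$ factor by $(\epsilon/H)^{-2}=H^2\epsilon^{-2}$, multiplying the existing $H^2$ prefactor to produce $H^4$; collecting $\log N_{\cF}(\epsilon')$ and $\dim_{\alpha,K}(\cF)$ into $d_{K,\delta}^2$ then yields the claimed $K\le\tilde{\cO}(H^4 d_{K,\delta}^2\epsilon^{-2})$. The only point requiring a little care — and the closest thing to an obstacle — is that the covering-number term is now evaluated at the finer scale $\epsilon/H$ rather than $\epsilon$. For the function classes of interest, however, $\log N_{\cF}(\epsilon/H) = \log N_{\cF}(\epsilon) + O(\mathrm{polylog}(H))$, so this extra dependence is only polylogarithmic in $H$ and is absorbed into the $\tilde{\cO}(\cdot)$ notation; apart from that bookkeeping, the result is a direct rescaling of the previous corollary.
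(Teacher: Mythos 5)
Your proposal is correct and takes essentially the same route as the paper: the paper obtains this corollary precisely by the rescaling $r \to r/H$, applying \cref{col:main} to the rescaled reward (which satisfies $\sum_{h=1}^H r'_h \le 1$) with target accuracy $\epsilon/H$, so that $\epsilon^{-2} \to H^2\epsilon^{-2}$ produces the $H^4$ factor. Your extra bookkeeping — the linearity of value functions in the reward, the reward-agnostic reuse of the exploration data, and the covering number being evaluated at scale $\epsilon/H$ and absorbed into $\tilde{\cO}(\cdot)$ — simply makes explicit what the paper treats as a trivial substitution.
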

\begin{remark}
Compared with~\citet{chen2022unified} which provides a $\tilde \cO(d\log |\cP| \epsilon^{-2})$ sample complexity for model-based RL, \alg~is a \emph{model-free} algorithm which does not need to directly sample transition kernel $\PP_h(\cdot | \cdot, \cdot)$ from all possible transitions $\tilde \Delta(\Pi)$, therefore, \alg~is computationally efficient and can be easily implemented based on the current empirical DRL algorithms. 
\end{remark}
\begin{remark}
    Compared with~\citet{chen2022statistical} which achieves a $\tilde \cO(H^7d^3\epsilon^{-2})$ sample complexity, one can find our result significantly improves the dependency on $H, d$. \citet{chen2022statistical} didn't optimize the exploration policy by constructing intrinsic rewards but by updating Bellman error constraints on the value function class. It sacrificed the sample complexity to adapt the general function approximation settings. In addition, this approach is generally computationally intractable as it explicitly maintains feasible function classes. For its V-type variant, it even maintains a finite cover of the function class, which can be exponentially large.
\end{remark}
\begin{remark}\label{rm:kong}
    \citet{kong2021online} leveraged the ``sensitivity" as the intrinsic reward during the exploration and achieved a $\tilde \cO(H^6d^4\epsilon^{-2})$ reward-free sample complexity. Compare their algorithm and ours, ours improves a $H^2d^2$ factor from 1) using weighted regression to handle heterogeneous observations 2) using a ``truncated Bellman equation''~\citep{chen2022nearoptimal} in our analysis, and 3) a properly improved uncertainty metric $\overline D^2_{\cF_h}$ instead of the sensitivity. 
\end{remark}

\section{Experiments}
\begin{table*}[!t]
\centering
\caption{Cumulative reward for various exploration algorithms across different environments and tasks. The cumulative reward is averaged over $8$ individual runs for both online exploration and offline planning. The result for each individual run is obtained by evaluating the policy network using the last-iteration parameter. Standard deviation is calculated across these runs. Results presented in \textbf{boldface} denote the best performance for each task, and those \underline{underlined} represent the second-best outcomes. The cyan background highlights results of our algorithms.}
\resizebox{\textwidth}{!}{%
\begin{tabular}{c|c|c|c|c|c|c|c|c|g}
\toprule
\multirow{2}{*}{Environment} & \multirow{2}{*}{Task}  &  \multicolumn{7}{c|}{Baselines}  & \cellcolor{white} Ours \\
\cmidrule{3-9}
 & & ICM & APT &  DIAYN  & APS & Dis. & SMM & RND   & \cellcolor{white} \alg \\ 
\midrule
\multirow{4}{*}{Walker} & Flip & 177 $\pm$ 80 & 523 $\pm$ 57 & 207 $\pm$ 119 & 246 $\pm$ 103 & \textbf{570 $\pm$ 32} &242 $\pm$ 71 & 507 $\pm$ 48 & \underline{554 $\pm$ 64}\\
& Run & 108 $\pm$ 41 & 304 $\pm$ 38 & 113 $\pm$ 38 & 132 $\pm$ 39 & \textbf{340 $\pm$ 37} & 116 $\pm$ 21 & 306 $\pm$ 34   & \underline{339 $\pm$ 34}\\
& Stand & 466 $\pm$ 17& \underline{ 891 $\pm$ 62} & 587 $\pm$ 169 & 573 $\pm$ 177& 726 $\pm$ 79& 443 $\pm$ 104 & 750 $\pm$ 62 & \textbf{925 $\pm$ 50}\\
& Walk & 411$\pm$237 & 772$\pm$60  & 432 $\pm$ 222 & 645 $\pm$ 156 & \textbf{851 $\pm$ 63} &  273 $\pm$ 162 & 709 $\pm$ 115  & \underline{826 $\pm$ 89}\\
\midrule
\multirow{4}{*}{Quadruped} & Run & 93 $\pm$ 68 & 452 $\pm$ 49 & 158 $\pm$ 64 & 159 $\pm$ 82 & \textbf{524 $\pm$ 24} & 162 $\pm$ 140 & \underline{522 $\pm $ 30}   & \Gape[0pt][2pt] 460 $\pm$ 36\\
& Jump & 89 $\pm$ 47 & 740 $\pm$ 91 & 218 $\pm$ 114 & 123 $\pm$67 & \textbf{829 $\pm$ 22} &  211 $\pm$ 127 & \underline{790 $\pm$ 38} & 719 $\pm$ 68 \\
& Stand & 207 $\pm$ 134 & 910 $\pm$ 45 & 331 $\pm$ 81 & 308 $\pm$ 147 & \textbf{953 $\pm$ 16} & 239 $\pm$ 104 & \underline{940 $\pm$ 27}  & 867 $\pm$ 61\\
& Walk & 94 $\pm$ 60& 680 $\pm$ 117 & 171 $\pm$ 72 & 141 $\pm$ 80 & 720 $\pm$ 175& 125 $\pm$ 36 & \textbf{820 $\pm$ 94} & \underline{726 $\pm$ 146}\\
\bottomrule
\end{tabular}%
}
\vspace{-1em}
\label{tab:2}
\end{table*}
\subsection{Experiment Setup}
Based on our theoretical perceptive, we integrate our algorithm in the unsupervised reinforcement learning (URL) framework and evaluate the performance of the proposed algorithm in URL benchmark~\citep{laskin2021urlb}.\footnote{Our implementation can be accessed at GitHub via \url{https://github.com/uclaml/GFA-RFE}.} As suggested by~\citet{ye2023corruption}, we use the variance of $n$-ensembled $Q$ functions as the estimation of the bonus oracle $\overline D_{\cF}^2$ which will be used in (1) intrinsic reward $r_{k, h}$; (2) exploration bonus $b_{k,h}$; and (3) weights $\sigma_{k, h}^2$ for the value target regression. All these $Q$ networks are trained by Q-learning with different mini-batches in the replay buffer. Obviously, the variance of these $Q$ networks comes from the randomness of initialization and the randomness of different mini-batches used in training. The pseudo code for the practical algorithm is deferred to Appendix~\ref{app:experiment_details}.

The original implementation of~\citet{laskin2021urlb} involves two phases where the neural network is first \emph{pretrained} by interacting with the environment without receiving reward signals and then \emph{finetuned} by interacting with the environment again with reward signal. However, in our experiments, we strictly follow the design of reward-free exploration by first exploring the environment without the reward. The explored trajectories are collected into a dataset $\cD = \{(s, a, s')\}$. Then we call a reward oracle $r$ to assign rewards to this dataset $\cD$ and learn the optimal policy using the offline dataset $\cD_r = \{(s, a, s', r(s, a, s'))\}$ without interacting the dataset anymore. Intuitively speaking, this \emph{online exploration + offline planning} paradigm is more challenging than the \emph{online pretraining + online finetuning} and would be more practical, especially with different reward signals.

\noindent\textbf{Unsupervised reinforcement learning benchmarks.}
We conduct our experiments on \emph{Unsupervised Reinforcement Learning Benchmarks}~\citep{laskin2021urlb}, which consists of two multi-tasks environments (\emph{Walker}, \emph{Quadruped}) from DeepMind Control Suite~\citep{tunyasuvunakool2020}. Each environment is equipped with several reward functions and goals. For example, \emph{Walker-run} consists of rewards encouraging the walker to run at speed and \emph{Walker-stand} consists of rewards indicating the walker should stead steadily. We consider the state-based input in our experiments where the agent can directly observe the current state instead of image inputs (a.k.a. pixel-based).

\noindent\textbf{Baseline algorithms.} We inherit the baseline algorithms ICM~\citep{pathak2017curiosity}, Disagreement~\citep{pathak2019self}, RND~\citep{burda2018exploration}, APT~\citep{liu2021behavior}, DIAYN~\citep{eysenbach2018diversity}, APS~\citep{liu2021aps}, SMM~\citep{lee2019efficient}. All these algorithms provide different `intrinsic rewards' in place of ours during exploration. We make all these baseline algorithms align with our settings which first collect an exploration dataset and then do offline training on the collected dataset with rewards. 

\subsection{Experiment Results}
Experimental results are presented in Table~\ref{tab:2}. It's obvious that \alg~can efficiently explore the environment without the reward function and then output a near-optimal policy given various reward functions. For the baseline algorithms, APT, Disagreement, and RND perform consistently better than the rest of the 4 algorithms on all 2 environments and 8 tasks. The performance of \alg~enjoys compatible or superior performance compared with these top-level methods (APT, Disagreement, and RND), on these tasks. These promising numerical results justify our theoretical results and show that \alg~can indeed efficiently learn the environment in a practical setting.

\noindent \textbf{Ablation study.} To verify the performance of our algorithm, we also did ablation studies on 1) the relationship between offline training processes and episodic reward 2) the relationship between the quantity of online exploration data used in offline training and the achieve episodic reward. The details of the ablation study are deferred to Appendix~\ref{app:experiment_details}.
\section{Conclusion}
We study reward-free exploration under general function approximation in this paper. We show that, with an uncertainty-aware intrinsic reward and variance-weighted regression on learning the environment, \alg~can be theoretically proved to explore the environment efficiently without the existence of reward signals. Experiments show that our design of intrinsic reward can be efficiently implemented and effectively used in an unsupervised reinforcement learning paradigm. In addition, experiment results verify that adding uncertainty estimation to the learning processes can improve the sample efficiency of the algorithm, which is aligned with our theoretical results of weighted regression. 
\section*{Acknowledgements}

We thank the anonymous reviewers for their helpful comments. WZ is supported by the UCLA Dissertation Year Fellowship. QG is supported in part by the National Science Foundation CAREER Award 1906169 and research fund from UCLA-Amazon Science Hub. The views and conclusions contained in this paper are those of the authors and should not be interpreted as representing any funding agencies.
\section*{Impact Statement}

This paper discusses efforts aimed at enhancing the domain of reinforcement learning, encompassing both theoretical insights and empirical findings. Our approaches are poised to offer beneficial societal effects as they circumvent the reliance on pre-defined reward functions, which could embed biases and prejudicial assessments. Furthermore, our robust theoretical foundation enhances the transparency  of exploration processes within reinforcement learning frameworks. While our algorithm potentially improve the accountability of exploration of RL, it does not eliminate the risks of unsafe behavior or offending to the constrain of exploration, which might be addressed in the future works. Other than that we do not see any potential negative impacts of our work that need to be specifically pointed out here, to the best of our knowledge.

\bibliography{main}
\bibliographystyle{icml2024}

\newpage
\appendix
\onecolumn

\section{Proof of Theorems in Section~\ref{sec:results}}\label{app:level0}

\subsection{Additional Definitions and High Probability Events}

In this section, we introduce additional definitions that will be used in the proofs. Also, we define the good events that \alg\ is guaranteed to have near-optimal sample complexity.
\begin{definition}[Truncated Optimal Value Function] \label{def:truncated_optimal} We define the following truncated value functions for any reward $r$:
    \begin{align*}
        \tilde{V}^*_{H+1}(s;r) & = 0, \quad \forall s \in \cS \\
        \tilde{Q}^*_h(s,a;r) & = \min \{  r_h(s,a) + \PP_h \tilde{V}^*_{h+1} (s,a;r), 1 \}, \quad \forall (s,a) \in \cS \times \cA \\
        \tilde{V}^*_{h}(s;r) & = \max_{a\in\cA} \tilde{Q}^*_h(s,a;r). \quad \forall s \in \cS, h\in [H].
    \end{align*}
\end{definition}

The good event $\cE_{k, h}^{E}$ at stage $h$ of episode $k$ in exploration phase is defined to be:
\begin{align*}
 \cE_{k, h}^{E} = \Big\{\lambda + \sum_{i \in [k - 1]} \frac{1}{\bar \sigma_{i, h}^2}\left(\hat f_{k, h}(s_h^i, a_h^i) - \cT_h V_{k, h + 1}(s_h^i, a_h^i)\right)^2 \le  ({\beta}^E)^2\Big\}.
\end{align*}
The intersection of all good events in exploration phase is:
\begin{align*}
   \cE^{E}:= \bigcap_{k \ge 1, h \in [H]}  \cE_{k, h}^{E}.
\end{align*}
The following lemma indicates that $\cE$ holds with high probability for \alg.
\begin{lemma}\label{lm:event-exp}
In Algorithm~\ref{alg:main}, for any $\delta\in(0,1)$ and fixed $h\in[H]$, with probability at least $1-\delta$, $\cE^{E}$ holds.
\end{lemma}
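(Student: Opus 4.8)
The plan is to read $\cE_{k,h}^{E}$ as a concentration guarantee for the weighted least-squares regression in Line~\ref{ln:hat_f} and to prove it by a self-normalized martingale bound together with covering-number arguments, in the spirit of the weighted-regression analysis of~\citet{zhao2023nearly}. Fix $h$ and an episode $k$. By the completeness assumption (Assumption~\ref{assumption:complete}), the Bellman backup $f^{*}_{k} := \cT_h V_{k,h+1}$ (taken with the intrinsic reward $r_{k,h}$) belongs to $\cF_h$, so it is a legal competitor in the regression. Writing $z_i=(s_h^i,a_h^i)$ and $y_i = r_{k,h}(z_i)+V_{k,h+1}(s_{h+1}^i)$, I set the residual $\eta_i(V):=V(s_{h+1}^i)-[\PP_h V](z_i)$ for a \emph{fixed} $V$. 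With the filtration $\cF_{i-1}$ containing everything up to and including $z_i$ but excluding $s_{h+1}^i$, each $\eta_i(V)$ is a martingale difference with conditional variance $[\VV_h V](z_i)$, and since $V\in[0,1]$ we have $[\VV_h V]\le[\PP_h V]\le \cT_h V$; this is precisely the quantity the aleatoric weight in Line~\ref{ln:sigma} is meant to dominate.

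Next I would use the optimality of $\hat f_{k,h}$. Since it minimizes the weighted loss and $f^{*}_{k}\in\cF_h$, expanding the inequality $\sum_i \bar\sigma_{i,h}^{-2}(\hat f_{k,h}(z_i)-y_i)^2 \le \sum_i \bar\sigma_{i,h}^{-2}(f^{*}_k(z_i)-y_i)^2$ via $a^2-b^2=(a-b)(a+b)$ yields the basic inequality
\begin{align*}
\sum_{i\in[k-1]}\frac{\big(\hat f_{k,h}(z_i)-f^{*}_k(z_i)\big)^2}{\bar\sigma_{i,h}^2}\le 2\sum_{i\in[k-1]}\frac{\big(\hat f_{k,h}(z_i)-f^{*}_k(z_i)\big)\,\eta_i(V_{k,h+1})}{\bar\sigma_{i,h}^2}.
\end{align*}
The left-hand side is exactly the weighted Bellman-residual sum appearing in $\cE_{k,h}^{E}$, so it remains to control the cross term on the right, uniformly over the \emph{data-dependent} objects $\hat f_{k,h}$ and $V_{k,h+1}$.

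To do so I would pass to finite nets: replace $\hat f_{k,h}$ by its nearest element of $\cC(\cF_h,\epsilon)$ and $V_{k,h+1}$ by its nearest element of $\cC(\cV_{h+1},\epsilon)$, absorbing the $O(\epsilon)$ discretization error (the operator $\cT_h$ is nonexpansive in $\|\cdot\|_\infty$). For each fixed pair $(f,V)$ in the nets the coefficient $\bar\sigma_{i,h}^{-2}(f(z_i)-\cT_h V(z_i))$ is $\cF_{i-1}$-measurable, so $\sum_i \bar\sigma_{i,h}^{-2}(f(z_i)-\cT_h V(z_i))\eta_i(V)$ is a martingale to which I apply a Freedman/Bernstein inequality and then union-bound over the nets; the net of $\cV_{h+1}$ (size $N_{\cV}(\epsilon)$) is what produces the $\log N_{\cV}(\epsilon)$ factor. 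Two features of the weight design make the bound tight. First, the aleatoric term $\sigma_{k,h}^2=4\log N_{\cV}(\epsilon)\min\{\hat f_{k,h},1\}$ is engineered so that $\bar\sigma_{i,h}^2\gtrsim \log N_{\cV}(\epsilon)\,[\VV_h V](z_i)$, which shrinks the Bernstein variance term by a factor $(\log N_{\cV}(\epsilon))^{-1}$ relative to the weighted square norm $\sum_i\bar\sigma_{i,h}^{-2}(f-\cT_h V)^2$. Second, the floor $\bar\sigma_{i,h}\ge\alpha=1/\sqrt H$ caps each summand's range at $O(H)$, feeding an $O(H\log N_{\cV}(\epsilon))$ term through the Bernstein range. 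Solving the resulting self-bounding quadratic gives $\sum_i\bar\sigma_{i,h}^{-2}(\hat f_{k,h}-f^{*}_k)^2=\tilde O(H\log N_{\cV}(\epsilon))$, which is $(\beta^E)^2$ once the additive $\lambda$ is accounted for, and a union bound over $k$ (and over $h$ for the full $\cE^{E}$) finishes the argument.

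The hard part will be the variance-domination step $\bar\sigma_{i,h}^2\gtrsim\log N_{\cV}(\epsilon)\,[\VV_h V_{k,h+1}](z_i)$: the weight of sample $i$ is frozen at episode $i$ using $\hat f_{i,h}$, whereas the variance that must be dominated is that of the \emph{current} target $V_{k,h+1}$, which differs from $V_{i,h+1}$ because the intrinsic reward, and hence the value function, evolves with $k$. Making the domination hold simultaneously for every target in the cover---rather than only for the one that defined the weight---is where I expect the real work, and it is exactly why Line~\ref{ln:hat_sigma} takes a $\max$ over the aleatoric weight, the epistemic term $\gamma\,\overline D_{\cF_h}^{1/2}$, and the floor $\alpha$. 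I would close this either by establishing a comparison between the frozen estimate and the current Bellman backup that relates $[\VV_h V_{k,h+1}]$ to $\hat f_{i,h}$ up to constants, or by importing the corresponding weighted-regression concentration lemma of~\citet{zhao2023nearly} as a black box after verifying that its variance hypothesis is met by the weight design in Lines~\ref{ln:sigma}--\ref{ln:hat_sigma}.
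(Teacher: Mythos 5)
Your skeleton---completeness to place $\cT_h V_{k,h+1}$ in $\cF_h$, the basic inequality from the minimizer property of $\hat f_{k,h}$, discretization of both $\hat f_{k,h}$ and $V_{k,h+1}$ over $\epsilon$-nets, self-normalized martingale concentration for fixed net elements, and a union bound over $(k,h)$---is exactly the structure of the paper's proof of Lemma~\ref{lm:event-exp}. Where you diverge is the concentration tool, and that divergence is where your proof has a genuine gap. You insist on a Freedman/Bernstein bound and therefore need the variance-domination property $\bar\sigma_{i,h}^2 \gtrsim \log N_\cV(\epsilon)\cdot[\VV_h V_{k,h+1}](z_i)$ to hold uniformly over the evolving targets $V_{k,h+1}$ (and over the whole cover of $\cV_{h+1}$), with weights frozen at episode $i$. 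You correctly flag this as the hard part and leave it open; it is in fact not established anywhere in the paper for the exploration phase, and your fallback of importing the weighted-regression lemma of \citet{zhao2023nearly} ``after verifying that its variance hypothesis is met'' does not escape the problem, since that verification \emph{is} the unresolved step.

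The resolution is that no variance argument is needed for $\cE^E$ at all. The paper proves Lemma~\ref{lm:event-exp} with only the Hoeffding-type self-normalized bound (Lemma~\ref{lemma:hoeffding-variant}): because $V_{k,h+1}\in[0,1]$, the residuals $\eta_h^i(V)$ are $O(1)$-sub-Gaussian, and the weight floor $\bar\sigma_{i,h}\ge\alpha$ from Line~\ref{ln:hat_sigma} converts the self-normalizing term into $\alpha^{-2}\sum_{i}\bar\sigma_{i,h}^{-2}\bigl(f(z_i)-\bar f(z_i)\bigr)^2$; taking the free parameter $\tau=\alpha^2/8$ and $\alpha = 1/\sqrt{H}$ then yields $(\beta^E)^2 = \tilde O\bigl(H\log N_\cV(\epsilon)\bigr)$ directly. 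This suffices because $\beta^E$ is \emph{allowed} to carry the $\log N_\cV(\epsilon)$ factor coming from the union bound over the optimistic-value net. The Bernstein machinery you describe is reserved in the paper for the planning-phase event (Lemma~\ref{lm:overline-event-plan}), where the goal is the sharper radius $\beta^P = \tilde O\bigl(\sqrt{H\log N_\cF(\epsilon)}\bigr)$ without a $\log N_\cV$ factor: there, domination is needed only for the difference $\hat V_{h+1}-V^*_{h+1}$, it is \emph{enforced by fiat} through indicator functions inserted into the event definition, and those indicators are only afterwards shown to equal one (Lemmas~\ref{lm:event-plan} and~\ref{lm:indicator}) via an induction that itself relies on the exploration event you are trying to prove. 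So your plan, as written, either stalls on a domination step that cannot be verified at this point of the argument, or becomes circular if you pull in the planning-phase machinery; replacing Freedman by the Hoeffding-type bound closes the proof and recovers exactly the paper's $\beta^E$.
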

In the planning phase, we define the good events for exploration phase with indicator functions as
\begin{align*}
 \overline{\cE}_{h}^{P} &= \Big\{\lambda + \sum_{i \in [K]} \frac{\ind_{h}}{\bar \sigma_{i, h}^2}\left(\hat f_{h}(s_h^i, a_h^i) - {\cT}_h \hat{V}_{h + 1}(s_h^i, a_h^i)\right)^2 \le  (\hat{\beta}^{P})^2\Big\}, \\
 \overline{\cE}^{P} & = \bigcap_{h \in [H]}\overline{\cE}_{h}^{P},
\end{align*}
where $\hat{\ind}_h = \ind (V^*_{h+1}(s) \le \hat V_{h+1}(s),~\forall s \in \cS) \cdot \ind (\hat V_{h+1}(s) \le V_{k,h+1}(s)+V^*(s;r),~\forall s \in \cS) \cdot \ind([\VV_h (\hat V_{h+1} - V^*_{h+1})](s_h^k,a_h^k) \le \eta^{-1} \bar \sigma_{k,h}^2,~\forall k \in  [K] )$ and $\eta =  \log N_{\cV}(\epsilon)$.
Like in the exploration phase, we also have that $\overline{\cE}^{P}$ holds with high probability for \alg.
\begin{lemma}\label{lm:overline-event-plan}
In Algorithm~\ref{alg:main}, for any $\delta\in(0,1)$ and fixed $h\in[H]$, with probability at least $1-\delta$, $\overline{\cE}^{P}$ holds.
\end{lemma}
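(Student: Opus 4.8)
The plan is to establish a variance-weighted self-normalized concentration bound for the empirical Bellman residual of $\hat f_h$ against its population target $\cT_h \hat V_{h+1}$, and then take a union bound over stages $h \in [H]$. First I would fix a stage $h$ and use the optimality of $\hat f_h$ in the weighted least-squares problem of Line~\ref{lnn:regression}. By Assumption~\ref{assumption:complete} (completeness), the Bellman backup $f^\star := \cT_h \hat V_{h+1}$ lies in $\cF_h$ (here we use that $\hat V_{h+1}(\cdot) = \max_a \hat Q_{h+1}(\cdot,a) \in [0,1]$ by the truncation in Line~\ref{ln:hat_Q}). Comparing $\hat f_h$ with $f^\star$ and expanding the squares reduces the weighted residual appearing inside $\overline\cE_h^P$ to controlling the cross term
\[
  \sum_{i \in [K]} \frac{1}{\bar\sigma_{i,h}^2}\bigl(\hat f_h(z_{i,h}) - f^\star(z_{i,h})\bigr)\,\xi_{i,h},
  \qquad \xi_{i,h} := \hat V_{h+1}(s_{h+1}^i) - [\PP_h \hat V_{h+1}](z_{i,h}),
\]
which, for a \emph{fixed} value function in place of $\hat V_{h+1}$, is a martingale difference sequence adapted to the natural filtration generated by the exploration trajectories.

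The key step is a Bernstein-type self-normalized concentration applied to this cross term. Two sources of data dependence must be absorbed by coverings: the regressor $\hat f_h$ is data dependent, handled by a union bound over the $\epsilon$-cover $\cC(\cF_h,\epsilon)$ of Definition~\ref{def:oracle}, which produces the $\log N_\cF(\epsilon)$ factor in $\hat\beta^P$; and the target $\hat V_{h+1}$ is itself data dependent, handled by a union bound over the $\epsilon$-cover $\cC(\cV_{h+1},\epsilon)$, restoring the martingale structure of $\xi_{i,h}$ for each fixed covering element. The indicator $\ind_h$ is precisely what makes the Bernstein bound tight: on its support the conditional variance of $\xi_{i,h}$, equal to $[\VV_h \hat V_{h+1}](z_{i,h})$, can be dominated by $\bar\sigma_{i,h}^2$ up to the factor $\eta = \log N_\cV(\epsilon)$. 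Concretely, I would split $[\VV_h \hat V_{h+1}] \le 2[\VV_h V^*_{h+1}] + 2[\VV_h(\hat V_{h+1} - V^*_{h+1})]$; the first term is controlled through the aleatoric weight $\sigma_{k,h}^2$ from Line~\ref{ln:sigma} via the standard bound relating the variance of a $[0,1]$-valued function to its mean, while the second is bounded by $\eta^{-1}\bar\sigma_{k,h}^2$ directly by the third indicator defining $\ind_h$. This variance control is what yields the improved $\sqrt{H\log N_\cF(\epsilon)}$ scaling rather than a crude Hoeffding-type bound.

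Finally I would absorb the covering discretization errors by taking the resolution $\epsilon$ polynomially small, combine the concentration estimate with the regularizer $\lambda$, and choose $\hat\beta^P = \tilde O\bigl(\sqrt{H\log N_\cF(\epsilon)}\bigr)$ so that the inequality defining $\overline\cE_h^P$ holds; a union bound over $h \in [H]$ at confidence level $\delta/H$ then gives $\overline\cE^P$ with probability at least $1-\delta$. The argument is the planning-phase analogue of Lemma~\ref{lm:event-exp}, the only structural changes being that the fixed target $\hat V_{h+1}$ replaces the exploration-phase value $V_{k,h+1}$ and that the monotonicity and variance conditions required for the Bernstein step are now enforced by $\ind_h$ rather than being automatic. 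I expect the main obstacle to be verifying this variance control on the support of $\ind_h$: one must show that the conditional variance of the value-target noise is genuinely dominated by the algorithmic weights $\bar\sigma_{i,h}^2$, which couples the aleatoric part $\sigma_{k,h}$ of Line~\ref{ln:sigma} with the epistemic part $\overline D^{1/2}_{\cF_h}$ of Line~\ref{ln:hat_sigma}. This is the technical heart of the variance-aware analysis and is exactly what separates the resulting bound from the looser guarantees of \citet{kong2021online}.
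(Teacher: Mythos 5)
Your high-level architecture (completeness, regression optimality, reduction to a cross term, Bernstein plus coverings, indicator-based variance control, union bound over $h$) matches the paper's, but the step you yourself call the technical heart is carried out in a way that fails. You keep the noise $\xi_{i,h} = \hat V_{h+1}(s_{h+1}^i) - [\PP_h \hat V_{h+1}](z_{i,h})$ whole, pay the union bound over $\cC(\cV_{h+1},\epsilon)$, and then try to cancel that $\log N_\cV(\epsilon)$ cost by dominating the \emph{full} conditional variance $[\VV_h \hat V_{h+1}](z_{i,h})$ by $\eta^{-1}\bar\sigma_{i,h}^2$. Your split $[\VV_h \hat V_{h+1}] \le 2[\VV_h V^*_{h+1}] + 2[\VV_h(\hat V_{h+1}-V^*_{h+1})]$ handles the second piece via the third indicator in $\hat\ind_h$, but the first piece is not controlled by anything in the algorithm: $\sigma_{k,h}^2 = 4\log N_\cV(\epsilon)\cdot\min\{\hat f_{k,h}(s_h^k,a_h^k),1\}$ is built from the \emph{intrinsic-reward} regression $\hat f_{k,h}$, which decays as exploration accumulates data (so that $\bar\sigma_{k,h}^2$ tends toward the floor $\alpha^2 = 1/H$), whereas $[\VV_h V^*_{h+1}]$ is the one-step variance of the \emph{extrinsic} optimal value and is $\Theta(1)$ in general: a thoroughly explored state can have negligible intrinsic uncertainty but maximal extrinsic-value variance. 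The bound $\mathrm{Var}\le\mathrm{mean}$ for $[0,1]$-valued functions gives $[\VV_h V^*_{h+1}]\le[\PP_h V^*_{h+1}]$, but there is no mechanism relating $[\PP_h V^*_{h+1}]$ to $\min\{\hat f_{k,h},1\}$; the chain in Lemma~\ref{lm:indicator} (through $[\PP_h V_{k,h+1}]\le\hat f_{k,h}$, via Lemma~\ref{lm:over-optimism} and Lemma~\ref{lm:concentration-exp}) works only for the \emph{difference} $\hat V_{h+1}-V^*_{h+1}$, and that is precisely why the indicator is stated for the difference. Without this step your single Bernstein bound degrades to $(\hat\beta^P)^2 = O(H\log N_\cV(\epsilon))$ --- exactly the crude bound you set out to beat --- and this breaks the downstream argument: Lemma~\ref{lm:value_bonus} and Theorem~\ref{thm:main} need $\beta^P/\beta^E \approx \sqrt{\log N_\cF(\epsilon)/\log N_\cV(\epsilon)}$.

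The paper's proof avoids this by splitting the \emph{noise}, not the variance: it writes $\eta_h^k(\hat V_{h+1}) = \eta_h^k(\hat V_{h+1}-V^*_{h+1}) + \eta_h^k(V^*_{h+1})$ and bounds the two resulting cross terms with different tools. The first, data-dependent part does pay the $\cC(\cF_h,\epsilon)^2\times\cC(\cV_{h+1},\epsilon)$ union bound, but its conditional variance is that of the difference only, bounded by $\eta^{-1}\bar\sigma_{k,h}^2$ with $\eta=\log N_\cV(\epsilon)$, so the covering cost cancels and this part is absorbed into $\tfrac14 S$ plus lower-order terms (with $\gamma^2 = \log N_\cV(\epsilon)$ taming the max term in Lemma~\ref{lemma:freedman-variant}). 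The second part involves the \emph{fixed} function $V^*_{h+1}$, so no cover of $\cV$ is needed at all: a Hoeffding-type bound (Lemma~\ref{lemma:hoeffding-variant}) with a union over $\cC(\cF_h,\epsilon)^2$ and the weight floor $\bar\sigma_{i,h}\ge\alpha=1/\sqrt{H}$ gives $\tfrac14 S + O(H\log N_\cF(\epsilon))$, which is where the $\sqrt{H\log N_\cF(\epsilon)}$ radius actually comes from; the $\Theta(1)$ variance of $V^*_{h+1}$ is harmless because it never meets the $N_\cV$ covering. The gap in your proposal is therefore structural rather than cosmetic: keeping the noise whole forces the $\log N_\cV$ union-bound cost to multiply the variance of $V^*_{h+1}$, which no algorithmic weight can absorb.
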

Furthermore, we have the following good events in the planning phase without indicator function:
\begin{align*}
    \cE_{h}^{P} =\Big\{\lambda + \sum_{i=1}^{k-1} \frac{1}{(\bar \sigma_{i, h'})^2}\left(\hat f_{h'}(s_{h'}^i, a_{h'}^i) - \cT_{h'} V_{h' + 1}(s_{h'}^i, a_{h'}^i)\right)^2 \le (\beta^P)^2,\ \forall h\leq h'\leq H, k\in[K]\Big\}.
\end{align*}
And we define $\cE^{P} := \cE_{1}^{P}$. We shows that $\cE^P$ holds if both $\cE^E, \overline{\cE}^P$ hold with the help of the following lemma:
\begin{lemma}\label{lm:event-plan}
    If the event $\cE^E, \overline{\cE}^P, \cE_{h+1}^P$ all hold, then event $\cE^P_h $ holds.
\end{lemma}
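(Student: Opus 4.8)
The statement is a purely deterministic set inclusion, so I would fix an outcome in $\cE^E \cap \overline{\cE}^P \cap \cE_{h+1}^P$ and argue that it also lies in $\cE_h^P$; this is the inductive step of a downward induction on $h$ that ultimately establishes $\cE^P = \cE_1^P$. The first observation is that $\cE_h^P$ consists of precisely the constraints already appearing in $\cE_{h+1}^P$ (those indexed by $h' \ge h+1$) together with a single new constraint at $h' = h$. Since $\cE_{h+1}^P$ is assumed, the task reduces to verifying, for every $k \in [K]$, the level-$h$ bound $\lambda + \sum_{i=1}^{k-1} \bar\sigma_{i,h}^{-2}\big(\hat f_h(s_h^i,a_h^i) - \cT_h \hat V_{h+1}(s_h^i,a_h^i)\big)^2 \le (\beta^P)^2$, where I read the $V_{h+1}$ in the definition of $\cE_h^P$ as the planning value function $\hat V_{h+1}$. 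The plan is to extract this from the indicator-weighted full-sum bound supplied by $\overline{\cE}_h^P$.

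The conversion has two pieces, one routine and one substantial. The routine piece: once the indicator $\hat\ind_h$ equals one, the bound in $\overline{\cE}_h^P$ reads $\lambda + \sum_{i\in[K]} \bar\sigma_{i,h}^{-2}(\hat f_h - \cT_h\hat V_{h+1})^2(s_h^i,a_h^i) \le (\hat\beta^P)^2$ with the indicator removed. Because every summand is nonnegative, the partial sum over $i \in [k-1]$ is dominated by the full sum over $i \in [K]$, so $\lambda + \sum_{i=1}^{k-1}(\cdots) \le (\hat\beta^P)^2 \le (\beta^P)^2$, the last inequality holding by the choice of confidence radii ($\hat\beta^P \le \beta^P$). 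This gives the level-$h$ constraint simultaneously for all $k$, hence $\cE_h^P$.

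The substance is therefore showing $\hat\ind_h = 1$, i.e.\ verifying its three factors at level $h+1$. For the optimism factor $V^*_{h+1} \le \hat V_{h+1}$, I would use $\cE_{h+1}^P$: its level-$(h+1)$ regression bound, combined with the completeness Assumption~\ref{assumption:complete} (so that $\cT_{h+1}\hat V_{h+2} \in \cF_{h+1}$) and Definition~\ref{def:ged}, yields the pointwise control $|\hat f_{h+1} - \cT_{h+1}\hat V_{h+2}| \le \beta^P D_{\cF_{h+1}} \le \beta^P \overline D_{\cF_{h+1}} = b_{h+1}$ through Definition~\ref{def:bonus-oracle}, and a downward induction propagates $\hat Q_{h+1} \ge Q^*_{h+1}$, hence $\hat V_{h+1} \ge V^*_{h+1}$. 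For the upper-bound factor $\hat V_{h+1} \le V_{k,h+1} + V^*_{h+1}(\cdot;r)$, I would bound the overestimation $\hat V_{h+1} - V^*_{h+1}$ by the cumulative planning bonus; since $\overline D_{\cF}$ is nonincreasing as data are appended, the planning bonus built on all $K$ episodes is dominated by the exploration bonus at episode $k$, and unrolling the exploration Bellman recursion (whose intrinsic reward is proportional to that bonus) shows the exploration value $V_{k,h+1}$ upper-bounds this cumulative bonus. For the variance factor, optimism and the upper bound give $0 \le \hat V_{h+1} - V^*_{h+1} \le 1$, so $\VV_h(\hat V_{h+1}-V^*_{h+1}) \le \PP_h(\hat V_{h+1}-V^*_{h+1})$; using the upper bound again together with the exploration good event $\cE^E$ to relate $\PP_h(\hat V_{h+1}-V^*_{h+1})(s_h^k,a_h^k)$ to $\hat f_{k,h}(s_h^k,a_h^k)$, and invoking $\bar\sigma_{k,h} \ge \sigma_{k,h} = 2\sqrt{\eta\,\min\{\hat f_{k,h},1\}}$ from Line~\ref{ln:sigma}, yields $[\VV_h(\hat V_{h+1}-V^*_{h+1})](s_h^k,a_h^k) \le \eta^{-1}\bar\sigma_{k,h}^2$.

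The main obstacle is the upper-bound factor, and through it the variance factor: this is the only genuinely reward-free ingredient, since it must transfer the uncertainty reduction achieved during exploration to the planning overestimation for an arbitrary reward $r$ unknown during exploration. Making it rigorous requires the monotonicity of $\overline D_{\cF}$ in the dataset and the truncated Bellman recursion of Definition~\ref{def:truncated_optimal} to keep all value functions in $[0,1]$ while carefully tracking the constants introduced by the intrinsic-reward normalization $r_{k,h} = b_{k,h}/2$. By contrast, the optimism argument is standard and the partial-to-full-sum passage is immediate.
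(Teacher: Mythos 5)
Your proposal is correct and takes essentially the same route as the paper: the paper's proof of Lemma~\ref{lm:event-plan} simply invokes Lemma~\ref{lm:optimism-plan}, Lemma~\ref{lm:over-optimism}, and Lemma~\ref{lm:indicator} to conclude that the three indicator factors $I_1=I_2=I_3=1$, and your three factor-verifications are precisely sketches of those lemmas' proofs (optimism by downward induction from $\cE_{h+1}^P$, the upper-bound factor via monotonicity of $\overline{D}_{\cF}$ in the dataset together with $\beta^E \ge \beta^P$, and the variance factor via $\cE^E$ and Line~\ref{ln:sigma}). The indicator-removal and partial-sum-versus-full-sum step you call routine (including $\hat{\beta}^P \le \beta^P$) is left implicit in the paper, so you have filled in details rather than changed the argument.
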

Since $\cE_{H}^P$ holds trivially, Lemma~\ref{lm:event-plan} indicates that $\cE^{P}$ holds.
\subsection{Covering Number}

The optimistic value functions at stage $h \in [H]$ in our construction belong to the following function class: 
\begin{align} 
   \cV_{h} = \left\{V(\cdot)  = \max_{a \in \cA}  \min \left(1, f(\cdot, a) + \beta \cdot b(\cdot, a)\right) \bigg| f \in \cF_h, b \in \cB \right\}.\label{def:opt_value_class}
\end{align}

\begin{lemma}[$\epsilon$-covering number of optimistic value function classes] \label{lemma:opt-cover}
For optimistic value function class $\cV_{k, h}$ defined in \eqref{def:opt_value_class}, we define the distance between two value functions $V_1$ and $V_2$ as $\|V_1 - V_2\|_\infty := \max_{s \in \cS} |V_1(s) - V_2(s)|$. Then the $\epsilon$-covering number with respect to the distance function can be upper bounded by 
\begin{align} 
   N_{\cV_h}(\epsilon):=  N_{\cF_h}(\epsilon / 2) \cdot N_\cB(\epsilon / 2\beta) .\label{eq:v-cover}
\end{align}
\end{lemma}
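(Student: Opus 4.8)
The plan is to build a cover of $\cV_h$ by compositing covers of the two constituent function classes $\cF_h$ and $\cB$, exploiting the fact that the clipping map $x \mapsto \min(1, x)$ and the maximization map $(g_a)_{a \in \cA} \mapsto \max_{a} g_a$ are each $1$-Lipschitz. First I would fix an $(\epsilon/2)$-cover $\cC(\cF_h, \epsilon/2)$ of $\cF_h$ with cardinality at most $N_{\cF_h}(\epsilon/2)$, and an $(\epsilon/(2\beta))$-cover $\cC(\cB, \epsilon/(2\beta))$ of $\cB$ with cardinality at most $N_{\cB}(\epsilon/(2\beta))$, both of which exist by Definition~\ref{def:oracle}. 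I would then take the candidate cover of $\cV_h$ to consist of all functions of the form $V'(\cdot) = \max_{a \in \cA} \min(1, f'(\cdot, a) + \beta \cdot b'(\cdot, a))$ with $f' \in \cC(\cF_h, \epsilon/2)$ and $b' \in \cC(\cB, \epsilon/(2\beta))$. This set has cardinality at most $N_{\cF_h}(\epsilon/2) \cdot N_{\cB}(\epsilon/(2\beta))$, which already matches the claimed bound \eqref{eq:v-cover}.

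The main step is verifying that this candidate set is genuinely an $\epsilon$-cover. Given any $V \in \cV_h$ parameterized by $f \in \cF_h$ and $b \in \cB$, I would select $f' \in \cC(\cF_h, \epsilon/2)$ with $\|f - f'\|_\infty \le \epsilon/2$ and $b' \in \cC(\cB, \epsilon/(2\beta))$ with $\|b - b'\|_\infty \le \epsilon/(2\beta)$, and bound $\|V - V'\|_\infty$ for the corresponding $V'$ in the candidate cover. For each fixed $s$, the $1$-Lipschitzness of $\max_{a \in \cA}$ followed by that of the clipping $\min(1, \cdot)$ reduces the pointwise gap $|V(s) - V'(s)|$ to $\max_{a} \big(|f(s,a) - f'(s,a)| + \beta |b(s,a) - b'(s,a)|\big)$, which is at most $\|f - f'\|_\infty + \beta \|b - b'\|_\infty \le \epsilon/2 + \beta \cdot \epsilon/(2\beta) = \epsilon$ by the triangle inequality. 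Taking the supremum over $s \in \cS$ then gives $\|V - V'\|_\infty \le \epsilon$, as required.

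There is no substantial obstacle in this argument; the only point needing care is the deliberate matching of the two cover radii to $\beta$, which is precisely the reason the radius $\epsilon/(2\beta)$ (rather than $\epsilon/2$) appears inside the covering number of $\cB$ in the statement: the factor $\beta$ multiplying $b$ inside $V$ amplifies perturbations of $b$ by $\beta$, so the cover of $\cB$ must be correspondingly finer to keep the total error at $\epsilon$. The Lipschitz facts invoked for $\max$ and for $\min(1, \cdot)$ are elementary and standard, so the remaining work is purely the routine bookkeeping sketched above.
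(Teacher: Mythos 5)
Your proposal is correct and follows essentially the same route as the paper's proof: both construct the candidate cover as the image of the product $\cC(\cF_h, \epsilon/2) \times \cC(\cB, \epsilon/(2\beta))$ under the map $(f', b') \mapsto \max_{a}\min(1, f'(\cdot,a) + \beta b'(\cdot,a))$ and verify the $\epsilon$-approximation by the triangle inequality with the radii matched to absorb the factor $\beta$. The only difference is cosmetic — you make explicit the $1$-Lipschitzness of $\max_a$ and of the clipping $\min(1,\cdot)$, which the paper leaves as ``straightforward to check.''
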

Lemma~\ref{lemma:opt-cover} further indicates that
\begin{align*}
    N_{\cV}(\epsilon) = \max_{h\in [H]} N_{\cV_h}(\epsilon) = \max_{h\in [H]} N_{\cF_h}(\epsilon / 2) \cdot N_\cB(\epsilon / 2\beta) = N_{\cF}(\epsilon / 2) \cdot N_\cB(\epsilon / 2\beta).
\end{align*}

\subsection{Proof of Theorems}
We first introduce the following lemmas to build the path to Theorem~\ref{thm:main}.

\begin{lemma}\label{lm:concentration-plan}
    On the event $\cE^{P}$, we have
\begin{align*}
        |\hat{f}_{h}(s,a) - \cT_h \hat{V}_{h+1}  | \le {\beta}^P D_{\cF_{h}}(z;z_{[K],h},\bar{\sigma}_{[K],h}).
\end{align*}   
\end{lemma}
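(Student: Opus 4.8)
The plan is to convert the weighted empirical Bellman-residual control furnished by the good event $\cE^{P}$ into the claimed pointwise error bound, using the completeness assumption to land $\cT_h \hat V_{h+1}$ inside $\cF_h$ and then invoking the definition of $D_{\cF_h}$ verbatim.

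First I would verify that $\hat V_{h+1}:\cS\to[0,1]$. By the construction in Line~\ref{ln:hat_Q}--Line~\ref{ln:hat_V}, $\hat V_{h+1}(\cdot)=\max_{a}\min\{\hat f_{h+1}(\cdot,a)+b_{h+1}(\cdot,a),1\}$ is capped at $1$ and is nonnegative since both $\hat f_{h+1}$ and $b_{h+1}$ are nonnegative, so it takes values in $[0,1]$. Together with $r_h\in[0,1]$, Assumption~\ref{assumption:complete} then guarantees the existence of $f^\star\in\cF_h$ with
\[
f^\star(s,a)=\EE_{s'\sim\PP_h(\cdot|s,a)}\big[r_h(s,a)+\hat V_{h+1}(s')\big]=\cT_h\hat V_{h+1}(s,a)\qquad\text{for all }(s,a).
\]
This realizability is exactly what lets me treat $\cT_h\hat V_{h+1}$ as one of the two functions in the supremum defining $D_{\cF_h}$.

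Next, applying Definition~\ref{def:ged} with the pair $f_1=\hat f_h$, $f_2=f^\star$ at the query point $z=(s,a)$ and history $(z_{[K],h},\bar\sigma_{[K],h})$ gives
\[
\big(\hat f_h(z)-f^\star(z)\big)^2\le D_{\cF_h}^2(z;z_{[K],h},\bar\sigma_{[K],h})\cdot\Big(\lambda+\sum_{i\in[K]}\tfrac{1}{\bar\sigma_{i,h}^2}\big(\hat f_h(z_{i,h})-f^\star(z_{i,h})\big)^2\Big).
\]
Since $f^\star=\cT_h\hat V_{h+1}$ pointwise, the parenthesized factor equals the weighted empirical Bellman residual $\lambda+\sum_{i\in[K]}\bar\sigma_{i,h}^{-2}(\hat f_h(z_{i,h})-\cT_h\hat V_{h+1}(z_{i,h}))^2$, which is precisely the quantity bounded by $(\beta^P)^2$ on the event $\cE^{P}$; and the left-hand side is $(\hat f_h(s,a)-\cT_h\hat V_{h+1}(s,a))^2$. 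Substituting the $\cE^P$ bound and taking square roots yields $|\hat f_h(s,a)-\cT_h\hat V_{h+1}(s,a)|\le\beta^P\,D_{\cF_h}(z;z_{[K],h},\bar\sigma_{[K],h})$, as claimed.

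The only genuinely delicate point is the completeness step: $D_{\cF_h}$ is a supremum over pairs drawn from $\cF_h$, so the argument collapses unless $\cT_h\hat V_{h+1}$ is itself realizable in $\cF_h$, and this hinges on checking that $\hat V_{h+1}$ lands in the range $[0,1]$ demanded by Assumption~\ref{assumption:complete}. Everything after that---the eluder-type inequality and the substitution of the $\cE^{P}$ bound---is mechanical. A minor bookkeeping matter is the $V$ versus $\hat V$ notation in the written form of $\cE^{P}$, which I read as referring to the planning-phase iterates $\hat f_h,\hat V_{h+1}$ summed over all $K$ collected samples (consistent with the appearance of $z_{[K],h}$ in the statement); with that reading the denominator in $D_{\cF_h}$ matches the residual controlled by $\cE^{P}$ exactly, including the use of the learned weights $\bar\sigma_{[K],h}$.
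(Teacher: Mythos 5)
Your proof is correct and follows essentially the same route as the paper's: realize $\cT_h\hat V_{h+1}$ inside $\cF_h$ via Assumption~\ref{assumption:complete}, apply the defining inequality of $D_{\cF_h}^2$ to the pair $(\hat f_h,\cT_h\hat V_{h+1})$, bound the weighted residual sum by $(\beta^P)^2$ on $\cE^P$, and take square roots. Your version is in fact slightly more careful than the paper's (which is terse and reuses exploration-phase notation), since you explicitly check that $\hat V_{h+1}\in[0,1]$ so that completeness applies.
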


\begin{lemma}[Optimism in the planning phase]\label{lm:optimism-plan}
    On the event $\cE^{P}$, for any  $h\in[H]$,  we have
    \begin{align*}
        V_h^*(s;r) \le \hat{V}_{h}(s),\quad \forall s \in \cS.
    \end{align*}
\end{lemma}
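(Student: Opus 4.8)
The plan is to establish the bound by backward induction on $h$, from $h = H+1$ down to $h = 1$. First I would reduce the claim to the truncated optimal value function of Definition~\ref{def:truncated_optimal}: a short backward induction shows that under Assumption~\ref{assumption:bounded_reward} the clipping in $\tilde{Q}^*_h(s,a) = \min\{r_h(s,a) + [\PP_h \tilde{V}^*_{h+1}](s,a), 1\}$ is never active, since rewards are nonnegative and every trajectory accrues total reward less than $1$, so $Q^*_h(s,a;r) \le 1$ and hence $\tilde{V}^*_h(\cdot;r) = V^*_h(\cdot;r)$. It therefore suffices to prove $\tilde{V}^*_h(s;r) \le \hat{V}_h(s)$ for all $s$, which is convenient because the truncation structure of $\tilde{Q}^*_h$ mirrors that of $\hat{Q}_h = \min\{\hat{f}_h + b_h, 1\}$. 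The base case $h = H+1$ holds with equality since $\tilde{V}^*_{H+1} = 0 = \hat{V}_{H+1}$.

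For the inductive step, assume $\tilde{V}^*_{h+1}(s) \le \hat{V}_{h+1}(s)$ for all $s$. Fix $s$ and let $a^\star \in \argmax_{a} \tilde{Q}^*_h(s,a)$. Using monotonicity of the expectation operator $\PP_h$ together with the inductive hypothesis, I would bound
\[
    r_h(s,a^\star) + [\PP_h \tilde{V}^*_{h+1}](s,a^\star) \le r_h(s,a^\star) + [\PP_h \hat{V}_{h+1}](s,a^\star) = \cT_h \hat{V}_{h+1}(s,a^\star).
\]
Then Lemma~\ref{lm:concentration-plan} (valid on $\cE^P$) gives $\cT_h \hat{V}_{h+1}(s,a^\star) \le \hat{f}_h(s,a^\star) + \beta^P D_{\cF_h}(z;z_{[K],h},\bar\sigma_{[K],h})$, and the lower bound $D_{\cF_h} \le \overline{D}_{\cF_h}$ from the bonus oracle (Definition~\ref{def:bonus-oracle}) replaces the concentration width by the actual bonus magnitude. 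Applying the clipping $\min\{\cdot,1\}$ on both sides (which preserves the inequality) and then $\max_a$ yields the chain $\tilde{V}^*_h(s) \le \min\{\cT_h \hat{V}_{h+1}(s,a^\star),1\} \le \min\{\hat{f}_h(s,a^\star) + \beta^P \overline{D}_{\cF_h}, 1\} = \hat{Q}_h(s,a^\star) \le \hat{V}_h(s)$.

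The one place requiring care — the main subtlety rather than a genuine obstacle — is the double truncation by $1$: the bonus itself is clipped, $b_h = \min\{\beta^P \overline{D}_{\cF_h}, 1\}$, and $\hat{Q}_h$ clips once more. I would verify the elementary identity $\min\{\hat{f}_h + \beta^P \overline{D}_{\cF_h}, 1\} = \min\{\hat{f}_h + b_h, 1\}$, which holds in both regimes: if $\beta^P \overline{D}_{\cF_h} \le 1$ it is immediate since $b_h = \beta^P \overline{D}_{\cF_h}$, and if $\beta^P \overline{D}_{\cF_h} > 1$ then $\hat{f}_h \ge 0$ forces both sides to equal $1$. The nonnegativity $\hat{f}_h \ge 0$ follows from $\cF_h$ consisting of $[0,L]$-valued functions (Assumption~\ref{assumption:complete}), and this is precisely what guarantees the clipped bonus still dominates the concentration width after truncation. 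Everything else is the standard optimism-via-bonus argument.
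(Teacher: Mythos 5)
Your proof is correct and takes essentially the same route as the paper's: backward induction on $h$, using Lemma~\ref{lm:concentration-plan} together with $D_{\cF_h} \le \overline{D}_{\cF_h}$ (Definition~\ref{def:bonus-oracle}) to show the bonus dominates the regression error at the maximizing action. The only differences are cosmetic --- the paper works with $V^*_h$ directly and dispatches truncation by splitting on whether $\hat{V}_h(s)=1$, whereas you detour through $\tilde{V}^*_h$ and verify the double-truncation identity explicitly; if anything, your treatment of the clipped bonus $b_h = \min\{\beta^P \overline{D}_{\cF_h},1\}$ is more careful than the paper's display, which tacitly uses $b_h \ge \beta^P \overline{D}_{\cF_h}$.
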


\begin{lemma}\label{lm:value-sum}
    On the event $\underline{\cE}^E$, with probability at least $1-3\delta$, we have 
    \begin{align*}
        \sum_{k=1}^K V_{k,1} (s_1^k) = O(\beta^E \sqrt{\dim_{\alpha, K}(\cF)} H \sqrt{K}).
    \end{align*}
\end{lemma}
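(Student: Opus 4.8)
The plan is to bound the cumulative intrinsic value $\sum_{k=1}^K V_{k,1}(s_1^k)$ by a one-step regret decomposition driven by the exploration bonus. First I would establish a stage-wise recursion on the good event. By the exploration analogue of Lemma~\ref{lm:concentration-plan} (which follows directly from the definition of $\cE^E$ together with the completeness Assumption~\ref{assumption:complete}, since $\cT_h V_{k,h+1}\in\cF_h$), we have $|\hat f_{k,h}(z)-\cT_h V_{k,h+1}(z)|\le \beta^E D_{\cF_h}(z;z_{[k-1],h},\bar\sigma_{[k-1],h})$. Substituting $z=z_{k,h}=(s_h^k,a_h^k)$ into Line~\ref{ln:Q}, using $b_{k,h}=2\beta^E\overline D_{\cF_h}$, $r_{k,h}=\beta^E\overline D_{\cF_h}$, and the oracle sandwich $D_{\cF_h}\le\overline D_{\cF_h}$ from Definition~\ref{def:bonus-oracle}, I obtain
\[
V_{k,h}(s_h^k)=Q_{k,h}(s_h^k,a_h^k)\le [\PP_h V_{k,h+1}](s_h^k,a_h^k)+4\beta^E\overline D_{\cF_h}(z_{k,h}).
\]
Writing $[\PP_h V_{k,h+1}](s_h^k,a_h^k)=V_{k,h+1}(s_{h+1}^k)+\xi_{k,h}$ with a martingale difference $\xi_{k,h}$, unrolling from $h=1$ to $H$ (using $V_{k,H+1}=0$ and $V_{k,h}\in[0,1]$), and summing over $k$ gives
\[
\sum_{k=1}^K V_{k,1}(s_1^k)\le 4\beta^E\sum_{h=1}^H\sum_{k=1}^K \overline D_{\cF_h}(z_{k,h})+\sum_{k,h}\xi_{k,h}.
\]

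Next I would dispatch the martingale term and reduce the main term to the generalized eluder dimension. Since $|\xi_{k,h}|\le1$, Azuma--Hoeffding gives $\sum_{k,h}\xi_{k,h}=O(\sqrt{HK\log(1/\delta)})$ with probability $1-\delta$, which is lower order than the target. For the bonus sum I treat each stage $h$ separately and apply Cauchy--Schwarz,
\[
\sum_{k=1}^K \overline D_{\cF_h}(z_{k,h})\le \Big(\sum_{k=1}^K \bar\sigma_{k,h}^2\Big)^{1/2}\Big(\sum_{k=1}^K \tfrac{\overline D_{\cF_h}^2(z_{k,h})}{\bar\sigma_{k,h}^2}\Big)^{1/2}.
\]
The second factor is controlled by Definition~\ref{def:ged}: the oracle sandwich gives $\overline D_{\cF_h}^2\le C^2 D_{\cF_h}^2$; the weight rule $\bar\sigma_{k,h}\ge\gamma\,\overline D_{\cF_h}^{1/2}(z_{k,h})\ge D_{\cF_h}^{1/2}(z_{k,h})$ (with $\gamma\ge1$) forces $D_{\cF_h}^2(z_{k,h})/\bar\sigma_{k,h}^2\le1$, so each summand equals $\min(1,D_{\cF_h}^2/\bar\sigma_{k,h}^2)$; and since $\bar\sigma_{k,h}\ge\alpha$ the realized pairs $(Z_h,\bar\bsigma_h)$ are admissible in the supremum defining $\dim_{\alpha,K}(\cF_h)$. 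Hence $\sum_k \overline D_{\cF_h}^2/\bar\sigma_{k,h}^2\le C^2\dim_{\alpha,K}(\cF_h)$.

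The main obstacle is the first factor $\sum_k\bar\sigma_{k,h}^2$, which is self-referential. Using $\bar\sigma_{k,h}^2=\max\{\gamma^2\overline D_{\cF_h}(z_{k,h}),\sigma_{k,h}^2,\alpha^2\}\le\gamma^2\overline D_{\cF_h}(z_{k,h})+\sigma_{k,h}^2+\alpha^2$ with $\gamma^2=\log N_\cV(\epsilon)$ and $\alpha^2=1/H$, the first piece reproduces $\gamma^2 A_h$, where $A_h:=\sum_k\overline D_{\cF_h}(z_{k,h})$ is exactly the quantity being bounded, while $\sigma_{k,h}^2=4\log N_\cV(\epsilon)\min\{\hat f_{k,h},1\}\le 4\log N_\cV(\epsilon)\,V_{k,h}(s_h^k)$ feeds back the intrinsic value. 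I would close this loop by writing $A_h\le C\sqrt{\dim_{\alpha,K}(\cF_h)}\,(\gamma^2 A_h+4K\log N_\cV(\epsilon)+K/H)^{1/2}$ (via the crude bound $\sum_k\min\{\hat f_{k,h},1\}\le K$, or the sharper self-bounding $\sum_k V_{k,h}(s_h^k)\le 4\beta^E\sum_{h'}A_{h'}+\text{(martingale)}$), and solving this quadratic inequality in $A_h$ yields $A_h=\tilde O(\sqrt{\dim_{\alpha,K}(\cF_h)K}+\dim_{\alpha,K}(\cF_h))$, the first term dominating for large $K$. Finally, Cauchy--Schwarz over $h$ gives $\sum_h\sqrt{\dim_{\alpha,K}(\cF_h)}\le\sqrt{H\sum_h\dim_{\alpha,K}(\cF_h)}=H\sqrt{\dim_{\alpha,K}(\cF)}$, so $4\beta^E\sum_h A_h=\tilde O(\beta^E H\sqrt{\dim_{\alpha,K}(\cF)}\sqrt K)$. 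Collecting the lower-order martingale term, absorbing the extra $\log N_\cV(\epsilon)$ factors into the constant/$\tilde O$, and accounting for the failure probabilities of the two concentration steps alongside $\cE^E$ (giving the total $3\delta$) completes the bound $\sum_k V_{k,1}(s_1^k)=O(\beta^E\sqrt{\dim_{\alpha,K}(\cF)}H\sqrt K)$. The crux is retaining the $\sqrt K$ (rather than $K$) scaling through the self-referential weight bound, which is precisely where solving the quadratic and the admissibility of $(Z_h,\bar\bsigma_h)$ in the eluder-dimension supremum are essential.
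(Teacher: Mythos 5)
You take a genuinely different route from the paper: where the paper chains its simulation lemma (Lemma~\ref{lm:simulation}), the bonus-summation bound cited from \citet{zhao2023nearly} (Lemma~\ref{lemma:bonus-sum}), and the variance-sum bound (Lemma~\ref{lm:sigma_sum}), you unroll the recursion by hand, apply Cauchy--Schwarz to the \emph{untruncated} bonus sum, and close the self-reference by solving a quadratic in $A_h=\sum_k\overline D_{\cF_h}(z_{k,h})$. The architecture is reasonable, but its pivotal step has a genuine gap: you claim that $\bar\sigma_{k,h}\ge \gamma\overline D^{1/2}_{\cF_h}(z_{k,h})\ge D^{1/2}_{\cF_h}(z_{k,h})$ ``forces'' $D^2_{\cF_h}(z_{k,h})/\bar\sigma^2_{k,h}\le 1$, hence $\sum_k\overline D^2_{\cF_h}/\bar\sigma^2_{k,h}\le C^2\dim_{\alpha,K}(\cF_h)$. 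The implication is false as stated: $\bar\sigma^2_{k,h}\ge D_{\cF_h}$ gives only $D^2_{\cF_h}/\bar\sigma^2_{k,h}\le D_{\cF_h}$, which exceeds $1$ whenever $D_{\cF_h}>1$; using $\bar\sigma^2_{k,h}\ge\gamma^2 D_{\cF_h}$ instead gives $D_{\cF_h}/\gamma^2$, so you still need the pointwise bound $D_{\cF_h}\le\gamma^2$. From Definition~\ref{def:ged} one only knows $D^2_{\cF_h}\le L^2/\lambda$, so this requires a lower bound on the regularization parameter (roughly $\lambda\ge L^2/\gamma^4$) that neither you nor the paper imposes. The generalized eluder dimension controls sums of $\min\bigl(1,\,D^2_{\cF_h}/\sigma^2\bigr)$ only; dealing with the rounds where the normalized uncertainty exceeds $1$ is precisely the content of Lemma~C.13 of \citet{zhao2023nearly}, which the paper invokes after keeping the truncation $\min\{\cdot,1\}$ throughout. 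Note your own recursion supports keeping it at no cost, since $\min\{a+b,1\}\le a+\min\{b,1\}$ for $a\ge 0$, so the clean fix is to bound $\sum_k\min\{4\beta^E\overline D_{\cF_h},1\}$ and invoke Lemma~\ref{lemma:bonus-sum} rather than re-derive it by a shortcut that does not go through.

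Separately, even granting that step, your primary argument proves a weaker bound than the one stated. The crude estimate $\sum_k\sigma^2_{k,h}\le 4K\log N_\cV(\epsilon)$ makes the dominant root of your quadratic $A_h\le C'\sqrt{\dim_{\alpha,K}(\cF_h)\,K\log N_\cV(\epsilon)}$, so you end with $O\bigl(\beta^E H\sqrt{\dim_{\alpha,K}(\cF)\,K\log N_\cV(\epsilon)}\bigr)$, i.e.\ an extra $\sqrt{\log N_\cV(\epsilon)}$ multiplying the leading $\sqrt K$ term. The paper's route avoids this by feeding the value recursion back into the variance sum (Lemmas~\ref{lm:expection-sum} and~\ref{lm:sigma_sum}), which gives $\sum_{k,h}(\sigma^2_{k,h}+\alpha^2)\le K+\tilde O(\sqrt K)+\text{const}$, the $K$ coming only from $\alpha^2=1/H$, hence a log-free leading term. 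You mention this ``sharper self-bounding'' in parentheses but do not carry it out; carrying it out (which amounts to reproving Lemma~\ref{lm:sigma_sum}, including an extra martingale step and the attendant failure probability) is needed both for the lemma as stated and to keep the downstream sample complexity from acquiring an additional $\log N_\cV(\epsilon)$ factor.
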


\begin{lemma}\label{lm:deviation-sum}
    With probability $1-\delta$, we have
    \begin{align*}
        \Big| \sum_{k=1}^K \big(  \EE_{s\sim\mu} \big[ \tilde{V}_1^*(s;r_k) \big]  - \tilde{V}_1^*(s;r_k) \big)  \Big| \le \sqrt{2K \log(1/\delta)}.
    \end{align*}
\end{lemma}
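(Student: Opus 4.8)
The plan is to recognize the summands as increments of a bounded martingale difference sequence and then invoke the Azuma--Hoeffding inequality. Let $\mathcal{G}_{k-1}$ denote the $\sigma$-algebra generated by all the randomness in the first $k-1$ exploration episodes, namely the initial states $\{s_1^i\}_{i \le k-1}$ together with all transitions $\{(s_h^i, a_h^i, s_{h+1}^i)\}_{i \le k-1, h \in [H]}$. The key structural observation is that the intrinsic reward $r_k = \{r_{k,h}\}_{h\in[H]}$ used in episode $k$ is \emph{predictable}: by Line~\ref{ln:intrinsic_reward} of Algorithm~\ref{alg:main}, each $r_{k,h}$ is half of the bonus $b_{k,h}$, which is computed from the bonus oracle $\overline{D}_{\cF_h}(\cdot,\cdot;z_{[k-1],h},\bar\sigma_{[k-1],h})$ evaluated on historical data and weights accumulated up to episode $k-1$. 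Hence $r_k$ is $\mathcal{G}_{k-1}$-measurable. Since the truncated optimal value function $\tilde V_1^*(\cdot;r_k)$ of Definition~\ref{def:truncated_optimal} is a deterministic functional of the reward $r_k$ and the fixed true transition kernel $\PP$, the map $s \mapsto \tilde V_1^*(s;r_k)$ is $\mathcal{G}_{k-1}$-measurable as well.

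Next I would define $X_k := \EE_{s\sim\mu}[\tilde V_1^*(s;r_k)] - \tilde V_1^*(s_1^k;r_k)$, interpreting the second term as evaluated at the realized initial state $s_1^k$ of episode $k$. Because $s_1^k$ is drawn afresh from $\mu$ independently of $\mathcal{G}_{k-1}$, while $\tilde V_1^*(\cdot; r_k)$ is $\mathcal{G}_{k-1}$-measurable, we obtain $\EE[\tilde V_1^*(s_1^k; r_k)\mid \mathcal{G}_{k-1}] = \EE_{s\sim\mu}[\tilde V_1^*(s;r_k)]$, so that $\EE[X_k \mid \mathcal{G}_{k-1}] = 0$. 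Thus $\{X_k\}_{k=1}^K$ is a martingale difference sequence adapted to the filtration $\{\mathcal{G}_{k}\}$. Moreover, the truncation $\min\{\cdot,1\}$ built into Definition~\ref{def:truncated_optimal} forces $\tilde V_1^*(\cdot;r_k)\in[0,1]$, so conditionally on $\mathcal{G}_{k-1}$ the variable $X_k$ ranges over an interval of width at most $1$, and in particular $|X_k|\le 1$ almost surely.

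Finally I would apply the Azuma--Hoeffding inequality to the bounded martingale difference sequence $\{X_k\}$ with per-step bound $c_k = 1$ and $\sum_{k=1}^K c_k^2 = K$: the probability that $\big|\sum_{k=1}^K X_k\big|$ exceeds $t$ is at most $2\exp\big(-t^2/(2K)\big)$. Choosing $t = \sqrt{2K\log(1/\delta)}$ (absorbing the constant factor inside the logarithm) then yields the claimed deviation bound on the event of probability at least $1-\delta$. The only genuinely delicate point in the argument is the predictability step: one must verify that $r_k$, and therefore the truncated value function defining $X_k$, is fully determined by the first $k-1$ episodes, so that the fresh initial-state draw $s_1^k$ is the sole source of conditional randomness in $X_k$ and the zero-conditional-mean property holds. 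Once that is established, the remainder is a routine application of a standard concentration inequality for bounded martingales.
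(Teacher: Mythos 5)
Your proposal is correct and follows essentially the same route as the paper: the paper's proof simply defines $\Delta_k = \EE_{s\sim\mu}[\tilde V_1^*(s;r_k)] - \tilde V_1^*(s_1^k;r_k)$ and invokes its Azuma--Hoeffding lemma (Lemma~\ref{lemma:hoeffding}), exactly the martingale-difference argument you spell out, with your verification of the predictability of $r_k$ and the bound $\tilde V_1^*\in[0,1]$ being the details the paper leaves implicit. The only cosmetic discrepancy (shared with the paper) is that a two-sided bound strictly requires $\log(2/\delta)$ or a $\delta/2$ rescaling, which you correctly flag as absorbable.
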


We denote the event that Lemma~\ref{lm:value-sum} holds as $\Phi$, and the event that Lemma~\ref{lm:deviation-sum} holds as $\Psi$.

\begin{lemma}\label{lm:value_bonus}
    Under event $\cE^E \cap \Phi \cap \Psi$, we have
    \begin{align*}
        \EE_{s\sim\mu} \Big[ \tilde{V}_1^* (s;b)\Big] = O\Big(\beta^E \sqrt{H \dim_{\alpha,K} (\cF) / K} \sqrt{ \log N_\cF (\epsilon) / \log N_{\cV}(\epsilon) } \Big),
    \end{align*}
    where $b = \{ b_h \}_{h=1}^H $ is the UCB bonus in planning phase.
\end{lemma}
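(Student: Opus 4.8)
The plan is to read $\tilde V_1^*(s;b)$ as the \emph{residual uncertainty} left after the exploration phase and to bound it by the \emph{cumulative} uncertainty collected during exploration, which Lemma~\ref{lm:value-sum} already controls at the rate $1/\sqrt K$. Concretely, I would first exploit the monotonicity of the uncertainty measure: since passing from the history $z_{[k-1],h}$ to $z_{[K],h}$ only enlarges the denominator in Definition~\ref{def:ged}, the oracle obeys $\overline{D}_{\cF_h}(z;z_{[K],h},\bar\sigma_{[K],h})\le \overline{D}_{\cF_h}(z;z_{[k-1],h},\bar\sigma_{[k-1],h})$ for every $k\le K$. Recalling that $r_{k,h}=\beta^E\overline{D}_{\cF_h}(\cdot;z_{[k-1],h},\bar\sigma_{[k-1],h})$ and that $b_h\le \beta^P\overline{D}_{\cF_h}(\cdot;z_{[K],h},\bar\sigma_{[K],h})$, this yields the pointwise domination $b_h\le (\beta^P/\beta^E)\,r_{k,h}$ for all $k$, hence $b_h\le (\beta^P/\beta^E)\cdot K^{-1}\sum_{k=1}^K r_{k,h}$.

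Next I would move from the bonus reward to the exploration value functions. By the optimism guarantee of the exploration phase (the analogue of Lemma~\ref{lm:optimism-plan}, valid on $\cE^E$), the computed value dominates the truncated optimal value of the intrinsic reward, $\tilde V_1^*(s;r_k)\le V_{k,1}(s)$. Combining this with the pointwise domination above and the monotonicity of the truncated value functions in the reward, I would argue that $\tilde V_1^*(s;b)$ is controlled by the average $K^{-1}\sum_{k=1}^K V_{k,1}(s)$ up to the factor $\beta^P/\beta^E$. Taking $\EE_{s\sim\mu}$ and invoking Lemma~\ref{lm:deviation-sum} on the event $\Psi$ then replaces the population expectation by the realized initial states $s_1^k\sim\mu$ at an additive cost $O(\sqrt{K\log(1/\delta)})$, which is lower order once divided by $K$. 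At that point Lemma~\ref{lm:value-sum} on $\Phi$ supplies $\sum_{k=1}^K V_{k,1}(s_1^k)=O(\beta^E\sqrt{\dim_{\alpha,K}(\cF)}\,H\sqrt K)$.

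Assembling the pieces and substituting $\beta^E=\tilde O(\sqrt{H\log N_{\cV}(\epsilon)})$ and $\beta^P=\tilde O(\sqrt{H\log N_{\cF}(\epsilon)})$, the ratio $\beta^P/\beta^E$ produces exactly the factor $\sqrt{\log N_{\cF}(\epsilon)/\log N_{\cV}(\epsilon)}$, the $1/\sqrt K$ from the cumulative bound produces $\sqrt{\dim_{\alpha,K}(\cF)/K}$, and the remaining horizon factors collect into the advertised $\sqrt H$, giving the claimed $O\big(\beta^E\sqrt{H\dim_{\alpha,K}(\cF)/K}\,\sqrt{\log N_{\cF}(\epsilon)/\log N_{\cV}(\epsilon)}\big)$.

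The main obstacle is the step that passes from the averaged reward domination to the averaged value, because the truncated value function $\tilde V_1^*(s;\cdot)$ is \emph{not} convex in the reward: the per-step clipping $\min\{\cdot,1\}$ destroys the Jensen-type inequality $\tilde V_1^*(s;K^{-1}\sum_k r_k)\le K^{-1}\sum_k \tilde V_1^*(s;r_k)$ that one would otherwise invoke. Keeping the clipping is essential, since it is exactly what saves the extra horizon factor relative to \citet{kong2021online} (who set $r_{k,h}=b_{k,h}/H$ precisely to avoid truncation); a naive bound that drops the truncation loses up to a factor $H$. I therefore expect to run this step through the truncated Bellman equation of \citet{chen2022nearoptimal}, expanding $\tilde V_1^*(s;b)$ stage by stage and matching each clipped stage against the corresponding exploration quantity, so that both the horizon dependence and the covering-number ratio come out sharp.
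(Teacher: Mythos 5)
Your ingredients are the right ones---monotonicity of $\overline D_{\cF_h}$ in the dataset, pointwise domination of $b_h$ by the intrinsic rewards, exploration-phase optimism $\tilde V_1^*(s;r_k)\le V_{k,1}(s)$, Lemmas~\ref{lm:deviation-sum} and~\ref{lm:value-sum}, and the final substitution of $\beta^E,\beta^P$---but you take a wrong turn at the key step, and the ``main obstacle'' your last paragraph tries to work around is entirely of your own making. Having established $b_h \le (\beta^P/\beta^E)\, r_{k,h}$ \emph{for every} $k$, you immediately weaken this to $b_h \le (\beta^P/\beta^E)\cdot K^{-1}\sum_{k} r_{k,h}$ and then try to push $\tilde V_1^*(s;\cdot)$ through an average of rewards, which indeed requires a Jensen-type inequality that clipping destroys (your instinct is correct: with $H=1$ and rewards $2$ and $0$, the truncated value of the average is $1$ while the average of the truncated values is $1/2$). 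The paper never averages rewards. It fixes a scale $c'=c\sqrt{\log N_{\cV}(\epsilon)/\log N_{\cF}(\epsilon)}$ with $\beta^E\ge c'\beta^P$, so that $c'\,b_h \le r_{K,h}\le r_{k,h}$ pointwise for each individual $k$; monotonicity of the truncated value in the reward then gives, \emph{for each fixed} $k$, the scalar inequality $\tilde V_1^*(s;c'b)\le \tilde V_1^*(s;r_k)$. Since the left-hand side does not depend on $k$, averaging these $K$ inequalities over $k$ costs nothing and needs no convexity: $\tilde V_1^*(s;c'b)\le K^{-1}\sum_k \tilde V_1^*(s;r_k)$. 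From there the chain is exactly as you describe: Lemma~\ref{lm:deviation-sum} on $\Psi$ swaps $\EE_{s\sim\mu}$ for the realized $s_1^k$, Lemma~\ref{lm:optimism-exp} gives $\tilde V_1^*(s_1^k;r_k)\le V_{k,1}(s_1^k)$, and Lemma~\ref{lm:value-sum} on $\Phi$ yields the $O\big(\beta^E\sqrt{H\dim_{\alpha,K}(\cF)/K}\big)$ rate.

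Two further remarks. First, even on the corrected route you still need to move the constant outside the value function, i.e.\ $c'\,\tilde V_1^*(s;b)\le \tilde V_1^*(s;c'b)$; this holds because downscaling a nonnegative reward by $c'\le 1$ only makes the clipping at $1$ less binding (induction over $h$ using $\min\{c'x,1\}\ge c'\min\{x,1\}$), and this one-sided homogeneity is the \emph{only} property of the truncation the argument ever uses---it is also precisely how the $\sqrt{\log N_{\cF}(\epsilon)/\log N_{\cV}(\epsilon)}$ factor gets extracted. Second, your proposed repair---re-deriving the step through a stage-by-stage truncated Bellman expansion in the spirit of \citet{chen2022nearoptimal}---is left as a sketch with no indication of how the clipped stages would actually be matched to exploration quantities, and in any case it solves a problem you do not need to have. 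As written, the proposal has a genuine gap at exactly the step that carries the covering-number ratio in the stated bound.
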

With these lemmas, we are ready to prove Theorem~\ref{thm:main}.
\begin{proof}[Proof of Theorem~\ref{thm:main}]
By Lemma~\ref{lm:optimism-plan}, we can upper bound the suboptimality as
\begin{align*}
    \EE_{s_1 \sim \mu} [ V_1^*(s_1;r) - V_1^\pi(s_1;r)] 
\le \EE_{s_1 \sim \mu} [ \hat{V}_1(s_1) - V_1^\pi(s_1;r)].
\end{align*}

Then, we can decompose the difference between optimistic estimate of value function and the true value function in the following: 
\begin{align*}
 &\EE_{s_1 \sim \mu} [ \hat{V}_1(s_1) - V_1^\pi(s_1;r)] \\ 
 & =   \EE_{s_1 \sim \mu} \Big[ \min \{ \hat{f}_{1}(s_1,\pi(s_1)) + b_1(s_1,\pi(s_1)),1\}  - r_1(s_1,\pi(s_1)) - \PP_1V_2^\pi(s_1,\pi(s_1);r)\Big] \\
 & \le  \EE_{s_1 \sim \mu} \Big[ \min \{ \hat{f}_{1}(s_1,\pi(s_1)) + b_1(s_1,\pi(s_1)) - r_1(s_1,\pi(s_1)) - \PP_1V_2^\pi(s_1,\pi(s_1);r),1\}  \Big] \\
 &=  \EE_{s_1 \sim \mu} \Big[ \min \Big\{ \hat{f}_{1}(s_1,\pi(s_1))  - r_1(s_1,\pi(s_1))  - \PP_1\hat{V}_2^\pi(s_1,\pi(s_1);r) \\ 
 & \quad + \PP_1\hat{V}_2^\pi(s_1,\pi(s_1);r) + b_1(s_1,\pi(s_1)) - \PP_1V_2^\pi(s_1,\pi(s_1);r),1 \Big\} \Big] \\
 & =   \EE_{s_1 \sim \mu} \Big[ \min \Big\{ \hat{f}_{1}(s_1,\pi(s_1)) - \cT_1\hat{V}^{\pi}_{2}(s_1,\pi(s_1))+ \PP_1\hat{V}_2^\pi(s_1,\pi(s_1);r) \\
 & \qquad + b_1(s_1,\pi(s_1)) - \PP_1V_2^\pi(s_1,\pi(s_1);r),1  \Big\} \Big] \\
 & \le   \EE_{s_1 \sim \mu} \Big[ \min \Big\{ 2 b_1(s_1,\pi(s_1))  + \PP_1\hat{V}_2^\pi(s_1,\pi(s_1);r)  - \PP_1V_2^\pi(s_1,\pi(s_1);r),1 \Big\} \Big],
\end{align*}
where the last inequality holds due to Lemma~\ref{lm:concentration-plan}. Then, by the induction, we have
\begin{align*}
&\EE_{s_1 \sim \mu} [ \hat{V}_1(s_1) - V_1^\pi(s_1;r)] \\ 
& \le \EE_{s_1 \sim \mu} \Big[ \min \Big\{ 2 b_1(s_1,\pi(s_1))  + \PP_1\hat{V}_2^\pi(s_1,\pi(s_1);r)  - \PP_1V_2^\pi(s_1,\pi(s_1);r),1 \Big\} \Big] \\
& =  \EE_{s_1 \sim \mu, s_2\sim \PP (\cdot |s_1,\pi(s_1)) } \Big[ \min \Big\{ 2 b_1(s_1,\pi(s_1))  + \hat{V}_2^\pi(s_2;r)  - V_2^\pi(s_2;r),1 \Big\}   \Big] \\
& \le  \EE_{\tau \sim d^\pi} \Big[ \min \Big\{  \sum_{h=1}^H 2b_h(s_h,\pi(s_h)), 1   \Big\}\Big] \\ 
& \le  2 \EE_{s_1 \sim \mu} \Big[ \tilde{V}_1^\pi(s_1;b) \Big] \\
& \le  2 \EE_{s_1 \sim \mu} \Big[ \tilde{V}_1^*(s_1;b) \Big] \\
& =  O\Big(\beta^E \sqrt{H \dim_{\alpha,K} (\cF) / K} \sqrt{ \log N_{\cF} (\epsilon) / \log N_{\cV}(\epsilon)} \Big).
\end{align*} 
Therefore, by substituting $\beta^E = \tilde{O}\big( \sqrt{H\log N_{\cV}(\epsilon)}\big) $, we complete the proof:
\begin{align*}
     \EE_{s_1 \sim \mu} [ V_1^*(s_1;r) - V_1^\pi(s_1;r)] 
     = O\Big( H \sqrt{\dim_{\alpha,K} (\cF) / K} \sqrt{ \log N_{\cF} (\epsilon) } \Big) .
\end{align*}
\end{proof}

\begin{proof}[Proof of Corollary~\ref{col:main}]
    By solving $\EE_{s_1 \sim \mu} [ V_1^*(s_1;r) - V_1^\pi(s_1;r)] \le \epsilon$, we have that
    \begin{align*}
        K \ge \frac{H^2 \log N_{\cF} (\epsilon) \dim_{\alpha,K} (\cF)}{\epsilon^2}.
    \end{align*}
\end{proof}

\section{Proof of Lemmas in Appendix~\ref{app:level0}}\label{app:level1}

In this section, we prove the lemmas used in Appendix~\ref{app:level0}.
\begin{proof}[Proof of Lemma~\ref{lm:event-exp}]
We first prove that $ \cE_{k, h}^{E}$ holds with probability $1-\delta/(KH) $. We have $\cT_h V_{k,h+1} \in \cF_h$ due to Assumption~\ref{assumption:complete}. For any function $V:~S\rightarrow[0,1]$, let $\eta_h^k(V) = r_{k,h}(s_h^k,a_h^k) + V(s_{h+1}^k) - \cT_h V(s_h^k,a_h^k)$. For all $f\in\cF_h$, since $a^2-2ab = (a-b)^2 - b^2$, we have
\begin{align*}
    &\sum_{i\in[k-1]}\frac{1}{(\bar{\sigma}_{i,h})^2} \Big( f(s_h^i,a_h^i) - \cT_hV_{k,h+1}(s_h^i,a_h^i) \Big)^2 - 2\underbrace{\sum_{i\in[k-1]}\frac{1}{(\bar{\sigma}_{i,h})^2} \Big( f(s_h^i,a_h^i) - \cT_hV_{k,h+1}(s_h^i,a_h^i)\big) \eta_h^k(V_{k,h+1})}_{I(f,\cT_{h}V_{k,h+1},V_{k,h+1})} \\
    &= \sum_{i\in[k-1]}\frac{1}{(\bar{\sigma}_{i,h})^2}\Big( r_{k,h}(s_h^i,a_h^i) + V_{k,h+1}(s_{h+1}^i) - f(s_h^i,a_h^i) \Big)^2 - \sum_{i\in[k-1]}\frac{1}{(\bar{\sigma}_{i,h})^2} \eta_h^k(V_{k,h+1})^2.
\end{align*}
Take $f = \hat{f}_{k,h}$. By the the definition of $\hat{f}_{k,h}$, we have
\begin{align*}
    \sum_{i\in[k-1]}\frac{1}{(\bar{\sigma}_{i,h})^2} \Big( \hat{f}_{k,h}(s_h^i,a_h^i) - \cT_hV_{k,h+1}(s_h^i,a_h^i) \Big)^2 - 2 I(\hat{f}_{k,h},\cT_{h}V_{k,h+1},V_{k,h+1}) \le 0
\end{align*}
Applying Lemma~\ref{lemma:hoeffding-variant}, for fixed $f$, $\bar f$, and $V$, with probability at least $1-\delta$,
\begin{align*}
    I(f,\bar{f},V) &:= \sum_{i\in[k-1]}\frac{1}{(\bar{\sigma}_{i,h})^2} \Big( f(s_h^i,a_h^i) - \bar f (s_h^i,a_h^i)\Big) \eta_h^k(V) \\
    & \le  \frac{2 \tau}{\alpha^2} \sum_{i\in[k-1]}\frac{1}{(\bar{\sigma}_{i,h})^2} \Big( f(s_h^i,a_h^i) - \bar f (s_h^i,a_h^i)\Big)^2 + \frac{1}{\tau} \cdot \log \frac{1}{\delta}.
\end{align*}
Applying a union bound and take $\tau = \frac{\alpha^2}{8}$, for any $k$, with probability at least $1-\delta$, we have for all $V^c$ in the $\epsilon$-net $\cV_{h+1}$ that
\begin{align*}
    I(f,\bar{f},V^c) \le \frac{1}{4}\sum_{i\in[k-1]} \frac{1}{(\bar{\sigma}_{i,h})^2} \Big( f(s_h^i,a_h^i) - \bar f (s_h^i,a_h^i)\Big)^2 + \frac{2}{\alpha^2} \cdot \log\frac{N_\cV (\epsilon)}{\delta}
\end{align*}
For all $V$ such that $\|V - V^c\|_\infty \le  \epsilon$, we have $|\eta_{h}^i(V) - \eta_h^i(V^c)| \le 4\epsilon$. Thus,
\begin{align*}
    I(f,\bar{f},V_{k,h+1}) \le \frac{1}{4}\sum_{i\in[k-1]} \frac{1}{(\bar{\sigma}_{i,h})^2}\Big( f(s_h^i,a_h^i) - \bar f (s_h^i,a_h^i)\Big)^2 + \frac{2}{\alpha^2} \cdot \log\frac{N_\cV(\epsilon)}{\delta} + 4\epsilon \cdot kL / \alpha^2
\end{align*}
Applying a union bound, for any $k$, with probability at least $1-\delta$, we have for all $f^a,f^b$ in the $\epsilon$-net $\cC(\cF_h, \epsilon) $ that
\begin{align*}
    I(f^a,f^b,V_{k,h+1}) \le \frac{1}{4}\sum_{i\in[k-1]} \frac{1}{(\bar{\sigma}_{i,h})^2} \Big( f^a(s_h^i,a_h^i) - f^b (s_h^i,a_h^i)\Big)^2 + \frac{2}{\alpha^2} \cdot \log\frac{N_\cV (\epsilon)\cdot N_{\cF}(\epsilon)^2}{\delta}+ 4\epsilon \cdot kL / \alpha^2.
\end{align*}
Therefore, with probability at least $1-\delta$, we have
\begin{align*}
    & I(\hat{f}_{k,h},\cT_{h}V_{k,h+1},V_{k,h+1}) \le I(f^a,f^b,V_{k,h+1}) + 8\epsilon \cdot k / \alpha^2 \\
    & \le \frac{1}{4}\sum_{i\in[k-1]} \frac{1}{(\bar{\sigma}_{i,h})^2} \Big( f^a(s_h^i,a_h^i) - f^b (s_h^i,a_h^i)\Big)^2 + \frac{4}{\alpha^2} \cdot \log\frac{N_\cV (\epsilon)\cdot N_\cF (\epsilon)}{\delta}+ 4\epsilon \cdot kL / \alpha^2 + 8\epsilon \cdot k / \alpha^2 \\
    & \le \frac{1}{4}\sum_{i\in[k-1]} \frac{1}{(\bar{\sigma}_{i,h})^2} \Big( \hat{f}_{k,h}(s_h^i,a_h^i) - \cT_h V_{k,h+1} (s_h^i,a_h^i)\Big)^2 + \frac{4}{\alpha^2} \cdot \log\frac{N_\cV (\epsilon)\cdot N_\cF (\epsilon)}{\delta}+ 4\epsilon \cdot kL / \alpha^2 \\
    & \qquad + 8\epsilon \cdot k / \alpha^2 + 2L\epsilon\cdot k / \alpha^2 \\
    & \le \frac{1}{4}\sum_{i\in[k-1]} \frac{1}{(\bar{\sigma}_{i,h})^2} \Big( \hat{f}_{k,h}(s_h^i,a_h^i) - \cT_h V_{k,h+1} (s_h^i,a_h^i)\Big)^2 + \frac{4}{\alpha^2} \cdot \log\frac{N_\cV (\epsilon)\cdot N_\cF (\epsilon)}{\delta} + 14L\epsilon\cdot k / \alpha^2.
\end{align*}
Substituting it back, with probability at least $1-\delta/(KH)$, we have
\begin{align*}
    \frac{1}{4}\sum_{i\in[k-1]} \frac{1}{(\bar{\sigma}_{i,h})^2} \Big( \hat{f}_{k,h}(s_h^i,a_h^i) - \cT_h V_{k,h+1} (s_h^i,a_h^i)\Big)^2 \le \frac{16}{\alpha^2} \cdot \log\frac{KH \cdot N_\cV (\epsilon)\cdot N_\cF (\epsilon)}{\delta} + 56L\epsilon\cdot k / \alpha^2
\end{align*}
Take $ \alpha = 1 / \sqrt{H}$ and let 
\begin{align*}
     \beta^E = \sqrt{16 H \log\frac{KH \cdot N_\cV (\epsilon)\cdot N_\cF (\epsilon)}{\delta} + 56L\epsilon\cdot K / \alpha^2 + \lambda}.
\end{align*}
Then we complete the proof by taking a union bound for all $k\in[K]$ and $h\in[H]$.
\end{proof}

\begin{proof}[Proof of Lemma~\ref{lm:overline-event-plan}]
    We have $\cT_h \hat{V}_{h+1} \in \cF_h$ due to Assumption~\ref{assumption:complete}. For any function $V:~S\rightarrow[0,1]$, let $\eta_h^k(V) = r_{h}(s_h^k,a_h^k) + V(s_{h+1}^k) - \cT_h V(s_h^k,a_h^k)$. For all $f\in\cF_h$, we have
\begin{align*}
    &\sum_{i\in[K]}\frac{\hat{\ind}_h}{(\bar{\sigma}_{i,h})^2} \Big( f(s_h^i,a_h^i) - \cT_h \hat{V}_{h+1}(s_h^i,a_h^i) \Big)^2 - 2\underbrace{\sum_{i\in[K]}\frac{\hat{\ind}_h}{(\bar{\sigma}_{i,h})^2} \Big( f(s_h^i,a_h^i) - \cT_h \hat{V}_{h+1}(s_h^i,a_h^i)\big) \eta_h^k(\hat{V}_{h+1})}_{I(f,\cT_{h}\hat{V}_{h+1},\hat{V}_{h+1})} \\
    &= \sum_{i\in[K]}\frac{\hat{\ind}_h}{(\bar{\sigma}_{i,h})^2}\Big( r_{h}(s_h^i,a_h^i) + \hat{V}_{h+1}(s_{h+1}^i) - f(s_h^i,a_h^i) \Big)^2 - \sum_{i\in[K]}\frac{\hat{\ind}_h}{(\bar{\sigma}_{i,h})^2} \eta_h^k( \hat{V}_{h+1})^2.
\end{align*}
By definition, we have that 
\begin{align*}
    \sum_{i\in[K]}\frac{\hat{\ind}_h}{(\bar{\sigma}_{i,h})^2} \Big( \hat f_h(s_h^i,a_h^i) - \cT_h \hat{V}_{h+1}(s_h^i,a_h^i) \Big)^2 - 2 I(\hat{f}_h,\cT_{h}\hat{V}_{h+1},\hat{V}_{h+1}) \le 0.
\end{align*}
We decompose $I(\hat{f}_h,\cT_{h}\hat{V}_{h+1},\hat{V}_{h+1})$ into two parts:
\begin{align}\label{eq:decomposition}
    I(\hat{f}_h,\cT_{h}\hat{V}_{h+1},\hat{V}_{h+1}) & = \sum_{i\in[K]}\frac{\hat{\ind}_h}{(\bar{\sigma}_{i,h})^2} \Big( \hat f_h(s_h^i,a_h^i) - \cT_h \hat{V}_{h+1}(s_h^i,a_h^i)\Big) \eta_h^k(\hat{V}_{h+1} - V^*_{h+1}) \nonumber \\
    & \qquad + \sum_{i\in[K]}\frac{\hat{\ind}_h}{(\bar{\sigma}_{i,h})^2} \Big( \hat f_h(s_h^i,a_h^i) - \cT_h \hat{V}_{h+1}(s_h^i,a_h^i)\Big) \eta_h^k(V^*_{h+1}).
\end{align}

For the first term in \eqref{eq:decomposition}, we have
\begin{align*}
    \EE \bigg[ \frac{\hat{\ind}_h}{(\bar{\sigma}_{i,h})^2} \Big( f(s_h^i,a_h^i) - \bar{f}(s_h^i,a_h^i)\big) \eta_h^k(\hat V_{h+1} - V^*_{h+1}) \bigg] = 0.
\end{align*}
Furthermore, we can bound the maximum as following:
\begin{align*}
    &\max_{i\in[K]} \bigg|  \frac{\hat{\ind}_h}{(\bar{\sigma}_{i,h})^2} \Big( f(s_h^i,a_h^i) - \bar{f}(s_h^i,a_h^i)\big) \eta_h^k(\hat V_{h+1} - V^*_{h+1})  \bigg| \\
    & \le 2 \max_{i\in[K]} \bigg| \frac{\hat{\ind}_h}{(\bar{\sigma}_{i,h})^2}\Big( f(s_h^i,a_h^i) - \bar{f}(s_h^i,a_h^i)\big) \bigg| \\
    & \le 2 \max_{i\in[K]} \frac{\hat{\ind}_h}{(\bar{\sigma}_{i,h})^2} \sqrt{D^2_{\cF_h}(z_{i,h};z_{[i-1],h},\bar{\sigma}_{[i-1],h})\bigg(  \sum_{s \in [i-1] } \frac{1}{(\bar{\sigma}_{s,h})^2} (f(s_h^s,a_h^s) - \bar{f}(s_h^s,a_h^s))^2 +\lambda \bigg) } \\
    & \le 2 \max_{i\in[K]} \frac{1}{(\bar{\sigma}_{i,h})^2} \sqrt{D^2_{\cF_h}(z_{i,h};z_{[i-1],h},\bar{\sigma}_{[i-1],h})\bigg(  \sum_{s \in [i-1] } \frac{\hat{\ind}_h}{(\bar{\sigma}_{s,h})^2} (f(s_h^s,a_h^s) - \bar{f}(s_h^s,a_h^s))^2 + \lambda\bigg) } \\
    & \le 2 \cdot \gamma^{-2}  \sqrt{ \sum_{s \in [K] } \frac{\hat{\ind}_h}{(\bar{\sigma}_{s,h})^2} (f(s_h^s,a_h^s) - \bar{f}(s_h^s,a_h^s))^2 + \lambda },
\end{align*}
where the first inequality is due to bounded total rewards assumption, the second inequality holds due to Definition~\ref{def:ged}, and the last inequality holds due to Line~\ref{ln:hat_sigma} in Algorithm~\ref{alg:main} and Definition~\ref{def:oracle}. 

We further define $ \text{var} (V - V^*_{h+1}) $ as
\begin{align*}
    \text{var} (V - V^*_{h+1})&:= \sum_{i\in[K]} \EE \Big[\frac{\hat{\ind}_h}{(\bar{\sigma}_{i,h})^4} \Big( f(s_h^i,a_h^i) - \bar f(s_h^i,a_h^i)\Big)^2 \eta_h^k( \hat V_{h+1} - V^*_{h+1})^2 \Big]\\
    & \le  L^2 K / \alpha^4 .
\end{align*}
{By the definition of the indicator function}, we have
\begin{align*}
    \text{var} (V - V^*_{h+1}) \le \frac{4}{\eta} \sum_{i\in[K]}\frac{\hat{\ind}_h}{(\bar{\sigma}_{i,h})^2} \Big( f(s_h^i,a_h^i) - \bar f(s_h^i,a_h^i)\Big)^2 
\end{align*}
For fixed $f$, $\bar f$, by applying Lemma~\ref{lemma:freedman-variant} with $V^2 = L^2 K / \alpha^4, M = 2L/\alpha^2, v = \eta^{-1/2},m=v^2$, and probability at least {$ 1 - \delta / (N_{\cF}(\epsilon)^2 N_{\cV}(\epsilon) H )$} we have
\begin{align*}
& \sum_{i \in [K]}\frac{\hat{\ind}_h}{(\bar{\sigma}_{i,h})^2} \Big( f(s_h^i,a_h^i) - \bar f(s_h^i,a_h^i)\Big) \eta_h^k( V - V^*_{h+1}) \\
& \le \iota \sqrt{2 \left(2\text{var}(V - V_{h + 1}^*) + \eta^{-1}\right)} \\
&\ + \frac{2}{3} \iota^2 \bigg(4 \gamma^{-2} \sqrt{\sum_{i\in[K]} \frac{\hat\ind_{h}}{(\bar \sigma_{i, h})^2} \left(f(s_h^i, a_h^i) - \bar f(s_h^i, a_h^i)\right)^2 + \lambda} + \eta^{-1}\bigg),
\end{align*}
where  
\begin{align*} 
\iota^2(k, h, \delta) &:= \log \frac{N_{\cF}(\epsilon)^2 \cdot N_{\cV}(\epsilon) \cdot (\log(L^2 K \eta / \alpha^4) +2 ) \cdot (\log(2L\eta / \alpha^2) + 2)}{\delta / H} 
\end{align*}
Using a union bound over all {$(f, \bar f, V) \in \cC (\cF_h, \epsilon) \times \cC(\cF_h,\epsilon) \times \cC(\cV_{h=1}, \epsilon)$}, we have the inequality above holds for all such $f,\bar f, V$ with probability at least $1 - \delta / H$. There exist a $V_{h+1}^c$ in the $\epsilon$-net such that $\| \hat V_{h+1} - V_{h+1}^c\| \le \epsilon$. Then we have
\begin{align}
    & \sum_{i \in [K]}\frac{\hat{\ind}_h}{(\bar{\sigma}_{i,h})^2} \Big( \hat f_h (s_h^i,a_h^i) - \cT_h \hat{V}_{h+1} (s_h^i,a_h^i)\Big) \eta_h^k( \hat V_{h+1} - V^*_{h+1}) \nonumber \\
    & \le O\bigg({\iota(k, h, \delta)}\eta^{-1/2} + {\iota(k, h, \delta)^2}\gamma^{-2}\bigg) \cdot\sqrt{\sum_{\tau\in[K]} \frac{\hat \ind_{h}}{(\bar \sigma_{\tau, h})^2} (\hat f_{h}(s_h^\tau, a_h^\tau) -  \cT_h V_{h+1}(s_h^\tau, a_h^\tau))^2 + \lambda} \nonumber \\
    &\  + O(\epsilon k L / \alpha^2) + O({\iota^2(k, h, \delta)}\eta^{-1}) + O({\iota(k, h, \delta)}\eta^{-1/2}).\label{eq:plan-event-1}
\end{align}
For the second term in \eqref{eq:decomposition}, applying Lemma~\ref{lemma:hoeffding-variant}, for fixed $f$, $\bar f$, and $V^*_{h+1}$, with probability at least $1-\delta$, we have
\begin{align*}
& \sum_{i\in[K]}\frac{\hat \ind_{h}}{(\bar{\sigma}_{i,h})^2} \Big( f(s_h^i,a_h^i) - \bar f (s_h^i,a_h^i)\Big) \eta_h^k(V^*_{h+1}) \\
    & \le  \frac{1}{4} \sum_{i\in[K]}\frac{\hat \ind_{h}}{(\bar{\sigma}_{i,h})^2} \Big( f(s_h^i,a_h^i) - \bar f (s_h^i,a_h^i)\Big)^2 + \frac{8}{\alpha^2} \cdot \log \frac{1}{\delta}.
\end{align*}
Applying a union bound, for any $k$, with probability at least $1-\delta$, we have for all $f^a,f^b$ in the $\epsilon$-net $\cF_h$
\begin{align*}
    I(f^a,f^b,V_{h+1}^*) \le \frac{1}{4}\sum_{i\in[k-1]} \frac{\hat \ind_{i,h}}{(\bar{\sigma}_{i,h})^2} \Big( f^a(s_h^i,a_h^i) - f^b (s_h^i,a_h^i)\Big)^2 + \frac{8}{\alpha^2} \cdot \log\frac{ N_{\cF}(\epsilon)^2}{\delta}.
\end{align*}
Therefore, with probability at least $1-\delta$, we have
\begin{align}
    & I(\hat{f}_{h},\cT_{h}V_{h+1},V_{h+1}^*) \le I(f^a,f^b,V_{h+1}^*) + 8\epsilon \cdot K / \alpha^2 \nonumber \\
    & \le \frac{1}{4}\sum_{i\in[K]} \frac{\hat \ind_{h}}{(\bar{\sigma}_{i,h})^2} \Big( f^a(s_h^i,a_h^i) - f^b (s_h^i,a_h^i)\Big)^2 + \frac{8}{\alpha^2} \cdot \log\frac{\cdot N_\cF(\epsilon)^2}{\delta} + 8\epsilon \cdot k / \alpha^2 \nonumber \\
    & \le \frac{1}{4}\sum_{i\in[K]} \frac{\hat \ind_{h}}{(\bar{\sigma}_{i,h})^2} \Big( \hat{f}_{h}(s_h^i,a_h^i) - \cT_h V_{h+1} (s_h^i,a_h^i)\Big)^2 + \frac{8}{\alpha^2} \cdot \log\frac{N_\cF(\epsilon)^2}{\delta} \nonumber \\
    & \qquad + 8\epsilon \cdot k / \alpha^2 + 2L\epsilon\cdot k / \alpha^2 \nonumber \\
    & \le \frac{1}{4}\sum_{i\in[K]} \frac{\hat \ind_{h}}{(\bar{\sigma}_{i,h})^2} \Big( \hat{f}_{h}(s_h^i,a_h^i) - \cT_h V_{h+1} (s_h^i,a_h^i)\Big)^2 + \frac{8}{\alpha^2} \cdot \log\frac{ N_\cF(\epsilon)^2}{\delta} + 10L\epsilon\cdot k / \alpha^2.\label{eq:plan-event-2}
\end{align}
Taking $\eta=\log N_{\cV}(\epsilon)$, $\gamma = \tilde{O} \big(\sqrt{\log N_{\cV}(\epsilon)}\big)$ and $\alpha = 1 / \sqrt{H}$ and substituting \eqref{eq:plan-event-1} and \eqref{eq:plan-event-2} back into \eqref{eq:decomposition}, we have
\begin{align*}
& \lambda + \sum_{i \in [K]} \frac{\ind_{h}}{\bar \sigma_{i, h}^2}\left(\hat f_{h}(s_h^i, a_h^i) - {\cT}_h \hat{V}_{h + 1}(s_h^i, a_h^i)\right)^2 \\
    & \le O\bigg( H \log N_{\cF}(\epsilon)\bigg) +  O\bigg(  (\log N_{\cV}(\epsilon))^{-1} \log \frac{(\log(L^2 K / \alpha^4) +2 ) \cdot (\log(2L/ \alpha^2) + 2)}{\delta / H}  \bigg)    + O(\lambda).
\end{align*}
\end{proof}

\begin{lemma}\label{lm:over-optimism}
     On the event $\cE^E \cap \cE_h^P  $, for any $h\in[H]$, we have
    \begin{align}
        V_h^*(s;r) + V_{k,h}(s) \ge \hat{V}_h (s).\label{eq:over-optimism}
    \end{align}
\end{lemma}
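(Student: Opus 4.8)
The plan is to prove \eqref{eq:over-optimism} by backward induction on $h$, running from $h=H+1$ down to $h=1$. The base case is immediate since $\hat V_{H+1}\equiv 0$, $V_{k,H+1}\equiv 0$, and $V_{H+1}^*(\cdot;r)\equiv 0$. For the inductive step I fix a state $s$ and let $a=\pi_h(s)=\argmax_{a'}\hat Q_h(s,a')$ be the greedy action chosen in the planning phase, so that $\hat V_h(s)=\hat Q_h(s,a)=\min\{\hat f_h(s,a)+b_h(s,a),1\}$. The strategy is to derive a one-sided \emph{upper} bound on $\hat f_h(s,a)+b_h(s,a)$ in terms of the true Bellman backup of $\hat V_{h+1}$ plus the \emph{exploration} bonus $b_{k,h}(s,a)$, a matching \emph{lower} bound on $V_{k,h}(s)$ in terms of the same $b_{k,h}(s,a)$, and then to glue them together through the induction hypothesis, taking care of the outer truncation at $1$.

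For the planning side, on $\cE_h^P$ the concentration bound of Lemma~\ref{lm:concentration-plan} gives $\hat f_h(s,a)\le \cT_h\hat V_{h+1}(s,a;r)+\beta^P D_{\cF_h}(z;z_{[K],h},\bar\sigma_{[K],h})=r_h(s,a)+\PP_h\hat V_{h+1}(s,a;r)+\beta^P D_{\cF_h}$. The crucial step is then to dominate both bonus-like terms by the exploration bonus, i.e. $\beta^P D_{\cF_h}+b_h(s,a)\le b_{k,h}(s,a)$. This uses $D_{\cF_h}\le \overline D_{\cF_h}$ and $b_h\le \beta^P\overline D_{\cF_h}$ to get $\beta^P D_{\cF_h}+b_h\le 2\beta^P\overline D_{\cF_h}(\cdot;z_{[K],h},\cdot)$, the monotonicity of $D_{\cF_h}$ in the historical dataset (the denominator in Definition~\ref{def:ged} only grows when $z_{[k-1],h}$ is extended to $z_{[K],h}$ with the same weights), and the oracle bound $\overline D\le C D$ of Definition~\ref{def:bonus-oracle}, so that $2\beta^P\overline D_{\cF_h}(\cdot;z_{[K],h},\cdot)\le 2\beta^E\overline D_{\cF_h}(\cdot;z_{[k-1],h},\cdot)=b_{k,h}$ once the parameter choices ensure $\beta^E\ge C\beta^P$ (recall $N_\cV\ge N_\cF$). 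This yields $\hat f_h(s,a)+b_h(s,a)\le r_h(s,a)+\PP_h\hat V_{h+1}(s,a;r)+b_{k,h}(s,a)$. For the exploration side, on $\cE^E$ the analogous pointwise concentration $|\hat f_{k,h}(s,a)-\cT_h V_{k,h+1}(s,a)|\le \beta^E D_{\cF_h}$ (obtained from the weighted-sum bound in $\cE^E$ via Definition~\ref{def:ged} and the completeness Assumption~\ref{assumption:complete}), combined with $b_{k,h}=2\beta^E\overline D_{\cF_h}\ge 2\beta^E D_{\cF_h}$ and $r_{k,h}=b_{k,h}/2=\beta^E\overline D_{\cF_h}\ge \beta^E D_{\cF_h}$, gives $\hat f_{k,h}(s,a)+b_{k,h}(s,a)\ge \PP_h V_{k,h+1}(s,a)+b_{k,h}(s,a)$; hence $V_{k,h}(s)\ge Q_{k,h}(s,a)\ge \min\{\PP_h V_{k,h+1}(s,a)+b_{k,h}(s,a),1\}$.

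Combining, I apply the induction hypothesis $\hat V_{h+1}\le V_{h+1}^*(\cdot;r)+V_{k,h+1}$ under $\PP_h$ to obtain $\hat V_h(s)\le \min\{A+B,1\}$, where $A:=r_h(s,a)+\PP_h V_{h+1}^*(s,a;r)=Q_h^*(s,a;r)\le V_h^*(s;r)$ and $B:=\PP_h V_{k,h+1}(s,a)+b_{k,h}(s,a)$, while the exploration bound reads $V_{k,h}(s)\ge\min\{B,1\}$. A short case analysis on whether $A+B\le 1$ closes the induction: if $A+B\le 1$ then $B\le 1$, so $V_{k,h}(s)\ge B$ and $\hat V_h(s)\le A+B\le V_h^*(s;r)+V_{k,h}(s)$; if $A+B>1$ then $\hat V_h(s)\le 1$, and either $B\ge 1$ (whence $V_{k,h}(s)\ge 1$) or $B<1$ with $A>1-B$ (whence $V_h^*(s;r)+V_{k,h}(s)\ge A+B>1$), so $\hat V_h(s)\le V_h^*(s;r)+V_{k,h}(s)$ in every case. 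I expect the main obstacle to be the bonus-domination step $\beta^P D_{\cF_h}+b_h\le b_{k,h}$, because it must simultaneously reconcile the two distinct historical datasets $z_{[k-1],h}$ and $z_{[K],h}$ (via monotonicity of $D_{\cF_h}$), the oracle approximation constant $C$, and the ordering $\beta^E\ge C\beta^P$; the truncation bookkeeping in the final case analysis is the secondary point requiring care.
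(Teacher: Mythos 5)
Your proof is correct and follows essentially the same route as the paper's: backward induction over $h$ combining the exploration and planning concentration bounds (Lemmas~\ref{lm:concentration-exp} and~\ref{lm:concentration-plan}), the induction hypothesis pushed through $\PP_h$, and domination of the planning-phase bonus terms by the exploration bonus $b_{k,h}$ via $D_{\cF_h}\le \overline D_{\cF_h}\le C\,D_{\cF_h}$, monotonicity of $D_{\cF_h}$ in the dataset, and the ordering of $\beta^E$ versus $\beta^P$. The only cosmetic difference is the truncation bookkeeping: you fix the greedy action and run a case analysis on $\min\{A+B,1\}$, whereas the paper observes $Q_h^*+1\ge \hat Q_h$ and then proves the untruncated inequality $Q_h^*+\hat f_{k,h}+b_{k,h}\ge \hat Q_h$ in one chained decomposition.
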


\begin{lemma}\label{lm:indicator}
    On the event $\cE^{E} \cap \cE_{h+1}^P$, for each episode $k \in [K]$, we have
    \begin{align*}
        \log N_{\cV }(\epsilon) \cdot [\VV_h (\hat V_{h+1} - V^*_{h+1})](s_h^k,a_h^k) \le \sigma_{k,h}^2,
    \end{align*}
    where $\sigma_{k,h}^2 = 4 \log N_\cV(\epsilon) \cdot \min\{\hat f_{k,h} (s_h^k,a_h^k),1  \}$.
\end{lemma}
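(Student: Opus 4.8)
The plan is to bound the conditional variance by a conditional second moment and then control that second moment through the regression estimate $\hat f_{k,h}$. Dividing both sides of the claim by $\log N_{\cV}(\epsilon)$ reduces it to showing $[\VV_h(\hat V_{h+1} - V^*_{h+1})](s_h^k, a_h^k) \le 4\min\{\hat f_{k,h}(s_h^k, a_h^k), 1\}$. Since a variance never exceeds the corresponding second moment, I would begin from
\begin{align*}
[\VV_h(\hat V_{h+1} - V^*_{h+1})](s_h^k, a_h^k) \le [\PP_h(\hat V_{h+1} - V^*_{h+1})^2](s_h^k, a_h^k),
\end{align*}
which follows directly from the definition $[\VV_h V] = [\PP_h V^2] - [\PP_h V]^2$.

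Next I would sandwich the gap $W := \hat V_{h+1} - V^*_{h+1}$. On $\cE_{h+1}^P$ the optimism argument of Lemma~\ref{lm:optimism-plan}, run by backward induction from stage $h+1$ (which only invokes the concentration at stages $h' \ge h+1$ contained in $\cE_{h+1}^P$), gives $V_{h+1}^*(s;r) \le \hat V_{h+1}(s)$, so $W \ge 0$; and since $\hat V_{h+1} \le 1$ we also have $W \le 1$. On $\cE^E \cap \cE_{h+1}^P$ the over-optimism bound of Lemma~\ref{lm:over-optimism} at stage $h+1$ gives $\hat V_{h+1}(s) \le V_{h+1}^*(s;r) + V_{k,h+1}(s)$, i.e. $W \le V_{k,h+1} \le 1$ pointwise. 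Because $W \in [0,1]$ pointwise we have $W^2 \le W \le V_{k,h+1}$, and hence $[\PP_h W^2](s_h^k, a_h^k) \le [\PP_h V_{k,h+1}](s_h^k, a_h^k)$.

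The crux is relating $[\PP_h V_{k,h+1}]$ to $\hat f_{k,h}$, which is exactly where the design choice $r_{k,h} = b_{k,h}/2 = \beta^E \overline{D}_{\cF_h}$ pays off. By completeness (Assumption~\ref{assumption:complete}) $\cT_h V_{k,h+1} \in \cF_h$, so taking $f_1 = \hat f_{k,h}$, $f_2 = \cT_h V_{k,h+1}$ in Definition~\ref{def:ged} and invoking $\cE^E$ yields the pointwise concentration $|\hat f_{k,h}(z) - \cT_h V_{k,h+1}(z)| \le \beta^E D_{\cF_h}(z; z_{[k-1],h}, \bar\sigma_{[k-1],h})$, the exploration-phase analogue of Lemma~\ref{lm:concentration-plan}. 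Since $r_{k,h} \ge 0$ we have $[\PP_h V_{k,h+1}] = \cT_h V_{k,h+1} - r_{k,h}$, and the bonus oracle guarantee $\overline{D}_{\cF_h} \ge D_{\cF_h}$ (Definition~\ref{def:bonus-oracle}) gives $r_{k,h} = \beta^E \overline{D}_{\cF_h} \ge \beta^E D_{\cF_h}$. Writing $z = (s_h^k, a_h^k)$ and combining,
\begin{align*}
[\PP_h V_{k,h+1}](z) = \cT_h V_{k,h+1}(z) - r_{k,h}(z) \le \hat f_{k,h}(z) + \beta^E D_{\cF_h}(z) - \beta^E D_{\cF_h}(z) = \hat f_{k,h}(z),
\end{align*}
while $V_{k,h+1} \le 1$ gives $[\PP_h V_{k,h+1}](z) \le 1$; hence $[\PP_h V_{k,h+1}](z) \le \min\{\hat f_{k,h}(z), 1\}$. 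Chaining the three bounds yields $[\VV_h W](z) \le \min\{\hat f_{k,h}(z),1\} \le 4\min\{\hat f_{k,h}(z),1\}$, and multiplying by $\log N_{\cV}(\epsilon)$ recovers $\sigma_{k,h}^2$.

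I expect the main obstacle to be the over-optimism sandwich $\hat V_{h+1} - V_{h+1}^* \le V_{k,h+1}$, since it couples the planning-phase optimistic estimate with the exploration-phase intrinsic value function; fortunately it is already isolated as Lemma~\ref{lm:over-optimism} and can be used as a black box here. The one delicate point internal to this argument is that the cancellation in the last display is exact only for the idealized concentration: any residual constants from the bonus-oracle factor $C$ in Definition~\ref{def:bonus-oracle} or from the $\epsilon$-net discretization are absorbed by the slack factor $4$ in $\sigma_{k,h}^2 = 4\log N_{\cV}(\epsilon)\min\{\hat f_{k,h},1\}$, which is why the stated bound carries a $4$ although the clean chain above produces constant $1$.
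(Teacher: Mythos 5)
Your proof is correct and takes essentially the same route as the paper's: bound the variance by the second moment, sandwich $0 \le \hat V_{h+1} - V^*_{h+1} \le V_{k,h+1}$ via Lemma~\ref{lm:optimism-plan} and Lemma~\ref{lm:over-optimism} (including the same observation that only $\cE_{h+1}^P$ is needed for optimism at stage $h+1$), then use the Bellman identity together with the exploration-phase concentration (Lemma~\ref{lm:concentration-exp}) and the bonus-oracle domination $\overline{D}_{\cF_h} \ge D_{\cF_h}$ so that the intrinsic reward cancels the concentration error, leaving $\hat f_{k,h}$. The only differences are cosmetic: you use $W^2 \le W$ where the paper uses the looser $W^2 \le 2W$ (so you end with constant $1$ instead of the paper's $2\hat f_{k,h}$), and you make the $\min\{\cdot,1\}$ truncation explicit, both of which sit comfortably inside the slack factor $4$ in $\sigma_{k,h}^2$.
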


\begin{proof}[Proof of Lemma~\ref{lm:event-plan}]
    Recall that the indicator function in event $\overline{\cE}^P$ is 
    \begin{align*}
        \hat{\ind}_h = & \underbrace{\ind (V^*_{h+1}(s) \le \hat V_{h+1}(s),~\forall s \in \cS)}_{I_1} \cdot \underbrace{\ind (\hat V_{h+1}(s) \le V_{k,h+1}(s)+V^*(s;r),~\forall s \in \cS,\forall k \in  [K])}_{I_2}  \\
        & \cdot \underbrace{\ind([\VV_h (\hat V_{h+1} - V^*_{h+1})](s_h^k,a_h^k) \le \eta^{-1} \bar \sigma_{k,h}^2,~\forall k \in  [K])}_{I_3},
    \end{align*}
    where $\eta = \log N_{\cV} (\epsilon)$. Lemma~\ref{lm:over-optimism}, Lemma~\ref{lm:optimism-plan}, and Lemma~\ref{lm:indicator} indicate that $I_1 = I_2 = I_3=1$. 
\end{proof}

\begin{proof}[Proof of Lemma~\ref{lemma:opt-cover}]
   There exists an $\epsilon / 2$-net of $\cF$, denoted by $\cC(\cF_h, \epsilon / 2)$, such that for any $f \in \cF_h$, we can find $f' \in \cC(\cF, \epsilon / 2)$ such that $\|f - f'\|_\infty \le \epsilon / 2$. Also, there exists an $\epsilon / 2\beta$-net of $\cB$, $\cC(\cB, \epsilon / 2\beta)$. 

   Then we consider the following subset of $\cV_{h}$, 
   \begin{align*}
       \cV_{h}^c = \left\{V(\cdot)  = \max_{a \in \cA}  \min \big(1, f(\cdot, a) + \beta \cdot b(\cdot, a)\big) \bigg| f \in \cC(\cF_h, \epsilon / 2), b \in \cC(\cB, \epsilon / 2\beta) \right\}. 
   \end{align*}
   Consider an arbitrary $V \in \cV$ where $V = \max_{a \in \cA} \min(1, f_i(\cdot, a) + \beta \cdot b_i(\cdot, a))$. For each $f_i$, there exists $f_i^c \in \cC(\cF_h, \epsilon / 2)$ such that $\|f_i - f_i^c\|_\infty \le \epsilon / 2$. There also exists $b^c \in \cC(\cB, \epsilon / 2 \beta)$ such that $\|b_i - b^c\|_\infty\le \epsilon / 2\beta$. Let $V^c = \max_{a \in \cA} \min(1, f_i^c(\cdot, a) + \beta \cdot b^c(\cdot, a)) \in \cV^c$. It is then straightforward to check that $\|V - V^c\|_\infty \le \epsilon / 2 + \beta \cdot \epsilon / 2\beta = \epsilon$. 

   By direct calculation, we have $|\cV_h^c| = N_{\cF_h}( \epsilon / 2) \cdot N_\cB(\epsilon / 2\beta)$. 
\end{proof}

\begin{proof}[Proof of Lemma~\ref{lm:concentration-plan}]
    According to the definition of $D_\cF^2$ function, we have
\begin{align}
    &\big(\hat{f}_{k,h}(s,a)-\cT_h V_{k,h+1}(s,a)\big)^2\notag\\
    &\leq D_{\cF_h}^2(z; z_{[k - 1],h}, \bar\sigma_{[k - 1],h})\times \bigg(\lambda + \sum_{i=1}^{k-1} \frac{1}{(\bar\sigma_{i, h})^2}\left(\hat f_{k, h}(s_{h}^i, a_{h}^i) - \cT_{h} V_{k, h + 1}(s_{h}^i, a_{h}^i)\right)^2\bigg)\notag\\
    &\leq {(\beta^E)}^2 \times D_{\cF_h}^2(z; z_{[k - 1],h}, \bar\sigma_{[k - 1],h}),\notag
\end{align}
   where the first inequality holds due the definition of $D_\cF^2$ function with the Assumption \ref{assumption:complete} and the second inequality holds due to the events $\cE_{h}^{E}$. Thus, we have
   \begin{align*}
       \big|\hat{f}_{k,h}(s,a)-\cT_h V_{k,h+1}(s,a)\big|\leq {\beta^E} D_{\cF_h}(z; z_{[k - 1],h}, \bar\sigma_{[k - 1],h}).
   \end{align*}
\end{proof}

\begin{proof}[Proof of Lemma~\ref{lm:optimism-plan}]
    We prove this statement by induction. Note that $V_{H+1}^*(s;r) = \hat{V}_{H+1}(s)$. 
    Assume that the statement holds for $h+1$. If $\hat{V}_{h}(s)=1$, then the statement holds trivially for $h$; otherwise, we have for any $(s,a)\in \cS \times \cA$ that 
    \begin{align*}
        & \hat Q_h (s,a) - Q^*_h (s,a;r) \\
        & =  \hat f_h (s,a) + b_h (s,a) - [ r_h (s,a;r) + \PP_h V^*_{h+1}(s,a;r)] \\
        & =  [\hat f_h (s,a) -  r_h (s,a;r) - \PP_h \hat V_{h+1}(s,a;r)] + b_h (s,a) + \PP_h \hat V_{h+1}(s,a;r) - \PP_h V^*_{h+1}(s,a;r) \\
        & \ge [\hat f_h (s,a) -  r_h (s,a;r) - \PP_h \hat V_{h+1}(s,a;r)] + b_h (s,a) \\
        & \ge - \beta^P D_{\cF_{h}}(z;z_{[K],h},\bar{\sigma}_{[K],h}) + \beta^P \overline{D}_{\cF_{h}}(z;z_{[K],h},\bar{\sigma}_{[K],h}) \\
        & \ge 0,
    \end{align*}
    where the first inequality holds due to the induction assumption, and the second inequality holds due to Lemma~\ref{lm:concentration-plan}.
\end{proof}

In order to prove Lemma~\ref{lm:value-sum}, we need the following three lemmas.
\begin{lemma}[Simulation Lemma]\label{lm:simulation}
On the event $\underline{\cE}^E$, we have
\begin{align*}
    0 \le {V}_{k,h} (s_h^k) \le \EE_{\tau_h^k \sim d_h^{\pi^k}(s_h^k) } \min \Big\{ 3 {\beta}^E \sum_{h' = h}^{H} \overline{\cD}(z_{k,h'};z_{[k-1],h'},\overline{\sigma}_{[k-1],h'}), 1 \Big\}.
\end{align*}
\end{lemma}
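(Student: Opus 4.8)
The plan is to prove both inequalities by backward induction on $h$, from $h=H+1$ down, exploiting the optimistic construction of $Q_{k,h}$ in Lines~\ref{ln:Q}--\ref{ln:exploration_policy}. Throughout I work on the event $\underline{\cE}^E$, which (exactly as in the planning-phase concentration bound of Lemma~\ref{lm:concentration-plan}) yields the exploration-phase estimate $|\hat f_{k,h}(s,a)-\cT_h V_{k,h+1}(s,a)|\le \beta^E D_{\cF_h}(z;z_{[k-1],h},\bar\sigma_{[k-1],h})$ for every $(s,a)$. The lower bound $V_{k,h}\ge 0$ is the easy direction: since $r_{k,h}=\beta^E\overline{\cD}_{\cF_h}\ge 0$ and, by induction, $V_{k,h+1}\ge 0$, we have $\cT_h V_{k,h+1}\ge 0$; combining the concentration estimate with $D_{\cF_h}\le\overline{\cD}_{\cF_h}$ (Definition~\ref{def:bonus-oracle}) and $b_{k,h}=2\beta^E\overline{\cD}_{\cF_h}$ gives $\hat f_{k,h}+b_{k,h}\ge \cT_h V_{k,h+1}+\beta^E\overline{\cD}_{\cF_h}\ge 0$, so $Q_{k,h}=\min\{\hat f_{k,h}+b_{k,h},1\}\ge 0$ and hence $V_{k,h}=\max_a Q_{k,h}\ge 0$.

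For the upper bound, the key per-step inequality comes from expanding $V_{k,h}(s)=\min\{\hat f_{k,h}(s,a^\ast)+b_{k,h}(s,a^\ast),1\}$ with $a^\ast=\pi_h^k(s)$ and then overestimating $\hat f_{k,h}$ via the concentration bound. Using $\beta^E D_{\cF_h}\le \beta^E\overline{\cD}_{\cF_h}$, the bonus identity $b_{k,h}=2\beta^E\overline{\cD}_{\cF_h}$, and the intrinsic-reward identity $r_{k,h}=\beta^E\overline{\cD}_{\cF_h}$ to rewrite $\cT_h V_{k,h+1}=r_{k,h}+[\PP_h V_{k,h+1}]$, all contributions become proportional to $\overline{\cD}_{\cF_h}$ and collapse into a single per-step overhead, giving $V_{k,h}(s)\le \min\{[\PP_h V_{k,h+1}](s,a^\ast)+c\,\beta^E\overline{\cD}_{\cF_h}(z_{k,h}),1\}$ with $c$ a constant matching the factor in the statement. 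Monotonicity of $x\mapsto\min\{x,1\}$ and of $V\mapsto[\PP_h V]$ then lets me insert the inductive hypothesis for $V_{k,h+1}$ inside $[\PP_h V_{k,h+1}]$.

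Unrolling this recursion along a trajectory drawn from $\pi^k$ started at $s_h^k$, and using the tower property over $s_{h'+1}\sim\PP_{h'}(\cdot\mid s_{h'},a_{h'})$, accumulates the per-step terms into $3\beta^E\sum_{h'=h}^H\overline{\cD}_{\cF_h}(z_{k,h'})$ under the truncation. The main obstacle is handling $\min\{\cdot,1\}$ consistently through this unrolling. The clean backward induction naturally produces the per-step-clipped truncated value function of Definition~\ref{def:truncated_optimal} applied to the reward $3\beta^E\overline{\cD}_{\cF_h}$, in which each clip sits outside the corresponding transition expectation, whereas the statement carries a single truncation inside the expectation over the whole trajectory. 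Because $x\mapsto\min\{x,1\}$ is concave, these two forms differ by a Jensen gap (clipping outside an expectation dominates clipping inside it), so the conversion must be carried out carefully — leaning on the elementary identity $\min\{a+b,1\}=\min\{a+\min\{b,1\},1\}$ for $a,b\ge 0$ and on the bound $V_{k,h}\le 1$ — rather than by naively commuting the clip past the expectation. This truncation bookkeeping, and not any concentration estimate, is the delicate part of the argument, and it is precisely what makes the resulting $\min\{\cdot,1\}$ terms amenable to the generalized-eluder-dimension summation used later in Lemma~\ref{lm:value-sum}.
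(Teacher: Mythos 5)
Your backward induction and use of the exploration concentration event are essentially the paper's argument (the paper's own proof does not even address the lower bound $V_{k,h}\ge 0$, which you handle correctly), but the step you yourself flag as the delicate part is a genuine gap, and it cannot be closed. Your induction yields the per-step-clipped bound $V_{k,h}(s)\le\min\{c\,\beta^E\overline{\cD}_{\cF_h}(s,a^*)+[\PP_h V_{k,h+1}](s,a^*),1\}$, a truncated value function in the sense of Definition~\ref{def:truncated_optimal}, in which every clip sits \emph{outside} the corresponding transition expectation, whereas the statement clips once, \emph{inside} the expectation over the whole trajectory. As you note, concavity of $x\mapsto\min\{x,1\}$ makes the per-step-clipped quantity dominate the trajectory-clipped one; but that is precisely why no conversion is possible: an upper bound on $V_{k,h}$ by the larger quantity cannot imply an upper bound by the smaller one, and the identity $\min\{a+b,1\}=\min\{a+\min\{b,1\},1\}$ only rearranges nested clips --- it never moves a clip across an expectation, which is where the Jensen gap lives. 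To see the gap is real, take $H=2$, a single action, $s_1$ transiting to $A$ or $B$ with probability $1/2$ each, $\beta^E\overline{\cD}_{\cF_1}(s_1,\cdot)=1/8$, $\beta^E\overline{\cD}_{\cF_2}(A,\cdot)\approx 0$ (well explored), $\beta^E\overline{\cD}_{\cF_2}(B,\cdot)=25$ (unexplored), and exact regression $\hat f_{k,h}=\cT_h V_{k,h+1}$, which is consistent with $\cE^E$. Then $V_{k,2}(A)=0$, $V_{k,2}(B)=\min\{25+50,1\}=1$, so $V_{k,1}(s_1)=\min\{\tfrac18+\tfrac12+\tfrac14,1\}=\tfrac78$, while the right-hand side of the lemma is $\tfrac12\min\{\tfrac38,1\}+\tfrac12\cdot 1=\tfrac{11}{16}<\tfrac78$. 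So your final step fails; indeed the statement as literally written (clip inside the trajectory expectation) is stronger than what the algorithm's construction guarantees.

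Two further points. First, your per-step constant is not $3$: the honest count is regression error ($\le\beta^E\overline{\cD}_{\cF_h}$ by the event and Definition~\ref{def:bonus-oracle}) plus intrinsic reward $r_{k,h}=\beta^E\overline{\cD}_{\cF_h}$ plus bonus $b_{k,h}=2\beta^E\overline{\cD}_{\cF_h}$, i.e.\ $4\beta^E\overline{\cD}_{\cF_h}$ per step --- the factor the paper itself uses inside Lemma~\ref{lm:expection-sum}; writing ``a constant matching the factor in the statement'' hides this. Second, for calibration: the paper's own proof never touches the $\min$ at all (it unrolls, bounds $V_{k,h}(s_h^k)$ by $\EE_\tau\sum_{h'}[\cdots]$ without truncation, and asserts the clipped statement), and it gets $3$ by dropping the intrinsic-reward term in the unrolled sum, so it shares both defects in unacknowledged form. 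What is both provable and sufficient for the only downstream use (the second line of the proof of Lemma~\ref{lm:value-sum}) is the per-step form $V_{k,h}(s_h^k)\le\EE_{\tau_h^k\sim d_h^{\pi^k}(s_h^k)}\sum_{h'=h}^{H}\min\{4\beta^E\overline{\cD}_{\cF_{h'}}(z_{k,h'};z_{[k-1],h'},\bar\sigma_{[k-1],h'}),1\}$, which your induction --- keeping the clip per step and never commuting it past an expectation --- does establish. If you replace your final conversion step by this weaker conclusion, your proof is correct, and more careful about the truncation than the paper's.
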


\begin{lemma}\label{lemma:bonus-sum}
   [Lemma~C.13 in \citet{zhao2023nearly}]For any parameters $\beta \ge 1$ and stage $h\in [H]$, the summation of confidence radius over episode $k\in[K]$ is upper bounded by
   \begin{align*}
    & \sum_{k=1}^K\min\Big(\beta D_{\cF_h}(z; z_{[k - 1],h}, \bar\sigma_{[k - 1],h}),1\Big)\\
    & \leq (1+C\beta\gamma^2) \dim_{\alpha, K}(\cF_h) + 2 \beta \sqrt{\dim_{\alpha, K}(\cF_h)} \sqrt{\sum_{k=1}^K (\sigma_{k,h}^2+\alpha^2)},
   \end{align*}
   where $z=(s,a)$ and $z_{[k - 1],h}=\{z_{1,h},z_{2,h},..,z_{k-1,h}\}$.
\end{lemma}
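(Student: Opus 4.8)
The plan is to treat this as a weighted elliptical-potential (generalized eluder) argument whose entire content is a term-by-term comparison of each summand against the eluder budget. First I would set the shorthand $D_k := D_{\cF_h}(z_{k,h}; z_{[k-1],h}, \bar\sigma_{[k-1],h})$ and $u_k := \min\big(1, D_k^2/\bar\sigma_{k,h}^2\big)$, and note that by Definition~\ref{def:ged} together with the fact that Line~\ref{ln:hat_sigma} enforces $\bar\sigma_{k,h}\ge\alpha$ for every $k$, we have $\sum_{k=1}^K u_k = \dim(\cF_h, Z_h, \bar\sigma_{[K],h}) \le \dim_{\alpha,K}(\cF_h)$. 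This is the only place the eluder dimension is used, so everything reduces to bounding $\min(\beta D_k,1)$ by a multiple of $u_k$ or by a Cauchy-Schwarz-friendly expression.

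The core step is a three-way partition of $[K]$. For rounds with $D_k \ge \bar\sigma_{k,h}$ (equivalently $u_k=1$) I bound $\min(\beta D_k,1)\le 1 = u_k$, so these contribute at most $\dim_{\alpha,K}(\cF_h)$, the leading term. For the remaining rounds, where $D_k<\bar\sigma_{k,h}$ and hence $u_k = D_k^2/\bar\sigma_{k,h}^2$, I split according to which term realizes the maximum in Line~\ref{ln:hat_sigma}. When the \emph{epistemic} term dominates, $\bar\sigma_{k,h}^2 = \gamma^2\overline D_{\cF_h}(z_{k,h};\cdots)$, I invoke $D_{\cF_h}\le\overline D_{\cF_h}\le C\,D_{\cF_h}$ from Definition~\ref{def:bonus-oracle} to get $u_k = D_k^2/(\gamma^2\overline D_{\cF_h}) \ge D_k/(C\gamma^2)$, whence $\min(\beta D_k,1)\le \beta D_k \le C\beta\gamma^2 u_k$; summing gives the $C\beta\gamma^2\dim_{\alpha,K}(\cF_h)$ term. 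When instead the \emph{aleatoric} term dominates, $\bar\sigma_{k,h}=\max\{\sigma_{k,h},\alpha\}\le\sqrt{\sigma_{k,h}^2+\alpha^2}$, I write $\min(\beta D_k,1)\le\beta D_k = \beta\bar\sigma_{k,h}\sqrt{u_k}\le\beta\sqrt{\sigma_{k,h}^2+\alpha^2}\,\sqrt{u_k}$ and apply Cauchy-Schwarz over these rounds to obtain $\beta\sqrt{\dim_{\alpha,K}(\cF_h)}\,\sqrt{\sum_k(\sigma_{k,h}^2+\alpha^2)}$. Adding the three contributions reproduces the claimed bound (the factor $2$ on the last term is harmless slack).

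The step I expect to be the main obstacle is the epistemic group. A naive Cauchy-Schwarz that factors out $\sqrt{\sum_k\bar\sigma_{k,h}^2}$ is circular, since $\bar\sigma_{k,h}^2\gtrsim\gamma^2 D_k$ reintroduces an uncontrolled $\sum_k D_k$ on the right-hand side. The resolution is precisely to convert the epistemic rounds into a \emph{count} charged against the eluder budget: the inequality $\overline D_{\cF_h}\le C\,D_{\cF_h}$ shows each such round consumes at least $D_k/(C\gamma^2)$ of $\sum_k u_k$, so their total is $O(\beta\gamma^2\dim_{\alpha,K}(\cF_h))$ rather than a raw sum of $D_k$. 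I would also verify $\gamma\ge 1$ so the truncation and regime bookkeeping are consistent, and track the oracle constant $C$ carefully since it enters the coefficient of $\beta\gamma^2$ in the final bound.
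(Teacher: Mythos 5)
The paper itself contains no proof of this lemma: it is imported wholesale as Lemma~C.13 of \citet{zhao2023nearly}, so your proposal is being compared against a citation rather than an in-paper argument. Your proof is correct and self-contained, and it reconstructs the standard weighted-eluder potential argument behind the cited result (with your notation $D_k$ and $u_k=\min(1,D_k^2/\bar\sigma_{k,h}^2)$). The key steps all check out: (i) since Line~\ref{ln:hat_sigma} enforces $\bar\sigma_{k,h}\ge\alpha$, the realized pair $(z_{[K],h},\bar\sigma_{[K],h})$ is admissible in the supremum of Definition~\ref{def:ged}, so $\sum_{k=1}^K u_k\le\dim_{\alpha,K}(\cF_h)$; (ii) on rounds with $u_k=1$ you pay $\min(\beta D_k,1)\le 1=u_k$; (iii) on rounds where the epistemic term realizes the max you correctly read Line~\ref{ln:hat_sigma} as $\bar\sigma_{k,h}^2=\gamma^2\overline D_{\cF_h}$ (first power of $\overline D$, since the algorithm uses $\overline D^{1/2}_{\cF_h}$ there --- a point that is easy to get wrong) and combine it with $\overline D_{\cF_h}\le C\, D_{\cF_h}$ from Definition~\ref{def:bonus-oracle} to get $\beta D_k\le C\beta\gamma^2 u_k$, which is precisely the observation needed to avoid the circular Cauchy--Schwarz you flag; (iv) on the remaining rounds $\bar\sigma_{k,h}=\max\{\sigma_{k,h},\alpha\}\le\sqrt{\sigma_{k,h}^2+\alpha^2}$, and Cauchy--Schwarz over those rounds gives $\beta\sqrt{\dim_{\alpha,K}(\cF_h)}\sqrt{\sum_{k=1}^K(\sigma_{k,h}^2+\alpha^2)}$. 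Summing the three groups yields the stated bound with coefficient $1$ rather than $2$ on the square-root term, so you even have slack, and neither $\beta\ge 1$ nor the $\gamma\ge1$ verification you mention is actually needed anywhere in the argument. The one dependence worth recording is that step (iii) uses the multiplicative accuracy constant $C$ of the bonus oracle, which is exactly why $C$ appears in the lemma's leading coefficient; your tracking of it is consistent with the paper's usage.
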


\begin{lemma} \label{lm:sigma_sum}
    Under event $\underline{\cE}^E$, we have
    \begin{align*}
        \sum_{k=1}^K \sum_{h=1}^H {\sigma}_{k,h}^2 
        &\le 2304 C^2 H^3 (\log N_{\cV}(\epsilon))^2  (\beta^E)^2 \dim_{\alpha, K}(\cF) \\
        & \qquad + 48 H^2 \log N_{\cV}(\epsilon)  (1+C\beta^E\gamma^2) \dim_{\alpha, K}(\cF_h) + 16 H \log N_{\cV}(\epsilon) \sqrt{2 H K \log(H/\delta)} + K.
    \end{align*}
\end{lemma}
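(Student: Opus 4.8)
The plan is to reduce the sum of aleatoric variances to a sum of exploration value functions, to unroll that value sum along the realized trajectories into a clipped-bonus sum plus a martingale, and then to close a self-referential inequality using the bonus-counting Lemma~\ref{lemma:bonus-sum}.

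First I would record that, by Line~\ref{ln:sigma}, $\sigma_{k,h}^2 = 4\log N_\cV(\epsilon)\cdot\min\{\hat f_{k,h}(s_h^k,a_h^k),1\}$, and that since $b_{k,h}\ge 0$ and $a_h^k=\argmax_{a} Q_{k,h}(s_h^k,a)$ we have $\min\{\hat f_{k,h}(s_h^k,a_h^k),1\}\le\min\{\hat f_{k,h}(s_h^k,a_h^k)+b_{k,h}(s_h^k,a_h^k),1\}=Q_{k,h}(s_h^k,a_h^k)=V_{k,h}(s_h^k)$. Hence $\sum_{k,h}\sigma_{k,h}^2\le 4\log N_\cV(\epsilon)\sum_{k,h}V_{k,h}(s_h^k)$, and everything reduces to bounding the value sum. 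Next I would set up a one-step recursion along the realized trajectory. On the exploration good event the analogue of Lemma~\ref{lm:concentration-plan} (which follows directly from the definition of $\cE_{k,h}^E$ and of $D_{\cF_h}^2$ in Definition~\ref{def:ged} together with Assumption~\ref{assumption:complete}) gives $|\hat f_{k,h}-\cT_h V_{k,h+1}|\le\beta^E D_{\cF_h}\le\beta^E\overline D_{\cF_h}=r_{k,h}$. Since $\cT_h V_{k,h+1}=r_{k,h}+[\PP_h V_{k,h+1}]$ and $b_{k,h}=2r_{k,h}$, this yields $\hat f_{k,h}\le b_{k,h}+[\PP_h V_{k,h+1}]$, and therefore $V_{k,h}(s_h^k)\le\min\{2b_{k,h}+[\PP_h V_{k,h+1}],1\}\le\bar b_{k,h}+V_{k,h+1}(s_{h+1}^k)+\xi_{k,h}$, where $\bar b_{k,h}:=\min\{2b_{k,h}(s_h^k,a_h^k),1\}$ and $\xi_{k,h}:=[\PP_h V_{k,h+1}](s_h^k,a_h^k)-V_{k,h+1}(s_{h+1}^k)$ is a one-step martingale difference with $|\xi_{k,h}|\le 1$ (here I use $\min\{x+y,1\}\le\min\{x,1\}+y$ for $y\ge0$, which keeps the clipping). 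This is exactly Lemma~\ref{lm:simulation} specialized to the realized path, and its advantage is that the resulting martingale differences are one-step.

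Unrolling with $V_{k,H+1}=0$, summing over $h$, and using $\sum_{h=1}^H\sum_{h'=h}^H x_{h'}=\sum_{h'}h' x_{h'}\le H\sum_{h'}x_{h'}$ then gives $\sum_{k,h}V_{k,h}(s_h^k)\le H\sum_{k,h'}\bar b_{k,h'}+\sum_{k,h'}h'\xi_{k,h'}$. The two terms are handled separately. For the martingale, Azuma--Hoeffding applied to $\sum_{k,h'}h'\xi_{k,h'}$ (differences bounded by $H$ over $KH$ terms, combined into a single martingale) gives, with high probability, $O(H^{3/2}\sqrt{K\log(H/\delta)})$, which after the prefactor $4\log N_\cV(\epsilon)$ produces the third term. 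For the bonus term I would write $\bar b_{k,h'}\le\min\{4C\beta^E D_{\cF_{h'}},1\}$ via the oracle property $\overline D_{\cF_{h'}}\le C D_{\cF_{h'}}$ of Definition~\ref{def:bonus-oracle}, apply Lemma~\ref{lemma:bonus-sum} with $\beta=4C\beta^E$ to bound $\sum_k\bar b_{k,h'}$, sum over $h'$ using $\sum_{h'}\dim_{\alpha,K}(\cF_{h'})=H\dim_{\alpha,K}(\cF)$, and invoke Cauchy--Schwarz to get $\sum_{h'}\sqrt{\dim_{\alpha,K}(\cF_{h'})}\sqrt{\sum_k(\sigma_{k,h'}^2+\alpha^2)}\le\sqrt{H\dim_{\alpha,K}(\cF)}\sqrt{S+K}$, where $S:=\sum_{k,h}\sigma_{k,h}^2$ and $\sum_{k,h'}\alpha^2=K$ since $\alpha=1/\sqrt H$.

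Putting these together produces a self-referential inequality $S\le A+a\sqrt{S+K}$, with $A$ collecting the $(1+C\beta^E\gamma^2)$-term and the martingale term and $a=O(C\beta^E\log N_\cV(\epsilon)\,H^{3/2}\sqrt{\dim_{\alpha,K}(\cF)})$. I would finish by setting $y=S+K$, so that $y\le(A+K)+a\sqrt y$ implies $y\le 2(A+K)+a^2$: the $a^2$ term becomes the leading $H^3(\log N_\cV(\epsilon))^2(\beta^E)^2\dim_{\alpha,K}(\cF)$ term, $2A$ the second and third terms, and the residue the additive $K$. The main obstacle is precisely this self-referential step together with the $H$-bookkeeping. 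The bonus-counting bound controls $\sum_k\bar b_{k,h'}$ only through $\sqrt{\sum_k\sigma_{k,h'}^2}$, i.e. through the very quantity being bounded, so the quadratic in $S$ must be closed carefully; and one must combine the one-step martingale differences into a single Azuma bound (rather than summing per-stage tail bounds) to obtain the $H^{3/2}$ rather than $H^{2}$ dependence in the deviation term.
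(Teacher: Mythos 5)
Your proposal is correct and follows essentially the same route as the paper's proof: reduce $\sum_{k,h}\sigma_{k,h}^2$ to a value/bonus sum via the exploration concentration event, unroll the optimistic value recursion along the realized trajectory with Azuma--Hoeffding (exactly the content of the paper's Lemma~\ref{lm:expection-sum}, which you inline), control the clipped bonus sum with Lemma~\ref{lemma:bonus-sum} plus Cauchy--Schwarz over $h$, and close the self-referential inequality via $x \le a\sqrt{x}+b \Rightarrow x \le a^2+2b$ with $\alpha = 1/\sqrt{H}$ giving $\sum_{k,h}\alpha^2 = K$. The only differences are organizational: your initial reduction $\min\{\hat f_{k,h}(s_h^k,a_h^k),1\}\le Q_{k,h}(s_h^k,a_h^k)=V_{k,h}(s_h^k)$ and the single weighted martingale $\sum_{k,h'}h'\xi_{k,h'}$ replace the paper's split $\min\{\hat f_{k,h},1\}\le \PP_h V_{k,h+1}+2\min\{\beta^E\overline D_{\cF_h},1\}$ followed by its separate two-stage Azuma argument, and these yield the same bound up to absolute constants.
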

Now we can prove Lemma~\ref{lm:value-sum}.
\begin{proof}[Proof of Lemma~\ref{lm:value-sum}]
    We have
    \begin{align*}
        \sum_{k=1}^K V_{k,1} (s_1^k) 
        & \le \sum_{k=1}^K \EE_{\tau_h^k \sim d_h^{\pi^k}(s_h^k) } \min \Big\{ 3 {\beta}^E \sum_{h' = 1}^{H} \overline{\cD}(z_{k,h};z_{[k],h},\overline{\sigma}_{[k],h}), 1 \Big\}\\
        & \le \sum_{k=1}^K \sum_{h = 1}^{H} \EE_{\tau_h^k \sim d_h^{\pi^k}(s_h^k) } \min \Big\{ 3 {\beta}^E  \overline{\cD}(z_{k,h};z_{[k-1],h},\overline{\sigma}_{[k-1],h}), 1 \Big\} \\
        & \le H (1+4C\beta^E\gamma^2) \dim_{\alpha, K}(\cF_h) + 8 \beta^E \sqrt{\dim_{\alpha, K}(\cF)} \sqrt{ H \sum_{k=1}^K \sum_{h=1}^H (\sigma_{k,h}^2+\alpha^2)} \\
        & = O(\beta^E \sqrt{ K H \dim_{\alpha, K}(\cF)}),
    \end{align*}
    where the first inequality follows from Lemma~\ref{lm:simulation}, the third inequality follows from Lemma~\ref{lemma:bonus-sum}, and the last equality holds due to Lemma~\ref{lm:sigma_sum}.
\end{proof}

\begin{proof}[Proof of Lemma~\ref{lm:deviation-sum}]
    Denote $\Delta_k = \EE_{s \sim \mu}\big[ \tilde{V}_1^*(s;r_k)  \big] - \tilde{V}_1^*(s_1^k;r_k)$. By Azuma-Hoeffding inequality (Lemma~\ref{lemma:hoeffding}), we have
    \begin{align*}
        \Big| \sum_{k=1}^K \Delta_k \Big| \le \sqrt{2K \log(1/\delta)}.
    \end{align*}
\end{proof}

\begin{lemma}\label{lm:optimism-exp}
    On the event $\underline \cE^{E}$, for any $k\in[K]$ and $h\in[H]$,  we have
    \begin{align*}
        \tilde{V}_h^*(s;r_k) \le V_{k,h}(s),\quad \forall s \in \cS.
    \end{align*}
\end{lemma}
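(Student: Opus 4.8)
The plan is to prove the claim by \emph{backward induction} on the stage $h$, running from $h=H+1$ down to $h=1$. Since $\tilde V_h^*(s;r_k)=\max_{a}\tilde Q_h^*(s,a;r_k)$ by Definition~\ref{def:truncated_optimal} and $V_{k,h}(s)=\max_{a}Q_{k,h}(s,a)$ by Line~\ref{ln:V} of Algorithm~\ref{alg:main}, it suffices to establish the pointwise bound $\tilde Q_h^*(s,a;r_k)\le Q_{k,h}(s,a)$ for every $(s,a)\in\cS\times\cA$; taking the maximum over $a$ on both sides then yields $\tilde V_h^*(s;r_k)\le V_{k,h}(s)$. The base case $h=H+1$ holds because $\tilde V_{H+1}^*(\cdot;r_k)=0=V_{k,H+1}(\cdot)$.

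For the inductive step I would first record the exploration-phase concentration bound. On the good event $\underline{\cE}^E$ the defining inequality of $\cE_{k,h}^E$ holds, namely $\lambda+\sum_{i\in[k-1]}\bar\sigma_{i,h}^{-2}\big(\hat f_{k,h}(s_h^i,a_h^i)-\cT_hV_{k,h+1}(s_h^i,a_h^i)\big)^2\le(\beta^E)^2$. Because $\cT_hV_{k,h+1}\in\cF_h$ by the completeness Assumption~\ref{assumption:complete}, inserting $f_1=\hat f_{k,h}$ and $f_2=\cT_hV_{k,h+1}$ into the definition of $D_{\cF_h}^2$ (Definition~\ref{def:ged}) gives, exactly as in the proof of Lemma~\ref{lm:concentration-plan}, the pointwise bound $|\hat f_{k,h}(s,a)-\cT_hV_{k,h+1}(s,a)|\le\beta^E D_{\cF_h}(z;z_{[k-1],h},\bar\sigma_{[k-1],h})$.

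Then I would chain the estimates. By the truncation in Definition~\ref{def:truncated_optimal} we have both $\tilde Q_h^*(s,a;r_k)\le r_{k,h}(s,a)+\PP_h\tilde V_{h+1}^*(s,a;r_k)$ and $\tilde Q_h^*(s,a;r_k)\le 1$. The inductive hypothesis $\tilde V_{h+1}^*(\cdot;r_k)\le V_{k,h+1}(\cdot)$ together with monotonicity of the expectation operator $\PP_h$ gives $\PP_h\tilde V_{h+1}^*(s,a;r_k)\le\PP_hV_{k,h+1}(s,a)$, so the right-hand side is at most $r_{k,h}(s,a)+\PP_hV_{k,h+1}(s,a)=\cT_hV_{k,h+1}(s,a;r_k)$, where $r_{k,h}$ is precisely the intrinsic reward defining $\tilde Q_h^*(\cdot;r_k)$ (Line~\ref{ln:intrinsic_reward}). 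Applying the concentration bound and then the oracle inequality $D_{\cF_h}\le\overline D_{\cF_h}$ of Definition~\ref{def:bonus-oracle} yields $\cT_hV_{k,h+1}(s,a;r_k)\le\hat f_{k,h}(s,a)+\beta^E D_{\cF_h}\le\hat f_{k,h}(s,a)+2\beta^E\overline D_{\cF_h}=\hat f_{k,h}(s,a)+b_{k,h}(s,a)$, the last equality being the definition of $b_{k,h}$ in Algorithm~\ref{alg:main}. Hence $\tilde Q_h^*(s,a;r_k)$ is bounded above both by $1$ and by $\hat f_{k,h}(s,a)+b_{k,h}(s,a)$, so by their minimum $Q_{k,h}(s,a)$, which closes the induction.

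This is a standard optimism-under-UCB argument, so I do not anticipate a serious obstacle; the only points requiring care are (i) that the truncation $\min\{\cdot,1\}$ appears on \emph{both} sides, so one verifies $\tilde Q_h^*\le 1$ separately and thereby dominates the clipped estimate without any case split on whether $Q_{k,h}=1$, and (ii) that the bonus $b_{k,h}=2\beta^E\overline D_{\cF_h}$ must dominate the concentration error $\beta^E D_{\cF_h}$, which is exactly where the oracle lower bound $D_{\cF_h}\le\overline D_{\cF_h}$ (with slack factor $2$) enters.
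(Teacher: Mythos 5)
Your proof is correct and follows essentially the same route as the paper's: backward induction on $h$, combining the exploration-phase concentration bound $|\hat f_{k,h}-\cT_h V_{k,h+1}|\le\beta^E D_{\cF_h}$ (Lemma~\ref{lm:concentration-exp}) with the oracle domination $D_{\cF_h}\le\overline D_{\cF_h}$ so that the bonus $b_{k,h}=2\beta^E\overline D_{\cF_h}$ absorbs the estimation error, then invoking the induction hypothesis through monotonicity of $\PP_h$. The only (cosmetic) difference is that you dispose of the truncation by bounding $\tilde Q_h^*$ by both arguments of the $\min$, whereas the paper splits into cases $V_{k,h}(s)=1$ versus $V_{k,h}(s)<1$; the two are interchangeable.
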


\begin{proof}[Proof of Lemma~\ref{lm:value_bonus}]
    Since $ \beta^E = O(\sqrt{H\log N_{\cV}(\epsilon)})$ and $ \beta^P = O(\sqrt{ H \log N_{\cF} (\epsilon)})$, for some constant $c$, we have
    \begin{align*}
        \beta^E \ge c \sqrt{ \log N_{\cV}(\epsilon)/ \log N_{\cF} (\epsilon) } \cdot \beta^P.
    \end{align*}
    Therefore, for any $h\in [H]$, we have $r_{k,h}(\cdot, \cdot) \ge r_{K,h}(\cdot, \cdot) \ge c \sqrt{ \log N_{\cV}({\epsilon}) / \log N_{\cF} (\epsilon) } \cdot  b_h (\cdot,\cdot)$. Hence,
    \begin{align*}
        & c \sqrt{ \log N_{\cV}({\epsilon}) / \log N_{\cF} (\epsilon) } \cdot \EE_{s\sim\mu} \Big[ \tilde{V}_1^* (s;b)\Big] \\
        & =  \EE_{s\sim\mu} \Big[  \tilde{V}_1^* (s; c \sqrt{ \log N_{\cV}({\epsilon}) /  \log N_{\cF} (\epsilon) } \cdot b)\Big] \\
        & \le \EE_{s\sim\mu} \Big[ \tilde{V}_1^* (s;r_k)\Big] / K \\
        & = \Big[ \sum_{k=1}^K \tilde{V}_1^* (s_1^k;r_k) + \sum_{k=1}^K \Big[ \EE_{s\sim\mu} \Big[ \tilde{V}_1^* (s;r_k)\Big] -  \tilde{V}_1^* (s_1^k;r_k)\Big]   \Big] / K \\
        & \le \Big(  \sum_{k=1}^K \tilde{V}_1^* (s;r_k) \Big)/ K + \sqrt{2\log(1/\delta)/K} \\
        & \le \Big(  \sum_{k=1}^K {V}_{k,1} (s;r_k) \Big)/ K + \sqrt{2\log(1/\delta)/K} \\
        & = O\Big(\beta^E  \sqrt{H \dim_{\alpha,K} (\cF) / K}\Big),
    \end{align*}
    where the second inequality follows from Lemma~\ref{lm:deviation-sum}, and the third inequality follows from Lemma~\ref{lm:optimism-exp}. Therefore, we have
    \begin{align*}
        \EE_{s\sim\mu} \Big[ \tilde{V}_1^* (s;b)\Big] = O\Big(\beta^E   \sqrt{ H
 \dim_{\alpha,K} (\cF) / K} \sqrt{ \log N_{\cF} (\epsilon) / \log N_{\cV}(\epsilon) } \Big).
    \end{align*}
\end{proof}

\section{Proofs of Lemmas in Appendix~\ref{app:level1}}\label{app:level2}

\begin{proof}[Proof of Lemma~\ref{lm:over-optimism}]
We see that
\begin{align*}
    Q^*(\cdot,\cdot ; r) & = r_h(\cdot,\cdot) + \PP_h V_{h+1}(\cdot,\cdot;r), \\
    Q_{k,h}(\cdot,\cdot) & = \min\{ \hat{f}_{k,h}(\cdot,\cdot) + b_{k,h}(\cdot,\cdot), 1\}, \\
    \hat{Q}_{h}(\cdot,\cdot) & = \min \{ \hat{f}_h (\cdot,\cdot) + b_h (\cdot,\cdot), 1\}.
\end{align*}
We prove this statement by induction. Note that $V_{H+1}^*(s;r) + V_{k,H+1}(s) = \hat{V}_{H+1} (s) = 0$. Assume the statement holds for $h+1$. By definition, we have
\begin{align*}
    Q_{h}^* (s,a;r) + 1 \ge \hat{Q}_h(s,a).
\end{align*}
Therefore, we only need to prove
\begin{align*}
    Q_{h}^* (s,a;r) + \hat{f}_{k,h}(s,a)+ b_{k,h}(s,a)  - \hat{Q}_h(s,a) \ge 0.
\end{align*}
We have
\begin{align*}
    & Q_{h}^* (s,a;r) + \hat{f}_{k,h}(s,a)+ b_{k,h}(s,a)  - \hat{Q}_h(s,a) \\
    & = r_h (s,a) + \PP_h V_{h+1}^* (s,a;r) +  \hat{f}_{k,h}(s,a) + b_{k,h}(s,a) - \min \{ \hat{f}_h (s,a) + b_h (s,a), 1\} \\
    & \ge r_h (s,a) + \PP_h V_{h+1}^* (s,a;r) +  \hat{f}_{k,h}(s,a) + b_{k,h}(s,a) -  (\hat{f}_h (s,a) + b_h (s,a))\\
    & = \PP_h V_{h+1}^* (s,a;r) + \PP_h V_{k,h+1} (s,a) - \PP_{h} \hat{V}_{h+1}(s,a) + \hat{r}_{k,h}(s,a) + b_{k,h}(s,a) - b_h (s,a)\\
    & \qquad  + (\hat{f}_{k,h}(s,a) - \hat{r}_{k,h}(s,a) - \PP_h V_{k,h+1} (s,a)) + (r_h (s,a) + \PP_h \hat{V}_h(s,a) - \hat{f}_h (s,a))\\
    & \ge \hat{r}_{k,h}(s,a) + b_{k,h}(s,a) - b_h (s,a) + (\hat{f}_{k,h}(s,a) - \hat{r}_{k,h}(s,a) - \PP_h V_{k,h+1} (s,a)) \\
    & \qquad  + (r_h (s,a) + \PP_h \hat{V}_h(s,a) - \hat{f}_h (s,a)) \\
    & \ge 3 {\beta}^E \overline{\cD}_{\cF_h}(z;z_{[k-1],h},\overline{\sigma}_{[k-1],h}) - {\beta}^P \overline{\cD}_{\cF_h}(z;z_{[K],h},\overline{\sigma}_{[K],h}) - {\beta}^E {\cD}_{\cF_h}(z;z_{[k-1],h},\overline{\sigma}_{[k-1],h}) \\
    & \qquad - {\beta}^P {\cD}_{\cF_h}(z;z_{[K],h},\overline{\sigma}_{[K],h})\\
    & \ge 0,
\end{align*}
where the second inequality holds due to induction assumption, the third inequality holds by high probability events, and the last inequality holds by ${\beta}^E \ge {\beta}^P$, $\bar{\cD}_{\cF_h}(z;z_{[k],h},\overline{\sigma}_{[k],h})$ decreasing with $k$, and Definition~\ref{def:bonus-oracle}.
\end{proof}

\begin{lemma}\label{lm:concentration-exp}
On the event $\underline \cE^{E}$, we have
\begin{align*}
        |\hat{f}_{k,h}(s,a) - \cT_h V_{k,h+1} | \le {\beta}^E D_{\cF_{h}}(z;z_{[k-1],h},\bar{\sigma}_{[k-1],h})
\end{align*}   
\end{lemma}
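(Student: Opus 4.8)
The plan is to reproduce, mutatis mutandis, the argument already used for Lemma~\ref{lm:concentration-plan} in the planning phase, now instantiated with the exploration-phase quantities $\hat{f}_{k,h}$, $V_{k,h+1}$, the radius $\beta^E$, and the historical weights $\bar\sigma_{[k-1],h}$. The single structural observation that drives everything is that the generalized-eluder distance $D_{\cF_h}$ from Definition~\ref{def:ged} is a supremum over \emph{pairs} of functions in $\cF_h$; hence, as soon as I can exhibit two members of $\cF_h$ whose pointwise gap at $z$ equals $|\hat{f}_{k,h}(z)-\cT_h V_{k,h+1}(z)|$, I immediately obtain a bound of the form (pointwise gap)$^2 \le D_{\cF_h}^2 \cdot(\text{weighted empirical gap}+\lambda)$.

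First I would invoke the completeness Assumption~\ref{assumption:complete} to guarantee $\cT_h V_{k,h+1}\in\cF_h$: since $V_{k,h+1}$ maps into $[0,1]$ and $r_{k,h}$ is a valid reward, the Bellman backup is realizable in $\cF_h$. Together with $\hat{f}_{k,h}\in\cF_h$ by construction (Line~\ref{ln:hat_f}), this lets me take $(f_1,f_2)=(\hat{f}_{k,h},\cT_h V_{k,h+1})$ in the supremum defining $D_{\cF_h}^2(z;z_{[k-1],h},\bar\sigma_{[k-1],h})$, yielding
\[
\big(\hat{f}_{k,h}(z)-\cT_h V_{k,h+1}(z)\big)^2 \le D_{\cF_h}^2(z;z_{[k-1],h},\bar\sigma_{[k-1],h})\cdot\Big(\lambda+\sum_{i\in[k-1]}\tfrac{1}{\bar\sigma_{i,h}^2}\big(\hat f_{k,h}(z_{i,h})-\cT_h V_{k,h+1}(z_{i,h})\big)^2\Big).
\]
Next I would control the parenthesized factor by the good event $\cE_{k,h}^{E}\subseteq\underline{\cE}^E$, whose very definition states that this weighted empirical Bellman residual plus $\lambda$ is at most $(\beta^E)^2$. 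Substituting gives $(\hat{f}_{k,h}(z)-\cT_h V_{k,h+1}(z))^2\le(\beta^E)^2 D_{\cF_h}^2(z;z_{[k-1],h},\bar\sigma_{[k-1],h})$, and taking square roots produces the claimed bound.

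Since every ingredient is either a definition (Definition~\ref{def:ged}), a realizability assumption (Assumption~\ref{assumption:complete}), or a high-probability event already established (Lemma~\ref{lm:event-exp}), there is no genuine analytic obstacle here; the proof is purely mechanical. The only point requiring care is bookkeeping: making sure I use the exploration-phase indices throughout (the $k-1$ historical samples, the weights $\bar\sigma_{[k-1],h}$, and the radius $\beta^E$ rather than $\beta^P$ and $z_{[K],h}$), and confirming that $\underline{\cE}^E$ indeed contains the event $\cE_{k,h}^{E}$ that supplies the $(\beta^E)^2$ bound. This index/notation reconciliation is the sole (minor) difficulty.
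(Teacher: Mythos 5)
Your proposal is correct and follows essentially the same route as the paper's own proof: both instantiate the pair $(f_1,f_2)=(\hat f_{k,h},\cT_h V_{k,h+1})$ in the supremum defining $D_{\cF_h}^2$ (with realizability of the Bellman backup supplied by Assumption~\ref{assumption:complete}), bound the weighted empirical residual plus $\lambda$ by $(\beta^E)^2$ via the good event $\cE^{E}_{k,h}$, and take square roots. No gaps; the index bookkeeping you flag is handled exactly as in the paper.
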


\begin{proof}[Proof of Lemma~\ref{lm:indicator}]
    We have Lemma~\ref{lm:optimism-plan} and \ref{lm:over-optimism} both hold on $\cE_{h+1}^P$. Therefore, we have
    \begin{align*}
        & [\VV_h (\hat V_{h+1} - V^*_{h+1})](s_h^k,a_h^k) \\
        & \le [\PP_h (\hat V_{h+1} - V^*_{h+1})^2] (s_h^k,a_h^k) \\
        & \le 2 [\PP_h (\hat V_{h+1} - V^*_{h+1})] (s_h^k,a_h^k) \\
        & \le 2 [\PP_h V_{k,h+1}] (s_h^k,a_h^k) \\
        & = 2 (\cT_h V_{k,h+1} (s_h^k,a_h^k) - r_{k,h}(s_h^k,a_h^k) ) \\
        & \le 2 ( \hat f_{k,h} (s_h^k,a_h^k) + \beta^E D_{\cF_h} (z_{k,h};z_{[k-1],h},\bar{\sigma}_{[k-1],h}) - \beta^E \overline{D}_{\cF_h} (z_{k,h};z_{[k-1],h},\bar{\sigma}_{[k-1],h})) \\
        & \le 2\hat f_{k,h} (s_h^k,a_h^k),
    \end{align*}
    where the second inequality holds due to Lemma~\ref{lm:optimism-plan} and $\hat V_{h+1}, V^*_{h+1} \in [0,1]$, the third inequality holds due to Lemma~\ref{lm:over-optimism}, the fourth inequality holds due to Lemma~\ref{lm:concentration-exp}, and the last inequality holds due to Definition~\ref{def:bonus-oracle}.
\end{proof}

\begin{proof}[Proof of Lemma~\ref{lm:simulation}]
According to Algorithm~\ref{alg:main}, we have that
    \begin{align*}
         Q_{k,h}(\cdot,\cdot)  & = \min\{ \hat{f}_{k,h}(\cdot,\cdot) + b_{k,h}(\cdot,\cdot),1\},\\
        V_{k,h}(\cdot) & = \max_a Q_{k,h}(\cdot,a),\\
        a_h^k &=\pi_h^k(s_h^k) = \argmax_{a} Q_{k,h}(s_h^k,a).
    \end{align*}
For all $k$ and all $h$, we have that
\begin{align*}
     V_{k,h}(s_h^k) & =Q_{k,h}(s_h^k,a_h^k) \\
    &\le \hat{f}_{k,h}(s_h^k,a_h^k) + b_{k,h}(s_h^k,a_h^k) \\
    &= 2 {\beta}^E \overline{\cD}(z_{k,h};z_{[k-1],h},\overline{\sigma}_{[k-1],h}) + (\hat{f}_{k,h}(s_h^k,a_h^k) - \cT_{h} V_{k,h+1}(s_h^k,a_h^k)) + \cT_{h} V_{k,h+1}(s_h^k,a_h^k)\\
    & \cdots \\
    & = \EE_{\tau_h^k \sim d_h^{\pi^k}(s_h^k) } \sum_{h'=h}^{H} \Big[ (\hat{f}_{k,h'}(s_{h'}^k,a_{h'}^k) - \cT_{h} V_{k,h'+1}(s_{h'}^k,a_{h'}^k)) + 2 {\beta}^E \overline{\cD}(z_{k,h'};z_{[k-1],h'},\overline{\sigma}_{[k-1],h'})  \Big] \\
    & \le \EE_{\tau_h^k \sim d_h^{\pi^k}(s_h^k) } \sum_{h'=h}^{H} 3 {\beta}^E \overline{\cD}(z_{k,h'};z_{[k-1],h'},\overline{\sigma}_{[k-1],h'}),
\end{align*}
where the last inequality holds due to Lemma~\ref{lm:concentration-exp} and Definition~\ref{def:bonus-oracle}.
\end{proof}

\begin{lemma}\label{lm:expection-sum}
    On the event $\underline{\cE}^E$, with probability at least $1-\delta$, 
    \begin{align*}
        \sum_{k=1}^K \sum_{h=1}^H \PP_h V_{k,h+1}(s_h^k,,a_h^k) & \le H \sum_{k=1}^K \sum_{h=1}^H \min \Big\{   4 \beta^E D_{\cF_h}(z_{k,h};z_{[k-1],h},\bar{\sigma}_{[k-1],h}),1 \Big\}  \\
        & \qquad + (H + 1) \sqrt{2 H K \log(1/\delta )}
    \end{align*}
\end{lemma}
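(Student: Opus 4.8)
The plan is to reduce the sum of one-step look-ahead terms $\PP_h V_{k,h+1}(s_h^k,a_h^k)$ to a sum of the realized optimistic values $V_{k,h}(s_h^k)$ plus a martingale, and then to bound each $V_{k,h}(s_h^k)$ by a telescoped sum of per-step bonuses along the observed trajectory. First I would introduce the martingale difference sequence
\[
\zeta_{k,h} := \PP_h V_{k,h+1}(s_h^k,a_h^k) - V_{k,h+1}(s_{h+1}^k).
\]
Since $V_{k,h+1}$ is produced by the backward pass at the \emph{start} of episode $k$ (Lines~\ref{ln:Q}--\ref{ln:V} of Algorithm~\ref{alg:main}), it is predictable when $s_{h+1}^k$ is sampled, so $\EE[\zeta_{k,h}\mid \cdots]=0$ and $|\zeta_{k,h}|\le 1$ because $V_{k,h+1}\in[0,1]$. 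Writing $\PP_h V_{k,h+1}(s_h^k,a_h^k)=V_{k,h+1}(s_{h+1}^k)+\zeta_{k,h}$ and re-indexing $\sum_{k}\sum_{h}V_{k,h+1}(s_{h+1}^k)\le\sum_{k}\sum_{h}V_{k,h}(s_h^k)$ (dropping $-V_{k,1}\le 0$ and using $V_{k,H+1}=0$), the problem reduces to bounding $\sum_{k,h}V_{k,h}(s_h^k)$ plus the single martingale sum $\sum_{k,h}\zeta_{k,h}$.

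Second, I would establish the per-step inequality
\[
V_{k,h}(s_h^k)\le \min\big\{4\beta^E\overline{\cD}_{\cF_h}(z_{k,h};z_{[k-1],h},\bar\sigma_{[k-1],h}),\,1\big\}+V_{k,h+1}(s_{h+1}^k)+\zeta_{k,h}.
\]
This follows by expanding $V_{k,h}(s_h^k)=\min\{\hat f_{k,h}+b_{k,h},1\}$, applying Lemma~\ref{lm:concentration-exp} to replace $\hat f_{k,h}$ by $\cT_h V_{k,h+1}+\beta^E D_{\cF_h}$, substituting $\cT_h V_{k,h+1}=r_{k,h}+\PP_h V_{k,h+1}$ together with $r_{k,h}=\beta^E\overline{\cD}_{\cF_h}$ and $b_{k,h}=2\beta^E\overline{\cD}_{\cF_h}$ (Line~\ref{ln:intrinsic_reward} and the bonus definition), and bounding $D_{\cF_h}\le\overline{\cD}_{\cF_h}$ via Definition~\ref{def:bonus-oracle}. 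The truncation is then split off using the elementary inequality $\min\{x+y,1\}\le x+\min\{y,1\}$ for $x,y\ge 0$ with $x=\PP_h V_{k,h+1}$, and finally $\PP_h V_{k,h+1}=V_{k,h+1}(s_{h+1}^k)+\zeta_{k,h}$. Telescoping over $h'=h,\dots,H$ then yields $V_{k,h}(s_h^k)\le\sum_{h'=h}^H\big[\min\{4\beta^E\overline{\cD}_{\cF_{h'}},1\}+\zeta_{k,h'}\big]$.

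Third, summing the telescoped bound over $h=1,\dots,H$ introduces the factor $H$, since each stage-$h'$ term is counted $h'\le H$ times: $\sum_{h}V_{k,h}(s_h^k)\le H\sum_{h'}\big[\min\{4\beta^E\overline{\cD}_{\cF_{h'}},1\}+\zeta_{k,h'}\big]$. Summing over $k$, the martingale $\{\zeta_{k,h}\}$ thus appears with total coefficient $(H+1)$ — coefficient $1$ from the top-level decomposition and coefficient $H$ from the telescoping — so a single Azuma--Hoeffding bound (Lemma~\ref{lemma:hoeffding}) gives $\big|\sum_{k,h}\zeta_{k,h}\big|\le\sqrt{2HK\log(1/\delta)}$ and hence the $(H+1)\sqrt{2HK\log(1/\delta)}$ deviation term. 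Converting $\overline{\cD}_{\cF_h}\le C\,D_{\cF_h}$ absorbs the oracle constant into the displayed coefficient, producing the stated bound $H\sum_{k,h}\min\{4\beta^E D_{\cF_h},1\}$.

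I expect the main obstacle to be the bookkeeping that produces exactly the coefficient $(H+1)$: one must recognize that the \emph{same} martingale difference sequence $\{\zeta_{k,h}\}$ enters with coefficient $1$ (from decomposing $\PP_h V_{k,h+1}$) and with coefficient $H$ (from the $H$-fold counting in the telescope), so that a single Azuma bound applied to $\sum_{k,h}\zeta_{k,h}$ suffices, while simultaneously carrying the $\min\{\cdot,1\}$ truncation through the telescoping so the final bound is stated in the truncated, eluder-dimension-friendly form needed for the subsequent application of Lemma~\ref{lemma:bonus-sum}.
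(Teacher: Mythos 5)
Your steps one and two are correct and essentially reproduce the paper's own argument: the paper likewise writes $\PP_h V_{k,h+1}(s_h^k,a_h^k)$ as the realized next value plus a martingale difference, uses Lemma~\ref{lm:concentration-exp} together with Definition~\ref{def:bonus-oracle} and the choices $r_{k,h}=\beta^E\overline{\cD}_{\cF_h}$, $b_{k,h}=2\beta^E\overline{\cD}_{\cF_h}$ to obtain $V_{k,h}(s_h^k)\le\min\{\PP_h V_{k,h+1}(s_h^k,a_h^k)+4\beta^E\overline{\cD}_{\cF_h}(z_{k,h};z_{[k-1],h},\bar\sigma_{[k-1],h}),1\}$, and then telescopes; your explicit use of $\min\{x+y,1\}\le x+\min\{y,1\}$ for $x,y\ge 0$ makes the truncation-splitting cleaner than the paper's presentation.

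The gap is in step three. Summing the telescoped bound over $h$ gives
\begin{align*}
\sum_{h=1}^H V_{k,h}(s_h^k)\;\le\;\sum_{h'=1}^H h'\Big[\min\big\{4\beta^E\overline{\cD}_{\cF_{h'}},1\big\}+\zeta_{k,h'}\Big],
\end{align*}
and your next move, replacing the weight $h'$ by $H$ inside the bracket so that ``the martingale appears with total coefficient $(H+1)$'' and a single Azuma bound on the \emph{unweighted} sum $\sum_{k,h}\zeta_{k,h}$ finishes, is invalid: term-by-term replacement of $h'$ by $H$ requires nonnegative summands, and the bracket contains the signed differences $\zeta_{k,h'}$. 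Concretely, $\sum_{k,h}h\,\zeta_{k,h}$ is controlled neither by $H\sum_{k,h}\zeta_{k,h}$ nor by $H\big|\sum_{k,h}\zeta_{k,h}\big|$: with $H=2$ and $\zeta_{k,1}=-1$, $\zeta_{k,2}=+1$ for every $k$, the unweighted sum is $0$ while the weighted sum equals $K$. The repair is immediate and preserves your structure: pull the factor $H$ out only of the nonnegative bonus terms, keep the error term as the weighted martingale $\sum_{k,h}(1+h)\zeta_{k,h}$ (coefficient $1$ from your top-level decomposition, $h$ from the telescope), and apply Lemma~\ref{lemma:hoeffding} once to this sequence, whose increments are bounded by $H+1$ since the weights are deterministic; this yields exactly the $(H+1)\sqrt{2HK\log(1/\delta)}$ deviation term. (The paper instead telescopes separately for each fixed starting stage $h$ and invokes Azuma--Hoeffding once per stage plus once for the top-level term, summing $H+1$ deviations of $\sqrt{2HK\log(1/\delta)}$ each; either repair works, though both formally require tracking the confidence level across the applications.)
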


\begin{proof}[Proof of Lemma~\ref{lm:sigma_sum}]
    Recall ${\sigma}_{k,h}^2 = 4 \log N_\cV(\epsilon) \cdot \min\{\hat f_{k,h} (s_h^k,a_h^k),1  \}$. We have
    \begin{align*}
        \sum_{k=1}^K \sum_{h=1}^H {\sigma}_{k,h}^2 
        & = 4 \log N_{\cV}(\epsilon) \sum_{k=1}^K \sum_{h=1}^H  \min\{\hat f_{k,h} (s_h^k,a_h^k),1  \} \\
        & \le 4 \log N_{\cV}(\epsilon) \sum_{k=1}^K \sum_{h=1}^H \min\{ \cT_h V_{k,h+1}(s_h^k,a_h^k)  +  \beta^E D_{\cF_h}(z_{k,h};z_{[k-1].h},\bar{\sigma}_{[k-1],h}),1  \} \\
        & \le 4 \log N_{\cV}(\epsilon) \sum_{k=1}^K \sum_{h=1}^H \min\{ \PP_h V_{k,h+1}(s_h^k,a_h^k)  + 2 \beta^E \overline{\cD}_{\cF_h}(z_{k,h};z_{[k-1].h},\bar{\sigma}_{[k-1],h}),1  \} \\
        & \le 4 \log N_{\cV}(\epsilon) \sum_{k=1}^K \sum_{h=1}^H  \PP_h V_{k,h+1}(s_h^k,a_h^k)   + 8 \log N_{\cV}(\epsilon) \sum_{k=1}^K \sum_{h=1}^H\{  \beta^E \overline{\cD}_{\cF_h}(z_{k,h};z_{[k-1],h},\bar{\sigma}_{[k-1],h}),1  \} \\
        & \le  24 H \log N_{\cV}(\epsilon) 
        \underbrace{\sum_{k=1}^K \sum_{h=1}^H \min \Big\{  \beta^E \overline{\cD}_{\cF_h}(z_{k,h};z_{[k-1],h},\bar{\sigma}_{[k-1],h}),1 \Big\}}_{I}    + 8 H \log N_{\cV}(\epsilon) \sqrt{2 H K \log(H/\delta)},
    \end{align*}
    where the first inequality holds due to Lemma~\ref{lm:concentration-exp}, the second inequality holds due to Definition~\ref{def:bonus-oracle}, and the last inequality holds due to Lemma~\ref{lm:expection-sum}. For the term $I$, we further have
    \begin{align*}
        & \sum_{k=1}^K \sum_{h=1}^H \min \Big\{\beta^E \overline{\cD}_{\cF_h}(z_{k,h};z_{[k-1],h},\bar{\sigma}_{[k-1],h}),1 \Big\} \\
        & \le \sum_{k=1}^K \sum_{h=1}^H \min \Big\{ C \beta^E \cD_{\cF_h}(z_{k,h};z_{[k-1],h},\bar{\sigma}_{[k-1],h}),1 \Big\} \\
        & \le  \sum_{h=1}^H (1+ C\beta^E\gamma^2) \dim_{\alpha, K}(\cF_h) + 2 C \beta^E \sum_{h=1}^H \sqrt{\dim_{\alpha, K}(\cF_h)} \sqrt{\sum_{k=1}^K (\sigma_{k,h}^2+\alpha^2)} \\
        & \le H (1+C\beta^E\gamma^2) \dim_{\alpha, K}(\cF) + 2 C \beta^E \sqrt{\sum_{h=1}^H\dim_{\alpha, K}(\cF_h)} \sqrt{\sum_{k=1}^K \sum_{h=1}^H (\sigma_{k,h}^2+\alpha^2)} \\
        & \le H (1 + C\beta^E\gamma^2) \dim_{\alpha, K}(\cF_h) + 2 C \beta^E \sqrt{\dim_{\alpha, K}(\cF)} \sqrt{ H \sum_{k=1}^K \sum_{h=1}^H (\sigma_{k,h}^2+\alpha^2)},
    \end{align*}
where the first inequality holds due to Definition~\ref{def:bonus-oracle}, the second inequality holds due to Lemma~\ref{lemma:bonus-sum}, the third inequality holds due to Cauchy-Schwarz inequality. Therefore, we can get
    \begin{align*}
        \sum_{k=1}^K \sum_{h=1}^H {\sigma}_{k,h}^2 
        & \le 24 H^2 \log N_{\cV}(\epsilon)  (1+C\beta^E\gamma^2) \dim_{\alpha, K}(\cF_h) \\
        & \qquad + 48 CH \log N_{\cV}(\epsilon)  \beta^E \sqrt{\dim_{\alpha, K}(\cF)} \sqrt{ H \sum_{k=1}^K \sum_{h=1}^H (\sigma_{k,h}^2+\alpha^2)} + 8 H \log N_{\cV}(\epsilon) \sqrt{2 H K \log(H/\delta)}.
    \end{align*}
Since $x \le a \sqrt{x} +b $ implies $x \le a^2 + 2b$, taking $\alpha = 1 / \sqrt{H}$, we have that
    \begin{align*}
        \sum_{k=1}^K \sum_{h=1}^H {\sigma}_{k,h}^2 
        &\le 2304 C^2 H^3 (\log N_{\cV}(\epsilon))^2  (\beta^E)^2 \dim_{\alpha, K}(\cF) \\
        & \qquad + 48 H^2 \log N_{\cV}(\epsilon)  (1+C\beta^E\gamma^2) \dim_{\alpha, K}(\cF_h) + 16 H \log N_{\cV}(\epsilon) \sqrt{2 H K \log(H/\delta)} + K.
    \end{align*}
\end{proof}

\begin{proof}[Proof of Lemma~\ref{lm:optimism-exp}]
    We prove this statement by induction. Note that $\tilde V_{H+1}^*(s;r_k) = {V}_{k,H+1}(s) = 0$. 
    Assume that the statement holds for $h+1$. If ${V}_{k,h}(s)=1$, then the statement holds trivially for $h$; otherwise, we have for any $(s,a)\in \cS \times \cA$ that 
    \begin{align*}
        & \hat Q_{k,h} (s,a) - \tilde Q^*_h (s,a;r_k) \\
        & \ge  \hat f_{k,h} (s,a) + b_{k,h} (s,a) - [ r_{k,h} (s,a;r) + \PP_h V^*_{h+1}(s,a;r)] \\
        & =  [\hat f_{k,h} (s,a) -  r_{k,h} (s,a;r) - \PP_h  V_{k,h+1}(s,a;r)] + b_{k,h} (s,a) + \PP_h V_{k,h+1}(s,a;r) - \PP_h V^*_{h+1}(s,a;r) \\
        & \ge [\hat f_{k,h} (s,a) -  r_{k,h} (s,a;r) - \PP_h  V_{k,h+1}(s,a;r)] + b_{k,h} (s,a) \\
        & \ge - \beta^E D_{\cF_{h}}(z;z_{[K],h},\bar{\sigma}_{[K],h}) + 2 \beta^E \overline{D}_{\cF_{h}}(z;z_{[K],h},\bar{\sigma}_{[K],h}) \\
        & \ge 0,
    \end{align*}
    where the first inequality holds due to Definition~\ref{def:truncated_optimal}, the second inequality holds due to induction hypothesis, the third inequality holds due to Lemma~\ref{lm:concentration-exp}, and the forth inequality holds due to Definition~\ref{def:bonus-oracle}.
\end{proof}

\section{Proof of Lemmas in Appendix~\ref{app:level2}}

\begin{proof}[Proof of Lemma~\ref{lm:concentration-exp}]
    According to the definition of $D_\cF^2$ function, we have
\begin{align}
    &\big(\hat{f}_{k,h}(s,a)-\cT_h V_{k,h+1}(s,a)\big)^2\notag\\
    &\leq D_{\cF_h}^2(z; z_{[k - 1],h}, \bar\sigma_{[k - 1],h})\times \bigg(\lambda + \sum_{i=1}^{k-1} \frac{1}{(\bar\sigma_{i, h})^2}\left(\hat f_{k, h}(s_{h}^i, a_{h}^i) - \cT_{h} V_{k, h + 1}(s_{h}^i, a_{h}^i)\right)^2\bigg)\notag\\
    &\leq {(\beta^E)}^2 \times D_{\cF_h}^2(z; z_{[k - 1],h}, \bar\sigma_{[k - 1],h}),\notag
\end{align}
   where the first inequality holds due the definition of $D_\cF^2$ function with the Assumption \ref{assumption:complete} and the second inequality holds due to the events $\cE_{h}^{E}$. Thus, we have
   \begin{align*}
       \big|\hat{f}_{k,h}(s,a)-\cT_h V_{k,h+1}(s,a)\big|\leq {\beta^E} D_{\cF_h}(z; z_{[k - 1],h}, \bar\sigma_{[k - 1],h}).
   \end{align*}
\end{proof}

\begin{proof}[Proof of Lemma~\ref{lm:expection-sum}]
    By Lemma~\ref{lemma:hoeffding}, we have
    \begin{align*}
        \sum_{k=1}^K \sum_{h=1}^H \PP_h V_{k,h+1}(s_{h}^k,a_{h}^k) & = \sum_{k=1}^K \sum_{h=1}^H V_{k,h+1}(s_{h+1}^k) + \sum_{k=1}^K \sum_{h=1}^H(\PP_h V_{k,h+1}(s_h^k,a_h^k) - V_{k,h+1}(s_{h+1}^k)) \\
        & \le \sum_{k=1}^K \sum_{h=1}^H V_{k,h+1}(s_{h+1}^k) + \sqrt{2 K H \log(1/\delta)}.
    \end{align*}
    Then, under event $\underline{\cE}^E$, we have
    \begin{align*}
        V_{k,h}(s_{h}^k) & =  Q_{k,h}(s_{h}^k,a_{h}^k) \\
        & = \min \{\hat f_{k,h}(s_{h}^k,a_{h}^k) + 2 \beta^E \overline{\cD}_{\cF_h}(z_{k,h};z_{[k-1],h},\bar{\sigma}_{[k-1],h}), 1 \} \\
        & \le \min \{\PP_h V_{k,h+1}(s_{h}^k,a_{h}^k) + 4 \beta^E \overline{\cD}_{\cF_h}(z_{k,h};z_{[k-1],h},\bar{\sigma}_{[k-1],h}), 1 \} \\
        & = \min\{ V_{k,h+1}(s_h^k) + ( \PP_h V_{k,h+1}(s_h^k, a_h^k) - V_{k,h+1}(s_h^k)) + 4 \beta^E \overline{\cD}_{\cF_h}(z_{k,h};z_{[k-1],h},\bar{\sigma}_{[k-1],h}),1\},
    \end{align*}
    where the inequality holds due to Lemma~\ref{lm:concentration-exp} and Definition~\ref{def:bonus-oracle}. Therefore, for fixed $h$, we have
    \begin{align*}
        \sum_{k=1}^K V_{k,h}(s_{h}^k) 
        & \le \sum_{k=1}^K \min \Big\{ \sum_{h'=h}^H \big[ 4 \beta^E \overline{\cD}_{\cF_{h'}}(z_{k,h'};z_{[k-1],h'},\bar{\sigma}_{[k-1],h'}) \\
        & \qquad + ( \PP_h V_{k,h'+1}(s_{h'}^k, a_{h'}^k) - V_{k,h'+1}(s_{h'}^k)) \big],1  \Big\} \\
        & \le \sum_{k=1}^K \sum_{h'=h}^H \min \Big\{   4 \beta^E \overline{\cD}_{\cF_{h'}}(z_{k,h'};z_{[k-1],h'},\bar{\sigma}_{[k-1],h'}),1 \Big\} \\
        & \qquad + \sum_{k=1}^K \sum_{h'=h}^H ( \PP_h V_{k,h'+1}(s_{h'}^k, a_{h'}^k) - V_{k,h'+1}(s_{h'}^k)) \\
        & \le \sum_{k=1}^K \sum_{h'=h}^H \min \Big\{   4 \beta^E \overline{\cD}_{\cF_{h'}}(z_{k,h'};z_{[k-1],h'},\bar{\sigma}_{[k-1],h'}),1 \Big\} + \sqrt{2 H K \log(1/\delta )}, \\
    \end{align*}
    where the first inequality holds due to induction, and the last inequality holds due to Lemma~\ref{lemma:hoeffding}.
    Hence, by combining the above two inequalities, we have
    \begin{align*}
        & \sum_{k=1}^K \sum_{h=1}^H \PP_h V_{k,h+1}(s_{h}^k,a_{h}^k) \\
        & \le \sum_{k=1}^K \sum_{h=1}^H V_{k,h+1}(s_{h+1}^k) + \sqrt{2 K H \log(1/\delta)} \\
        & \le H \sum_{k=1}^K \sum_{h=1}^H \min \Big\{   4 \beta^E \overline{\cD}_{\cF_h}(z_{k,h};z_{[k-1],h},\bar{\sigma}_{[k-1],h}),1 \Big\} + (H + 1) \sqrt{2 H K \log(1/\delta )}.
    \end{align*}
\end{proof}

\section{Auxilliary Lemmas}
\begin{lemma}[Self-normalized bound for scalar-valued martingales] \label{lemma:hoeffding-variant}
   Consider random variables $\left(v_{n} | n\in\mathbb{N}\right)$ adapted
   to the filtration $\left(\mathcal{H}_{n}:\, n=0,1,...\right)$. Let $\{\eta_i\}_{i = 1}^ \infty$ be a sequence of real-valued random variables which is $\cH_{i + 1}$-measurable and is conditionally $\sigma$-sub-Gaussian. Then for an arbitrarily chosen $\lambda > 0$, for any $\delta > 0$, with probability at least $1 - \delta$, it holds that \begin{align*} 
       \sum_{i = 1}^n \eta_i v_i \le \frac{\lambda \sigma^2}{2} \cdot \sum_{i = 1}^n v_i^2 + \log(1 / \delta) / \lambda \ \ \ \ \ \ \ \forall n \in \NN. 
   \end{align*}
\end{lemma}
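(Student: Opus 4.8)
The plan is to establish this via a fixed-parameter exponential supermartingale argument combined with Ville's maximal inequality. The crucial simplification is that $\lambda$ is fixed in advance (before observing the data), so no adaptive tuning or method-of-mixtures is needed, and the whole proof reduces to a time-uniform Chernoff bound. First I would freeze the free parameter by writing $t = \lambda > 0$ and defining the process
\begin{align*}
M_n = \exp\Big( t \sum_{i=1}^n \eta_i v_i - \frac{t^2 \sigma^2}{2} \sum_{i=1}^n v_i^2 \Big),
\end{align*}
with $M_0 = 1$. The entire argument rests on showing that $(M_n)_{n \ge 0}$ is a nonnegative supermartingale adapted to $(\cH_n)$.

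The key step is the supermartingale verification. Since $v_n$ is $\cH_n$-measurable, it is a constant given $\cH_n$, so conditioning on $\cH_n$ and applying the conditional $\sigma$-sub-Gaussian property of $\eta_n$ at scaling $t v_n$ yields
\begin{align*}
\EE\big[ e^{t \eta_n v_n} \,\big|\, \cH_n \big] \le \exp\Big( \frac{t^2 v_n^2 \sigma^2}{2} \Big).
\end{align*}
Writing $M_n = M_{n-1} \cdot \exp\big( t \eta_n v_n - (t^2 \sigma^2 / 2) v_n^2 \big)$ and noting that both $M_{n-1}$ and $v_n^2$ are $\cH_n$-measurable, the factor $\exp\big( - (t^2 \sigma^2/2) v_n^2 \big)$ exactly cancels the sub-Gaussian MGF bound above, giving $\EE[M_n \mid \cH_n] \le M_{n-1}$ and hence the supermartingale property with $\EE[M_0] = 1$.

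Next I would invoke Ville's maximal inequality for nonnegative supermartingales, which gives $\PP\big( \sup_n M_n \ge 1/\delta \big) \le \delta \, \EE[M_0] = \delta$. On the complementary event, of probability at least $1-\delta$, we have $M_n < 1/\delta$ \emph{simultaneously} for all $n$; taking logarithms yields $t \sum_{i=1}^n \eta_i v_i - (t^2 \sigma^2/2) \sum_{i=1}^n v_i^2 < \log(1/\delta)$ for every $n$, and dividing by $t$ and substituting back $t = \lambda$ produces exactly the claimed inequality uniformly over $n \in \NN$. The only genuinely delicate point is this time-uniform (``$\forall n$'') nature of the conclusion: a naive Markov bound on $\EE[M_n]$ would control the deviation only at a single prescribed index, whereas the statement asserts the bound holds for all $n$ on one common high-probability event. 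Supplying this uniformity is precisely the role of Ville's inequality, and it is why the supermartingale structure rather than a per-step Chernoff bound is the correct tool; the remaining MGF computation and logarithmic rearrangement are routine.
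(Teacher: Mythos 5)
Your proof is correct. The paper does not actually prove this lemma --- it is stated in the auxiliary-lemmas appendix as a standard result without proof --- and your argument (the exponential process $M_n$ with the parameter frozen at $t=\lambda$, the conditional sub-Gaussian MGF bound at scale $t v_n$ cancelling the compensator term, and Ville's maximal inequality to convert $\EE[M_0]=1$ into a time-uniform high-probability statement) is precisely the canonical derivation of such fixed-$\lambda$ self-normalized bounds, so there is nothing to bridge between your route and the paper's.

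One cosmetic point: since $M_n$ involves $\eta_n$, it is $\cH_{n+1}$-measurable rather than $\cH_n$-measurable, so $(M_n)_{n \ge 0}$ is a supermartingale with respect to the shifted filtration $(\cH_{n+1})_{n \ge 0}$, not $(\cH_n)_{n \ge 0}$ as you wrote. The inequality you establish, $\EE[M_n \mid \cH_n] \le M_{n-1}$, is exactly the supermartingale property for that shifted filtration, so Ville's inequality applies verbatim and the conclusion is unaffected; just state the filtration correctly.
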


\begin{lemma}[Corollary 2, \citet{agarwal2022vo}]\label{lemma:freedman-variant}
   Let $M>0,V>v>0$ be constants, and $\{x_i\}_{i\in[t]}$ be stochastic process adapted to a filtration $\{\cH_i\}_{i\in[t]}$. Suppose $\EE[x_i|\cH_{i-1}]=0$,  $|x_i|\le M$ and $\sum_{i\in[t]}\EE[x_i^2|\cH_{i-1}]\le V^2$ almost surely. Then for any $\delta,\epsilon>0$, let $\iota = \sqrt{\log\frac{\left(2\log(V/v)+2\right)\cdot\left(\log(M/m)+2\right)}{\delta}}$ we have
   \[
\PP\Bigg(\sum_{i\in[t]}x_i>\iota\sqrt{2\bigg(2\sum_{i\in[t]}\EE[x_i^2|\cH_{i-1}]+ v^2\bigg)}+\frac{2}{3}\iota^2\bigg(2\max_{i\in[t]}|x_i|+m\bigg) \Bigg)\le \delta.
   \]
\end{lemma}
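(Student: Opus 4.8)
The plan is to reduce this range- and variance-adaptive self-normalized bound to the classical (fixed-parameter) Freedman inequality via a two-dimensional \emph{peeling} argument over the possible values of the predictable quadratic variation $W:=\sum_{i\in[t]}\EE[x_i^2\mid\cH_{i-1}]$ and of the maximal increment $\max_{i\in[t]}|x_i|$. The starting point is the terminal form of Freedman's inequality: if $|x_i|\le b$ and $W\le\sigma^2$ on the relevant event, then $\PP\bigl(\sum_{i\in[t]}x_i\ge s,\ W\le\sigma^2\bigr)\le\exp\bigl(-s^2/(2\sigma^2+\tfrac{2}{3}bs)\bigr)$. Solving the exponent for a prescribed level $\tau$ and using $\tfrac{b\tau}{3}+\sqrt{b^2\tau^2/9+2\sigma^2\tau}\le\tfrac{2}{3}b\tau+\sqrt{2\sigma^2\tau}$ turns this into the clean sub-exponential statement
\[
\PP\Bigl(\textstyle\sum_{i\in[t]}x_i\ \ge\ \sqrt{2\sigma^2\tau}+\tfrac{2}{3}b\,\tau,\ \ W\le\sigma^2\Bigr)\le e^{-\tau}.
\]
The difficulty the lemma addresses is that $\sigma^2$ and $b$ here are \emph{fixed} thresholds, whereas the target must adapt to the \emph{realized} $W$ and $\max_i|x_i|$; peeling is exactly what trades the fixed thresholds for the realized quantities at the price of the logarithmic counting factor buried in $\iota$.

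Concretely I would place a geometric grid of ratio $2$ in each coordinate. For the variance, set $\sigma_0^2=v^2$ and $\sigma_j^2=2^{\,j}v^2$ for $j=1,\dots,J$ with $J=\lceil\log_2(V^2/v^2)\rceil$, so that $\{W\le v^2\}$ together with $\{W\in(\sigma_{j-1}^2,\sigma_j^2]\}_{j=1}^J$ cover the admissible range $[0,V^2]$ in at most $2\log(V/v)+2$ strata. For the range, set $b_0=m$, $b_\ell=2^{\,\ell}m$ for $\ell=1,\dots,L$ with $L=\lceil\log_2(M/m)\rceil$, giving at most $\log(M/m)+2$ strata covering $[0,M]$. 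I would apply the displayed bound to each pair $(j,\ell)$ with parameters $(\sigma_j^2,b_\ell)$ at confidence level $\delta'=\delta/\bigl[(2\log(V/v)+2)(\log(M/m)+2)\bigr]$, and union-bound over all pairs, so that $\tau=\log(1/\delta')=\iota^2$ holds simultaneously.

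Finally I would self-normalize on the single stratum $(j,\ell)$ that contains the realized $(W,\max_i|x_i|)$: by the ratio-$2$ construction its upper edges satisfy $\sigma_j^2\le 2W+v^2$ (the bin above $v^2$ has $\sigma_j^2=2\sigma_{j-1}^2\le2W$, while the base bin contributes $v^2$) and $b_\ell\le 2\max_i|x_i|+m$. Substituting these into $\sqrt{2\sigma_j^2}\,\iota+\tfrac{2}{3}b_\ell\,\iota^2$ reproduces exactly the claimed right-hand side $\iota\sqrt{2(2W+v^2)}+\tfrac{2}{3}\iota^2(2\max_i|x_i|+m)$; since the realized point always lands in some stratum, the union-bound failure event of probability $\le\delta$ is precisely the complement of the claim. \textbf{The main obstacle} is the bookkeeping at the two edge bins $\{W\le v^2\}$ and $\{\max_i|x_i|\le m\}$, where the halving step is replaced by the base resolutions $v^2$ and $m$, and checking that the stratum counts agree with the exact constants $2\log(V/v)+2$ and $\log(M/m)+2$ inside $\iota$; everything else is a direct substitution into Freedman's inequality.
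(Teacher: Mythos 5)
Your overall plan --- a two-dimensional peeling over the predictable variation $W=\sum_{i}\EE[x_i^2\mid\cH_{i-1}]$ and the realized maximum $\max_i|x_i|$, a union bound at level $\delta$ over the strata, and self-normalization on the stratum containing the realized pair --- is the right architecture, and your arithmetic is correct (the conversion of Freedman's exponent into $\sqrt{2\sigma^2\tau}+\tfrac{2}{3}b\tau$, and the edge bounds $\sigma_j^2\le 2W+v^2$, $b_\ell\le 2\max_i|x_i|+m$). Note that the paper itself never proves this lemma: it is imported verbatim as Corollary 2 of \citet{agarwal2022vo}, so the comparison here is against what a correct proof requires. And there is a genuine gap at precisely the step that is the crux of the lemma: you invoke Freedman's inequality ``with parameters $(\sigma_j^2,b_\ell)$'' on the stratum where $\max_i|x_i|\le b_\ell$. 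Freedman treats the two thresholds asymmetrically. The variance threshold enters through the \emph{event}: the inequality genuinely reads $\PP\big(\sum_i x_i\ge s,\ W\le\sigma^2\big)\le\exp\big(-s^2/(2\sigma^2+\tfrac{2}{3}bs)\big)$, because $W$ is predictable and the supermartingale argument tolerates intersecting with $\{W\le\sigma^2\}$. The range threshold, by contrast, enters through the conditional moment-generating-function bound, which must hold \emph{almost surely}; here the increments are only bounded by $M$, and intersecting with the realized event $\{\max_i|x_i|\le b_\ell\}$ does not license Freedman with range $b_\ell$, since the conditional law of $x_i$ can put mass near $M$ even on sample paths where that mass never materializes. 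This is exactly why a bound featuring the realized maximum rather than $M$ is nontrivial; your proposal both uses this invalid step and mis-identifies the main obstacle as edge-bin bookkeeping. Naive repairs fail: truncating to $x_i\,\ind(|x_i|\le b_\ell)$ and recentering creates a drift term of order $W/b_\ell$, which is not absorbed by the claimed bound when $W$ is large and $b_\ell$ is small.

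The missing idea, which does repair your argument, is one-sided clipping. For each range level $b_\ell$ set $z_i=\min(x_i,b_\ell)$. Then $z_i\le b_\ell$ almost surely, $\EE[z_i\mid\cH_{i-1}]\le\EE[x_i\mid\cH_{i-1}]=0$, and $z_i^2\le x_i^2$ pointwise (since $b_\ell>0$), so $\EE[z_i^2\mid\cH_{i-1}]\le\EE[x_i^2\mid\cH_{i-1}]$. The Bennett--Freedman supermartingale built from the $z_i$ (which requires only an upper bound on the increments and nonpositive conditional mean) is therefore valid with range parameter $b_\ell$ and predictable variation $W$, giving
\begin{align*}
\PP\Big(\textstyle\sum_i z_i\ge s,\ W\le\sigma_j^2\Big)\le\exp\Big(-\tfrac{s^2}{2\sigma_j^2+\tfrac{2}{3}b_\ell s}\Big),
\end{align*}
and on the stratum $\{\max_i|x_i|\le b_\ell\}$ one has $z_i=x_i$ for all $i$, so the stratum event is contained in the event just bounded. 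With this substitution your peeling, union bound, and self-normalization go through as written; what remains is only the bookkeeping you flagged (choosing grid ratios so the stratum counts match the factor $(2\log(V/v)+2)(\log(M/m)+2)$ inside $\iota$, and reading the statement's undeclared parameter $m$ as a base resolution with $M>m>0$, exactly as you did).
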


\begin{lemma}[Azuma-Hoeffding Inequality]\label{lemma:hoeffding}
    Let $\{x_i\}_{i=1}^n$ be a martingale difference sequence with respect to a filtration $\{ \cG_i\}_{i=1}^{n+1}$ such that $|x_i| \le M$ almost surely. That is, $x_i$ is $\cG_{i+1}$-measurable and $\EE[x_i | \cG_i]$ a.s. Then for any $0 < \delta < 1$, with probability at least $1-\delta$,
    \begin{align*}
        \sum_{i=1}^n x_i \le M \sqrt{2n \log (1 / \delta)}.
    \end{align*}
\end{lemma}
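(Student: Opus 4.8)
The plan is to apply the standard Chernoff (exponential-moment) method adapted to the martingale structure, which is the classical route to the Azuma--Hoeffding inequality. First I would fix a threshold $t > 0$ and, for an arbitrary $\lambda > 0$, apply Markov's inequality to the exponentiated sum, obtaining $\PP\big(\sum_{i=1}^n x_i \ge t\big) \le e^{-\lambda t}\,\EE\big[\exp(\lambda \sum_{i=1}^n x_i)\big]$. The entire difficulty then reduces to controlling the moment generating function $\EE\big[\exp(\lambda \sum_{i=1}^n x_i)\big]$ using the fact that $\{x_i\}$ is a bounded martingale difference sequence.

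Next I would peel off the terms one at a time via the tower property of conditional expectation. Conditioning on $\cG_n$, the factor $\exp(\lambda \sum_{i=1}^{n-1} x_i)$ is $\cG_n$-measurable, so $\EE\big[\exp(\lambda \sum_{i=1}^n x_i)\big] = \EE\big[\exp(\lambda \sum_{i=1}^{n-1} x_i)\,\EE[\exp(\lambda x_n)\mid \cG_n]\big]$. The key estimate is the conditional Hoeffding lemma: since $\EE[x_n \mid \cG_n] = 0$ and $|x_n| \le M$ almost surely (so $x_n$ takes values in an interval of length at most $2M$), one has $\EE[\exp(\lambda x_n)\mid \cG_n] \le \exp(\lambda^2 M^2 / 2)$. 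Iterating this bound across all $n$ steps yields $\EE\big[\exp(\lambda \sum_{i=1}^n x_i)\big] \le \exp(n \lambda^2 M^2 / 2)$.

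Combining the two displays gives $\PP\big(\sum_{i=1}^n x_i \ge t\big) \le \exp(-\lambda t + n \lambda^2 M^2 / 2)$ for every $\lambda > 0$. I would then optimize over $\lambda$, taking $\lambda = t / (n M^2)$, to obtain the sub-Gaussian tail $\PP\big(\sum_{i=1}^n x_i \ge t\big) \le \exp(-t^2 / (2 n M^2))$. Finally, setting the right-hand side equal to $\delta$ and solving for $t$ gives $t = M\sqrt{2 n \log(1/\delta)}$, which is precisely the claimed bound: with probability at least $1 - \delta$, $\sum_{i=1}^n x_i \le M\sqrt{2 n \log(1/\delta)}$.

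The only non-routine ingredient is the conditional Hoeffding lemma, namely verifying that a conditionally mean-zero random variable bounded by $M$ has conditional moment generating function at most $\exp(\lambda^2 M^2 / 2)$. This is where essentially all the work lies; it follows from the convexity of $x \mapsto e^{\lambda x}$ (to dominate $x_n$ by a two-point distribution on the endpoints of its range) together with a second-order Taylor expansion of the associated cumulant generating function. Everything else is the mechanical Chernoff argument, the telescoping of conditional expectations, and a one-line optimization in $\lambda$, so I would expect the write-up to be short.
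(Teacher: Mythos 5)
Your proof is correct. Note that the paper itself gives no proof of this lemma: it is stated in the ``Auxiliary Lemmas'' section as a standard, classical result (Azuma--Hoeffding) and invoked as a black box, so there is no in-paper argument to compare against. Your write-up is the canonical textbook derivation --- Chernoff's exponential-moment bound, the tower property to peel off one martingale difference at a time, the conditional Hoeffding lemma $\EE[e^{\lambda x_n}\mid \cG_n] \le e^{\lambda^2 M^2/2}$ for a conditionally centered variable with $|x_n|\le M$, and the optimization $\lambda = t/(nM^2)$ yielding the tail $\exp(-t^2/(2nM^2))$, which inverts to exactly the stated bound $M\sqrt{2n\log(1/\delta)}$. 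All steps check out, including the constant: the range has length $2M$, so Hoeffding's lemma gives $\exp(\lambda^2 (2M)^2/8) = \exp(\lambda^2 M^2/2)$, which is what produces the factor $2$ under the square root.
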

\section{Experiment details}\label{app:experiment_details}
\subsection{Details of exploration algorithm}
We present the practical algorithm in this subsection. We start by introducing the notation $\bphi_i$ as the parameter for the $i$-th $Q$ networks, which is a three-layer MLP with 1024 hidden size, same as other benchmark algorithms implemented in URLB~\citep{laskin2021urlb}. For the ease of presentation, we ignore the $Q$ network as $Q_{\bphi_i}$ as $Q_i$ and the target network $Q_{\bar \bphi_i}$ as $\bar Q_i$ when there is no confusion. We initialize the parameters in $\bphi_i$ using Kaiming distribution~\citep{he2015delving}.

The algorithm works in the discounted MDP with the discounted factor $\gamma$. For each $t$ in training steps, the algorithm updates the $t \% N$-th $Q$ function by taking the gradient descent regarding the loss function 
\begin{align}
    \cL(\bphi_{t \% N}) = \sum_{(s, a, s') \in \cB} \frac{1}{\sigma^2(s, a)}\bigg(Q_{t \% N}(s, a) - \Big(r_{\text{int}}(s, a) + \gamma Q_{\text{target}}(s, a) + b(s, a)\Big)\bigg)^2, \label{eqq:1}
\end{align}
where the target $Q$ function is the average of $N$ target $Q$ network, i.e., $Q_{\text{target}}(s, a) = \sum_{i \in [N]} \bar Q_i(s, a) / N$, $\cB$ is the minibatch randomly sampled from replay buffer $\cD$. We encourage the diversity of different $Q$ function by using different batch $\cB$ for updating different $Q$ functions. As the key components of our algorithm, weighted regression $\sigma^2(s, a)$; intrinsic reward $r_{\text{int}}(s, a)$, exploration bonus $b(s, a)$ is calculated based on the variance of the target $Q$ network across $\bar Q_i$ instances:
\begin{align}
    \sigma^2(s, a) = \mathrm{Var}[\bar Q_i(s, a)];\qquad r_{\text{int}}(s, a) = (1 - \gamma)\sqrt{\mathrm{Var}[\bar Q_i(s, a)]};\qquad b(s, a) = \beta \sqrt{\mathrm{Var}[\bar Q_i(s, a)]}, \label{eqq:2}
\end{align}
where we simply set $\beta = 1$ to align with our theory, the factor $(1 - \gamma)$ before the intrinsic reward is because we want to balance the horizon $1 / H \approx (1 - \gamma)$ in the setting. The reason for choosing the target $Q$ function $\bar Q_i$ instead of the updating $Q$ function is to update the intrinsic reward, exploration bonus slower than the update of $Q$ function, therefore give the agent more time to explore the optimal policy for maximizing a certain intrinsic reward $r_{\text{int}}(s, a)$. After updating the parameter $\bphi_{t\% N}$, we perform a soft update for the target network as
\begin{align}
    \bar \bphi_{t \% N} \leftarrow (1 - \eta) \bar \bphi_{t \% N} + \eta \bphi_{t \% N}, \label{eqq:3}
\end{align}
where we follow the setting in URLB to set $\eta = 0.01$. After updating the $Q$ function, the algorithm then updates the actor $\pi_{\btheta}(a | s)$ following DDPG in maximizing
\begin{align}
    \cL(\btheta) = \sum_{(s, a, s') \in \cB}\sum_{i \in [N]} Q_i(s, \pi_{\btheta}(a | s))\label{eqq:4}
\end{align}

We summarize the exploration algorithm in Algorithm~\ref{alg:prac}, in particular, we use Adam to optimize the loss function defined by~\eqref{eqq:1} and~\eqref{eqq:3}.
\begin{algorithm}[h]
\caption{\alg - Exploration Phase -- Implementation}\label{alg:prac}
\begin{algorithmic}[1]
\REQUIRE Number of ensemble $N$, update speed $\eta$, exploration step $T$, (reward-free) environment \texttt{env}, 
\REQUIRE Action variance $\sigma^2$, minibatch size $B$, exploration bonus $\beta$, discount factor $\gamma$
\STATE For all $i \in [N]$, initialize $\bphi_i$, let $\bar \bphi_i \leftarrow \bphi_i$
\STATE Initialize policy network $\pi_{\btheta}$, replay buffer $\cD = \emptyset$
\STATE Observe initial state $s_1$
\FOR {$t = 1, \cdots, T$}
\STATE Sample $\zeta \sim \text{Unif.}[0, 1]$, sample $a_t \sim \Big\{N(\pi(\cdot | s_t), \sigma^2)$ \textbf{if } $\zeta \le 1 - \epsilon$ \textbf{else }  $\text{Unif.} (\cA)\Big\}$
\STATE Observe $s_{t+1}$, let $\cD \leftarrow \cD \cup (s_t, a_t, s_{t+1})$
\STATE \textbf{If} \texttt{env.done}, restart \texttt{env} and observe initial state $s_{t+1}$
\STATE Sample a minibatch $\cB = \{(s, a, s')\} \subseteq \cD$ with size $B$
\STATE For each $(s, a, s')$ triplet, calculate $\sigma^2(s, a), r_{\text{int}}(s, a), b(s, a)$ according to~\eqref{eqq:2}.
\STATE Update $Q$-network $Q_{t \% N}$ by taking one step minimizing $\cL(\bphi_{t \% N})$ according to~\eqref{eqq:1}
\STATE Update actor $\pi_{\btheta}(\cdot | s)$ by taking one step maximizing $\cL(\btheta)$ according to~\eqref{eqq:4}
\STATE Update target $Q$-network following~\eqref{eqq:3}
\ENDFOR
\end{algorithmic}
\end{algorithm}
\begin{algorithm}[h]
\caption{\alg - Planning Phase -- Implementation (DDPG)}\label{alg:prac-plan}
\begin{algorithmic}[1]
\REQUIRE Update speed $\eta$, training $K$, environment \texttt{env}, reward function $r(\cdot, \cdot)$
\REQUIRE Action variance $\sigma^2$, minibatch size $B$, discount factor $\gamma$, offline training data $\cD$
\STATE Initialize $\bphi$, let $\bar \bphi \leftarrow \bphi$
\STATE Initialize policy network $\pi_{\btheta}$
\STATE Update every $(s, a, s')$ in $\cD$ to $(s, a, s', r(s, a))$
\FOR {$k = 1, \cdots, K$}
\STATE Sample a minibatch $\cB = \{(s, a, s', r(s, a))\} \subseteq \cD$
\STATE Calculate $\cL(\bphi) = \sum_{(s, a, s') \in \cB} \bigg(Q_{\bphi}(s, a) - \Big(r(s, a) + \gamma Q_{\text{target}}(s', \pi_{\btheta}(s'))\Big)\bigg)^2$
\STATE Update $Q$-network $Q_{t \% N}$ by taking one step minimizing $\cL(\bphi)$
\STATE Calculate actor loss $\cL(\btheta) = \sum_{(s, a, s') \in \cB} Q_{\bphi}(s, \pi_{\btheta}(a | s))$
\STATE Update actor $\pi_{\btheta}(\cdot | s)$ by taking one step maximizing $\cL(\btheta)$
\STATE Update target $Q$-network by $\bar \bphi \leftarrow (1 - \eta) \bar \bphi + \eta \bphi$
\ENDFOR
\end{algorithmic}
\end{algorithm}
\subsection{Details of offline training algorithm}
After collecting the dataset $\cD$, we call a reward oracle to label the reward $r$ for any triplet $(s, a, s') \in \cD$. Then the DDPG algorithm is called to learn the optimal policy. For the fair comparison with other benchmark algorithm, we do not add weighted regression in the planning phase, thus the algorithm stays the same with the one presented in URLB, as stated in Algorithm~\ref{alg:prac-plan}

\subsection{Hyper-parameters}

We present a common set of hyper-parameters used in our experiments in Table~\ref{tab:hyper-common}. And we list individual hyper-parameters for each method in table~\ref{tab:hyper-per}. All common hyper-parameters and individual hyper-parameters for baseline algorithms are the same as what is used in \citet{laskin2021urlb} and its implementations.

\begin{table}[t]
\centering
\caption{The common set of hyper-parameters.}
\small{
\begin{tabular}{lc}
\hline
\textbf{Common hyper-parameter} & \textbf{Value} \\
\hline
Replay buffer capacity & $10^6$ \\
Action repeat & 1 \\
n-step returns & 3 \\
Mini-batch size & 1024  \\
Discount ($\gamma$) & 0.99 \\
Optimizer & Adam \\
Learning rate & $10^{-4}$ \\
Agent update frequency & 2 \\
Critic target EMA rate ($\tau_Q$) & 0.01 \\
Features dim. & 50  \\
Hidden dim. & 1024 \\
Exploration stddev clip & 0.3 \\
Exploration stddev value & 0.2 \\
Number of frames per episode &  $1 \times 10^3$  \\
Number of online exploration frames & up to $ 1 \times 10^6$ \\
Number of offline planning frames & $1 \times 10^5$ \\
Critic network & $(|O| + |A|) \rightarrow 1024 \rightarrow \text{LayerNorm} \rightarrow \text{Tanh} \rightarrow 1024 \rightarrow \text{RELU} \rightarrow 1$ \\
Actor network &  $ |O| \rightarrow 50 \rightarrow \text{LayerNorm} \rightarrow \text{Tanh} \rightarrow 1024 \rightarrow \text{RELU} \rightarrow \text{action dim}$ \\
\hline
\end{tabular}
}
\label{tab:hyper-common}
\end{table}

\begin{table}[t]
\centering
\caption{Hyper-parameters of each algorithm.}
\small{%
\begin{tabular}{lc}
\hline
GFA-RFE & Value \\
\hline
Ensemble size & 10 \\
Exploration bonus & 2 \\
Exploration $\epsilon$ & 0.2 \\
\hline
ICM hyper-parameter & Value \\
\hline
Reward transformation & $\log(r + 1.0)$ \\
Forward net arch. & $(|O| + |A|) \rightarrow 1024 \rightarrow 1024 \rightarrow |O|$ ReLU MLP \\
Inverse net arch. & $(2 \times |O|) \rightarrow 1024 \rightarrow |A| $ ReLU MLP \\
\hline
Disagreement hyper-parameter &  Value \\
\hline
Ensemble size & 5 \\
Forward net arch: & $(|O| + |A|) \rightarrow 1024 \rightarrow 1024 \rightarrow |O| $ ReLU MLP \\
\hline
RND hyper-parameter &  Value \\
\hline
Representation dim. & 512 \\
Predictor \& target net arch. & $|O| \rightarrow 1024 \rightarrow 1024 \rightarrow 512 $ ReLU MLP \\
Normalized observation clipping & 5 \\
\hline
APT hyper-parameter &  Value \\
\hline
Representation dim. & 512 \\
Reward transformation & $\log(r + 1.0)$ \\
Forward net arch. & $(512 + |A|) \rightarrow 1024 \rightarrow 512 $ ReLU MLP \\
Inverse net arch. & $(2 \times 512) \rightarrow 1024 \rightarrow |A| $ ReLU MLP \\
k in NN & 12 \\
Avg top k in NN & True \\
\hline
SMM hyper-parameter &  Value \\
\hline
Skill dim. & 4 \\
Skill discrim lr & $10^{-3}$ \\
VAE lr & $10^{-2}$ \\
\hline
DIAYN hyper-parameter &  Value\\
\hline
Skill dim & 16 \\
Skill sampling frequency (steps) & 50 \\
Discriminator net arch. & $512 \rightarrow 1024 \rightarrow 1024 \rightarrow 16 $ ReLU MLP \\
\hline
APS hyper-parameter &  Value\\
\hline
Reward transformation & $\log(r + 1.0)$ \\
Successor feature dim. & 10 \\
Successor feature net arch. & $|O| \rightarrow 1024 \rightarrow 10 $ ReLU MLP \\
k in NN & 12 \\
Avg top k in NN & True \\
Least square batch size & 4096 \\
\hline
\end{tabular}%
}
\label{tab:hyper-per}
\end{table}

\subsection{Ablation Study}
\subsubsection{Learning Processes}

Figures~\ref{fig:subfigs1} and \ref{fig:subfigs2} illustrate the episode rewards for each algorithm across training steps for various tasks, demonstrating that the performance of our algorithm (Algorithm~\ref{alg:main}) ranks among the top tier in all tasks.

\begin{figure}[b]
 \centering
 
\begin{subfigure}{0.49\textwidth}
    \includegraphics[width=\textwidth]{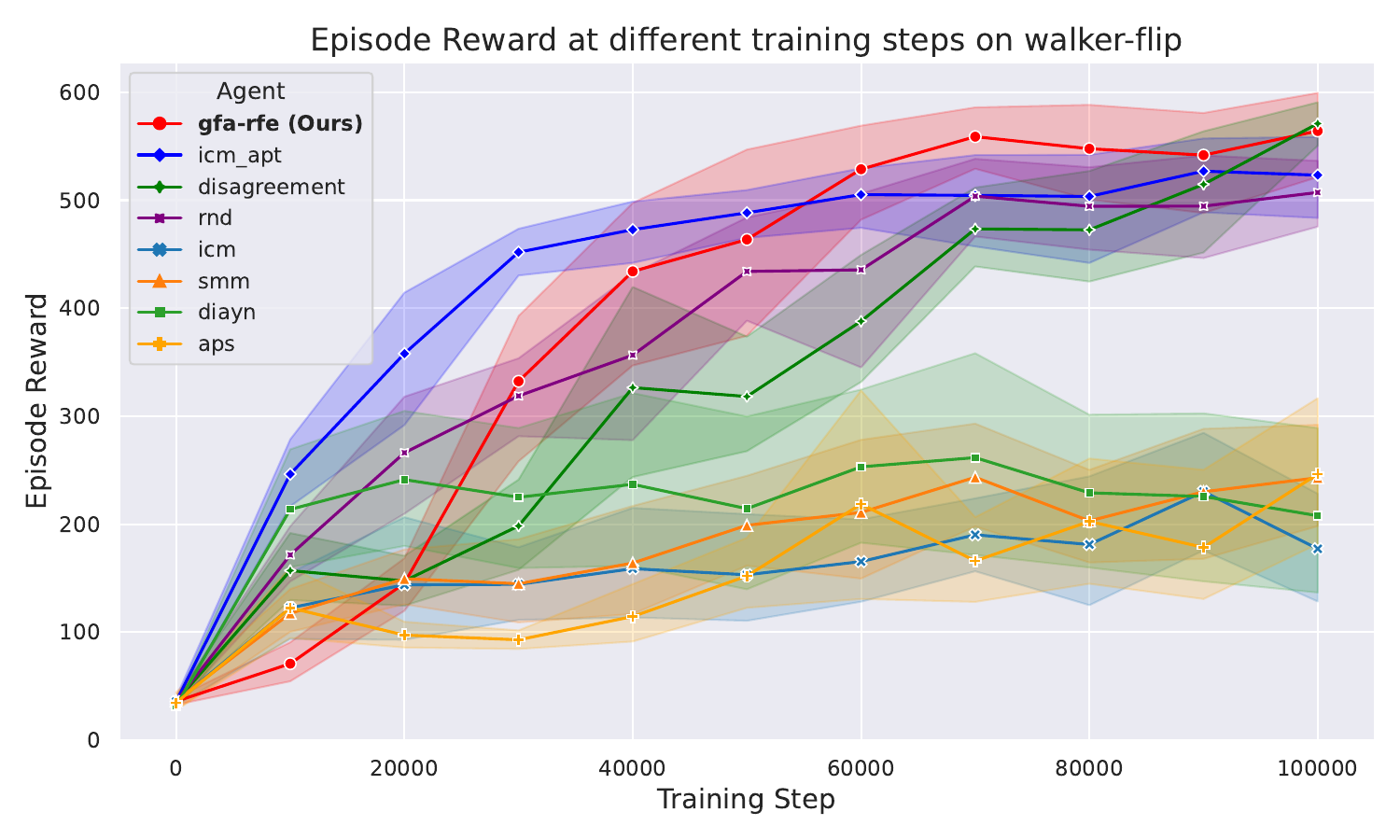}
    \caption{Flip}
    \label{fig:1first}
\end{subfigure}
\hfill
\begin{subfigure}{0.49\textwidth}
    \includegraphics[width=\textwidth]{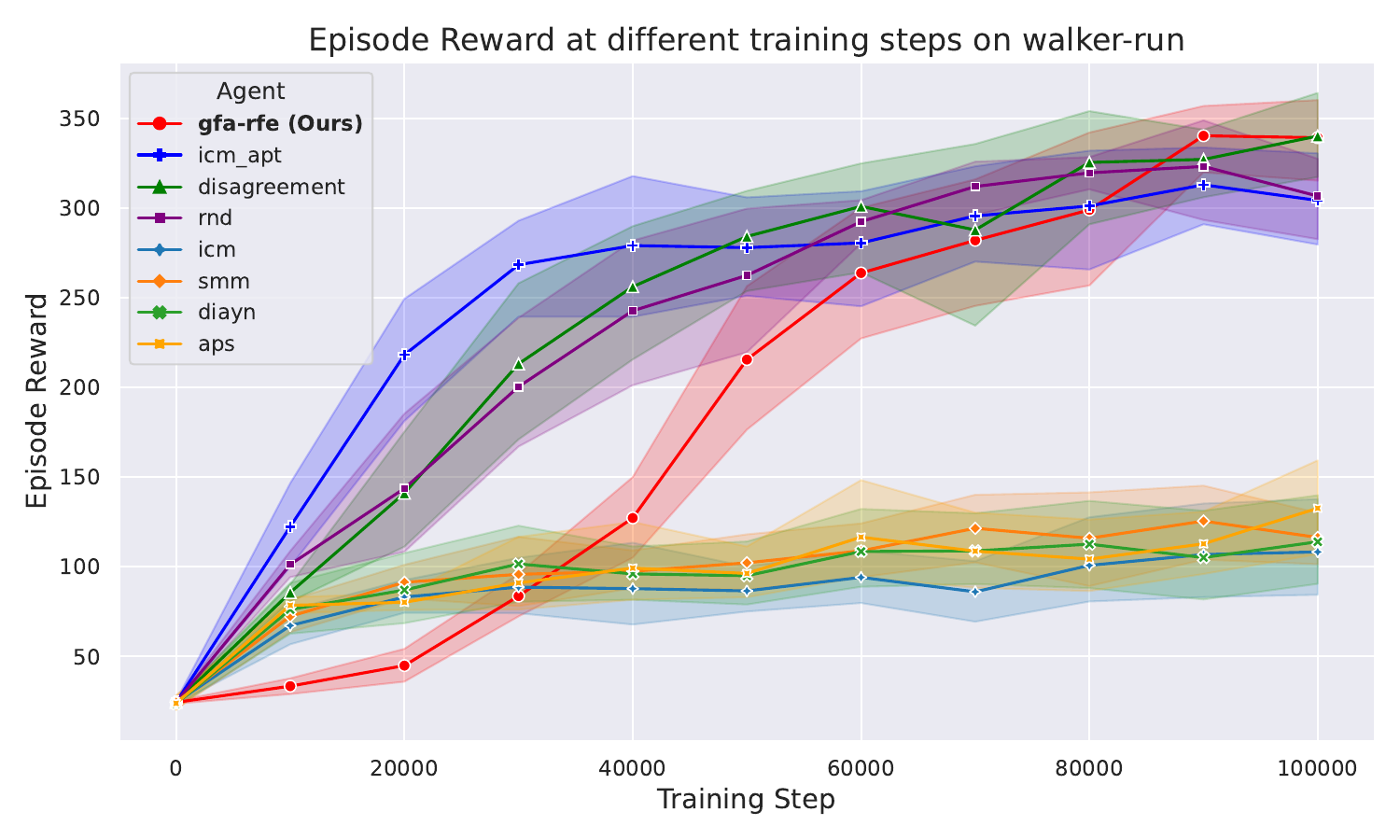}
    \caption{Run}
    \label{fig:1second}
\end{subfigure}
\hfill
\begin{subfigure}{0.49\textwidth}
    \includegraphics[width=\textwidth]{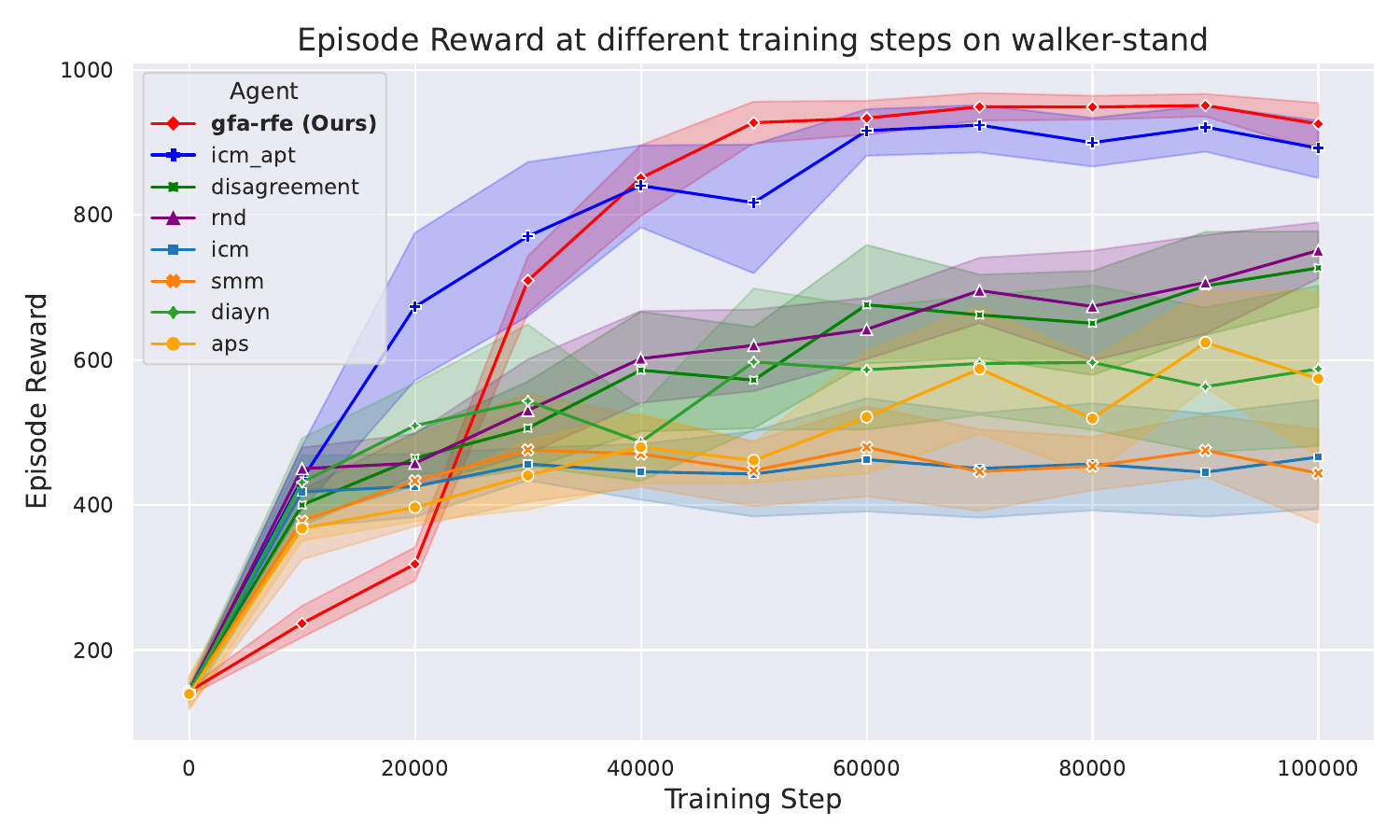}
    \caption{Stand}
    \label{fig:1third}
\end{subfigure}
\hfill
\begin{subfigure}{0.49\textwidth}
    \includegraphics[width=\textwidth]{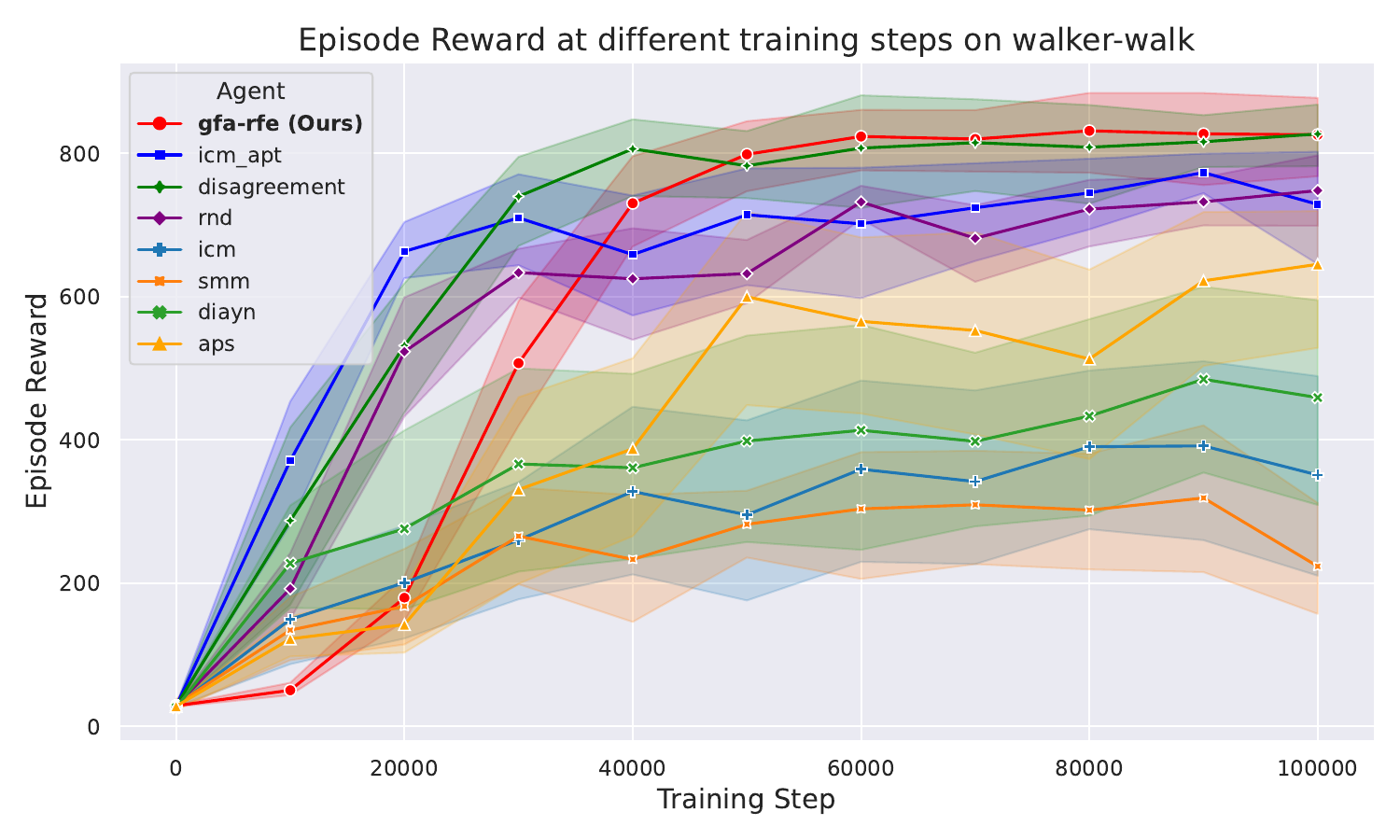}
    \caption{Walk}
    \label{fig:1fourth}
\end{subfigure}
  \caption{Episode reward at different offline training steps for different tasks for the \emph{walker} environment:~\eqref{fig:1first}: \emph{walker-flip};~\eqref{fig:1second}: \emph{walker-run};~\eqref{fig:1third} \emph{walker-stand};~\eqref{fig:1fourth} \emph{walker-walk}.} \label{fig:subfigs1}
  \end{figure}

  \begin{figure}[b]
  \begin{subfigure}{0.49\textwidth}
        \includegraphics[width=\textwidth]{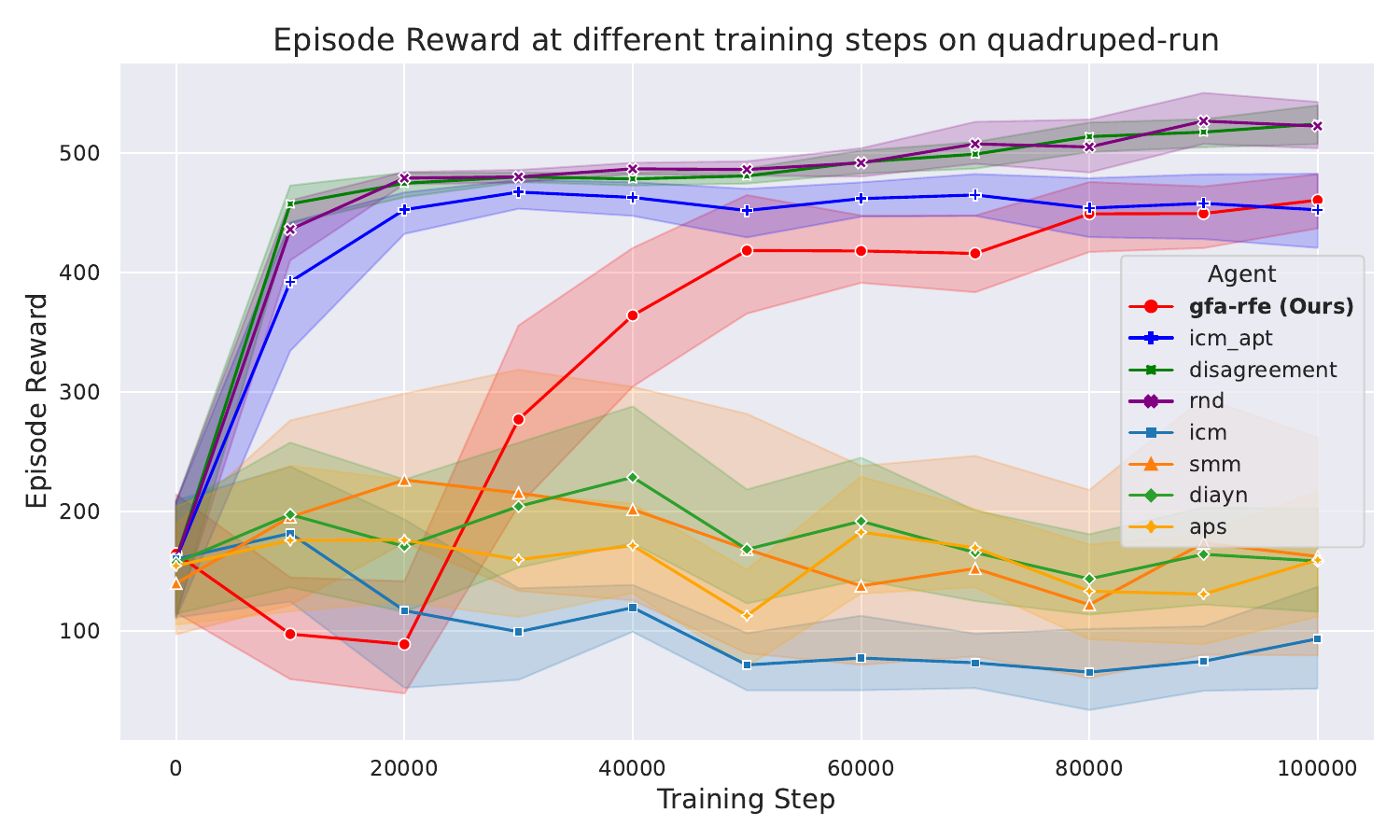}
    \caption{Run}
    \label{fig:2first}
\end{subfigure}
\hfill
\begin{subfigure}{0.49\textwidth}
    \includegraphics[width=\textwidth]{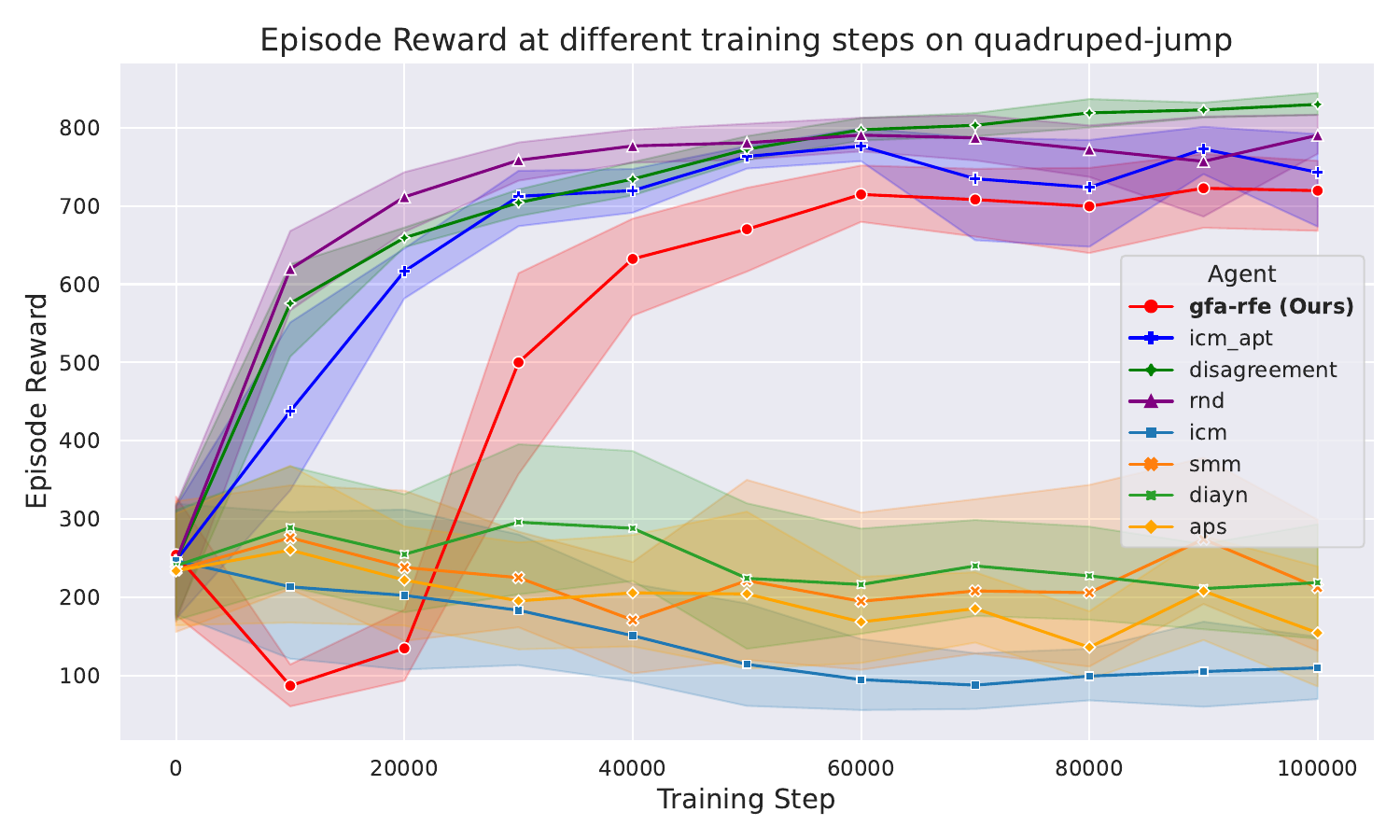}
    \caption{Jump}
    \label{fig:2second}
\end{subfigure}
\hfill
\begin{subfigure}{0.49\textwidth}
    \includegraphics[width=\textwidth]{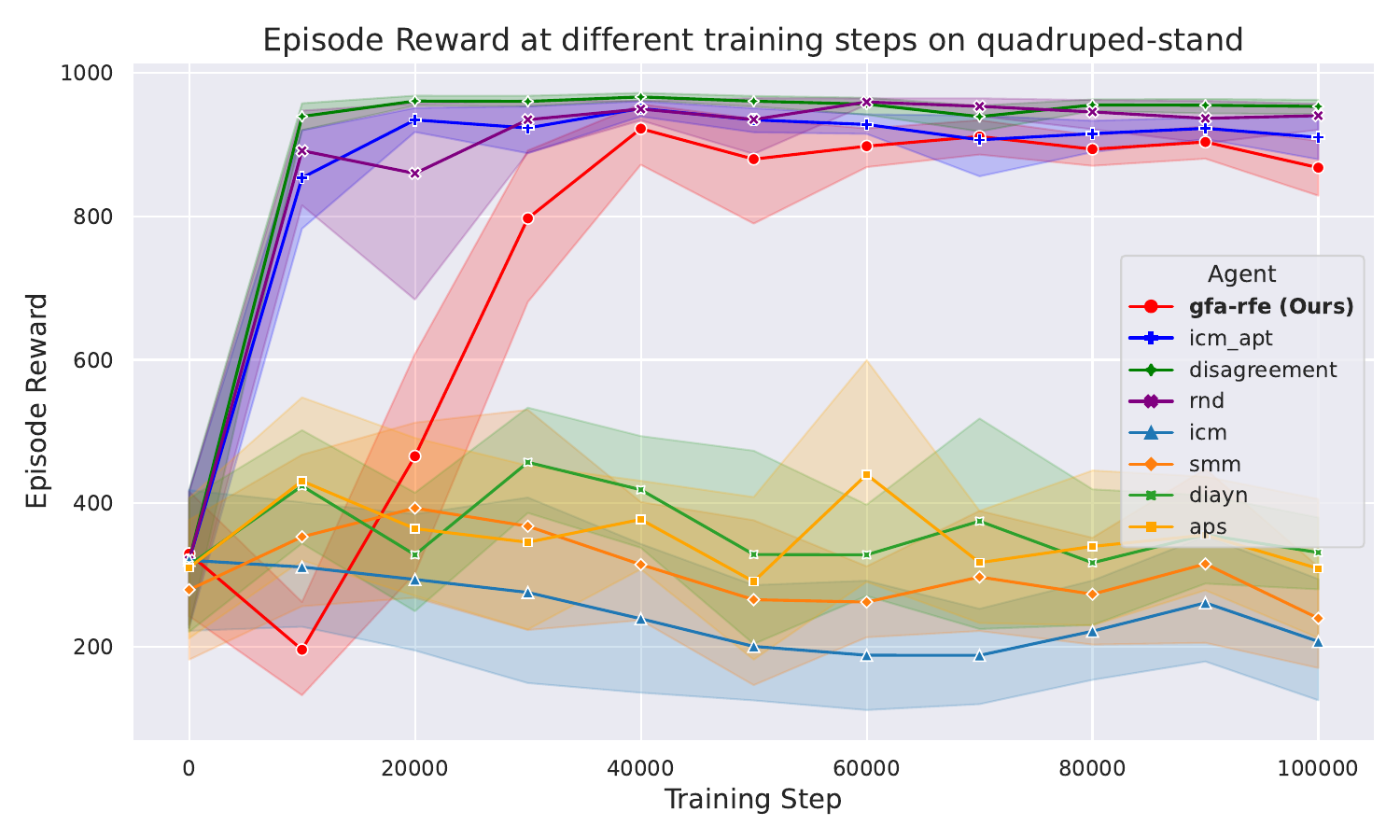}
    \caption{Stand}
    \label{fig:2third}
\end{subfigure}
\hfill
\begin{subfigure}{0.49\textwidth}
    \includegraphics[width=\textwidth]{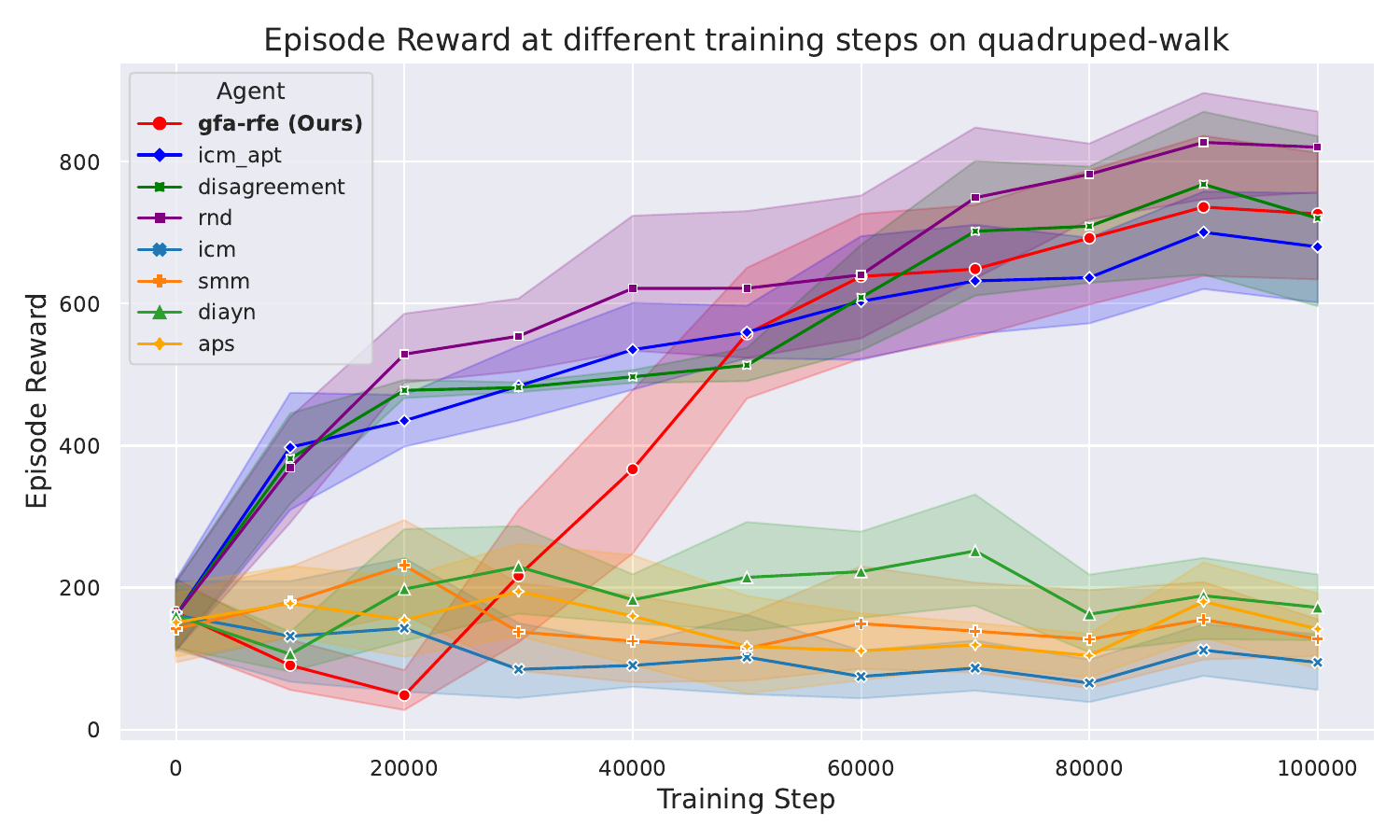}
    \caption{Walk}
    \label{fig:2fourth}
\end{subfigure}
  \caption{Episode reward at different offline training steps for different tasks for the \emph{quadruped} environment:~\eqref{fig:2first}: \emph{quadruped-flip};~\eqref{fig:2second}: \emph{quadruped-run};~\eqref{fig:2third} \emph{quadruped-stand};~\eqref{fig:2fourth} \emph{quadruped-walk}.}  \label{fig:subfigs2}
\end{figure}

\subsubsection{Numbers of Exploration Episodes}

Figures~\ref{fig:subfigs3} and \ref{fig:subfigs4} show the episode rewards for top-performing algorithms, including our algorithm (\alg), RND, Disagreement, and APT, across varying numbers of exploration episodes for different tasks. Notably, \alg~competes with these leading unsupervised algorithms effectively, matching their performance across a range of exploration episodes.

\begin{figure}[b]
 \centering
 
\begin{subfigure}{0.49\textwidth}
    \includegraphics[width=\textwidth]{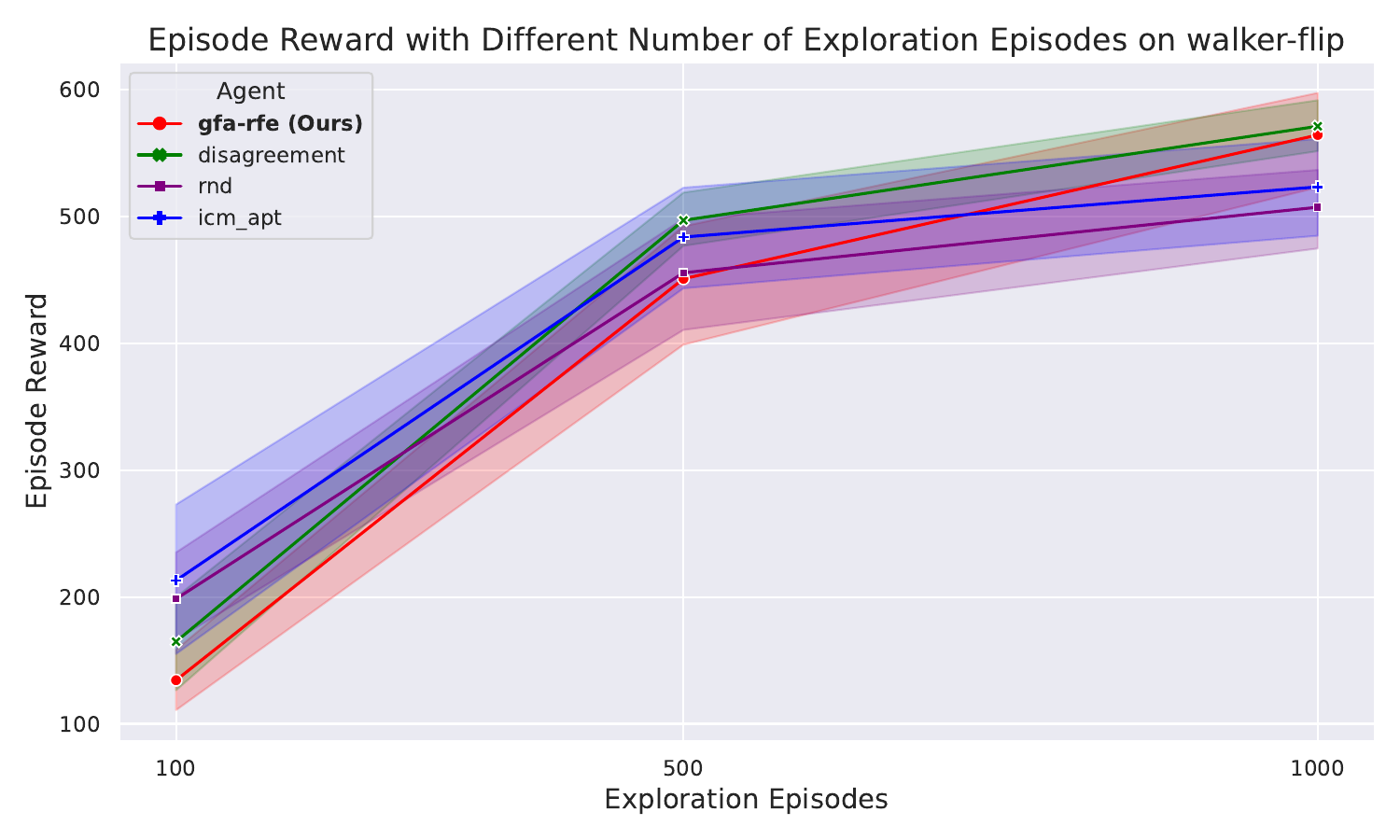}
    \caption{Flip}
    \label{fig:3first}
\end{subfigure}
\hfill
\begin{subfigure}{0.49\textwidth}
    \includegraphics[width=\textwidth]{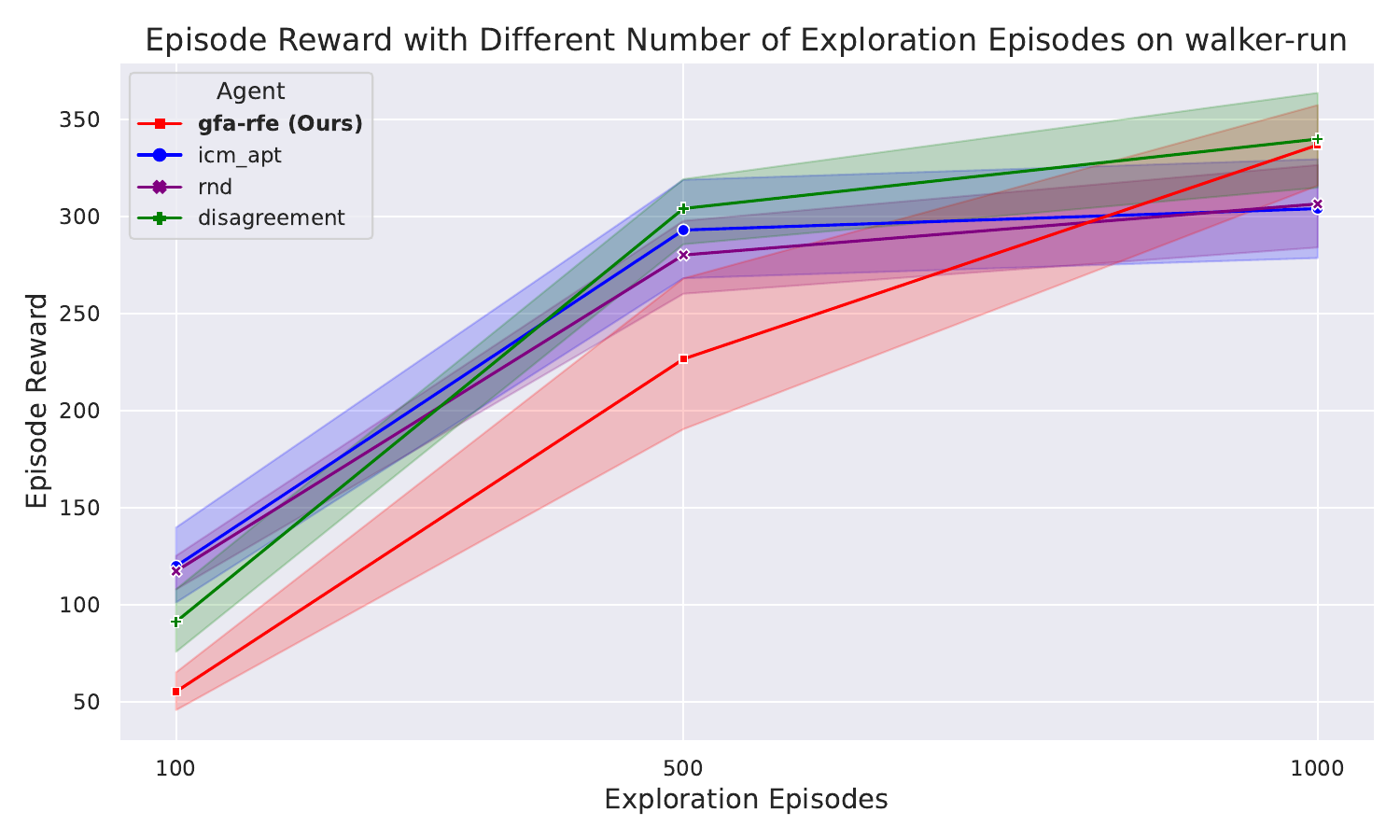}
    \caption{Run}
    \label{fig:3second}
\end{subfigure}
\hfill
\begin{subfigure}{0.49\textwidth}
    \includegraphics[width=\textwidth]{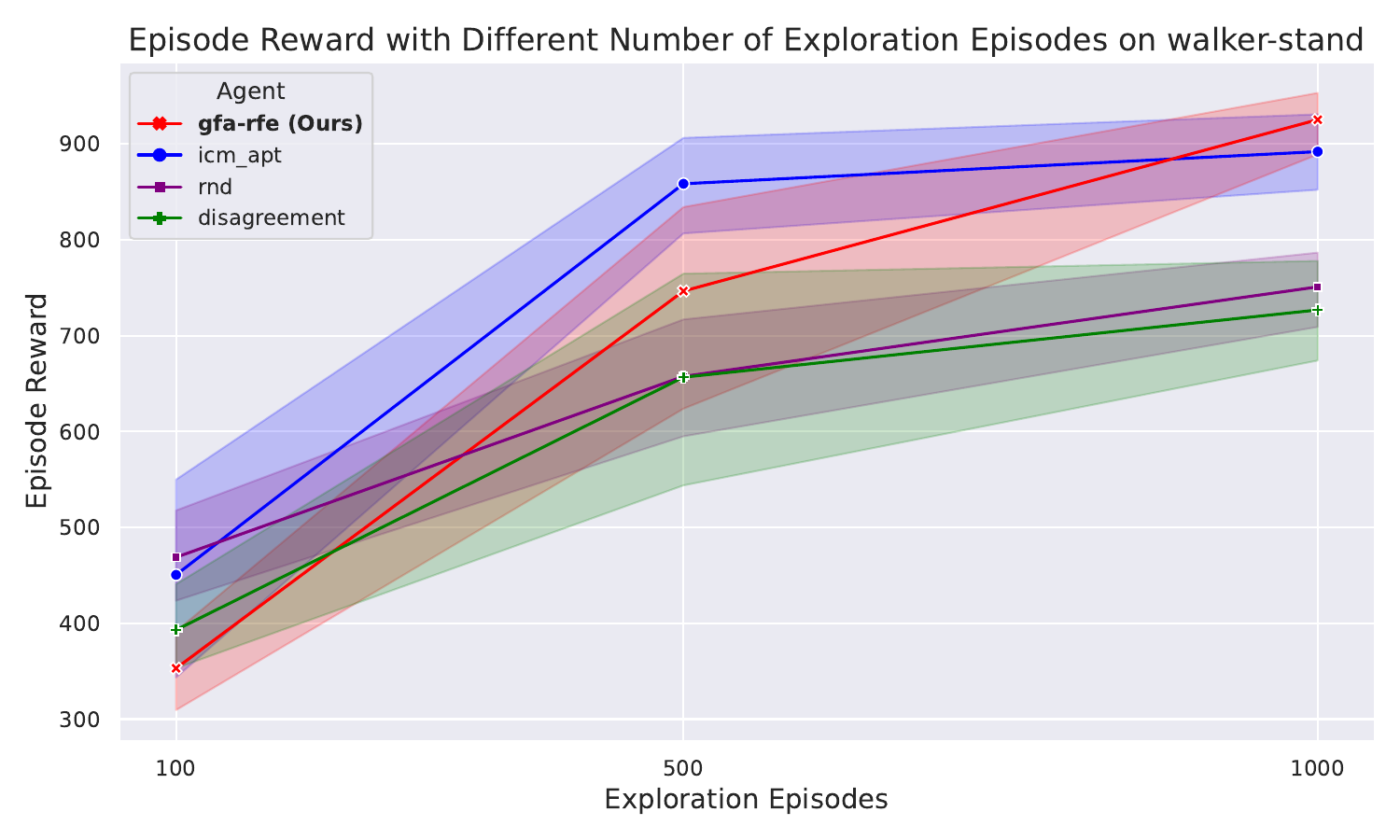}
    \caption{Stand}
    \label{fig:3third}
\end{subfigure}
\hfill
\begin{subfigure}{0.49\textwidth}
    \includegraphics[width=\textwidth]{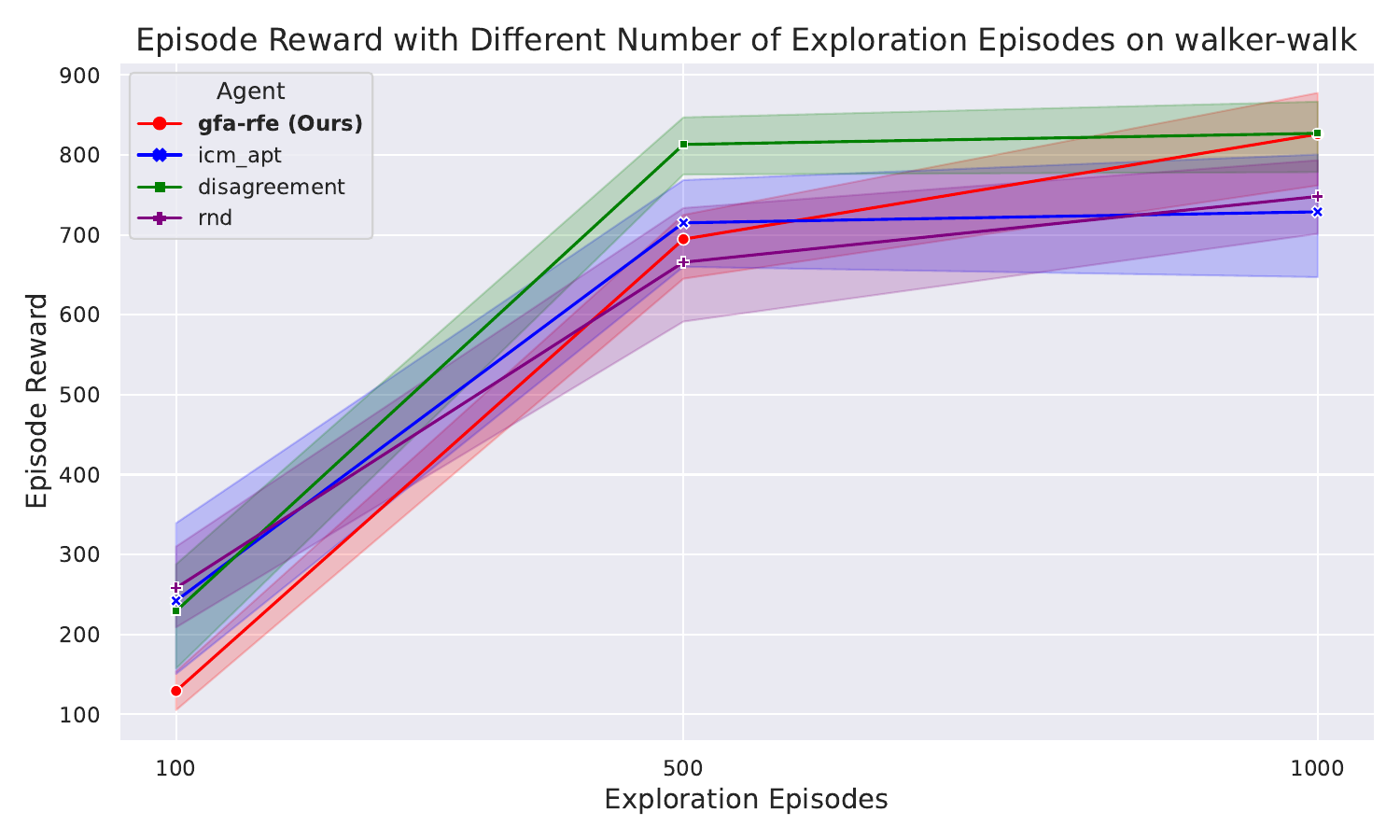}
    \caption{Walk}
    \label{fig:3fourth}
\end{subfigure}
  \caption{Episode reward with different numbers of exploration episodes for different tasks for the \emph{walker} environment:~\eqref{fig:3first}: \emph{walker-flip};~\eqref{fig:3second}: \emph{walker-run};~\eqref{fig:3third} \emph{walker-stand};~\eqref{fig:3fourth} \emph{walker-walk}.} \label{fig:subfigs3}
  \end{figure}

  \begin{figure}[b]

  \begin{subfigure}{0.49\textwidth}
    \includegraphics[width=\textwidth]{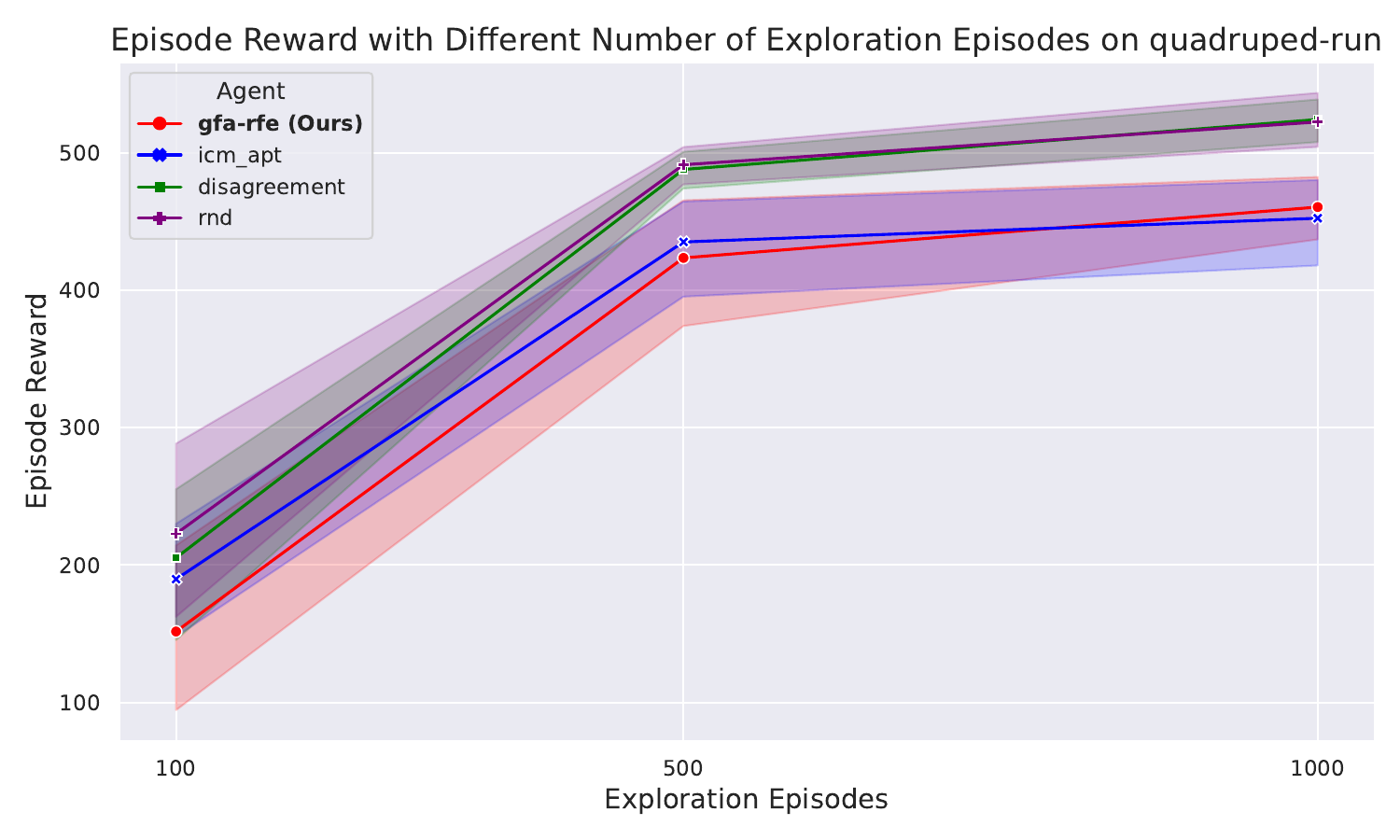}
    \caption{Run}
    \label{fig:4first}
\end{subfigure}
\hfill
\begin{subfigure}{0.49\textwidth}
    \includegraphics[width=\textwidth]{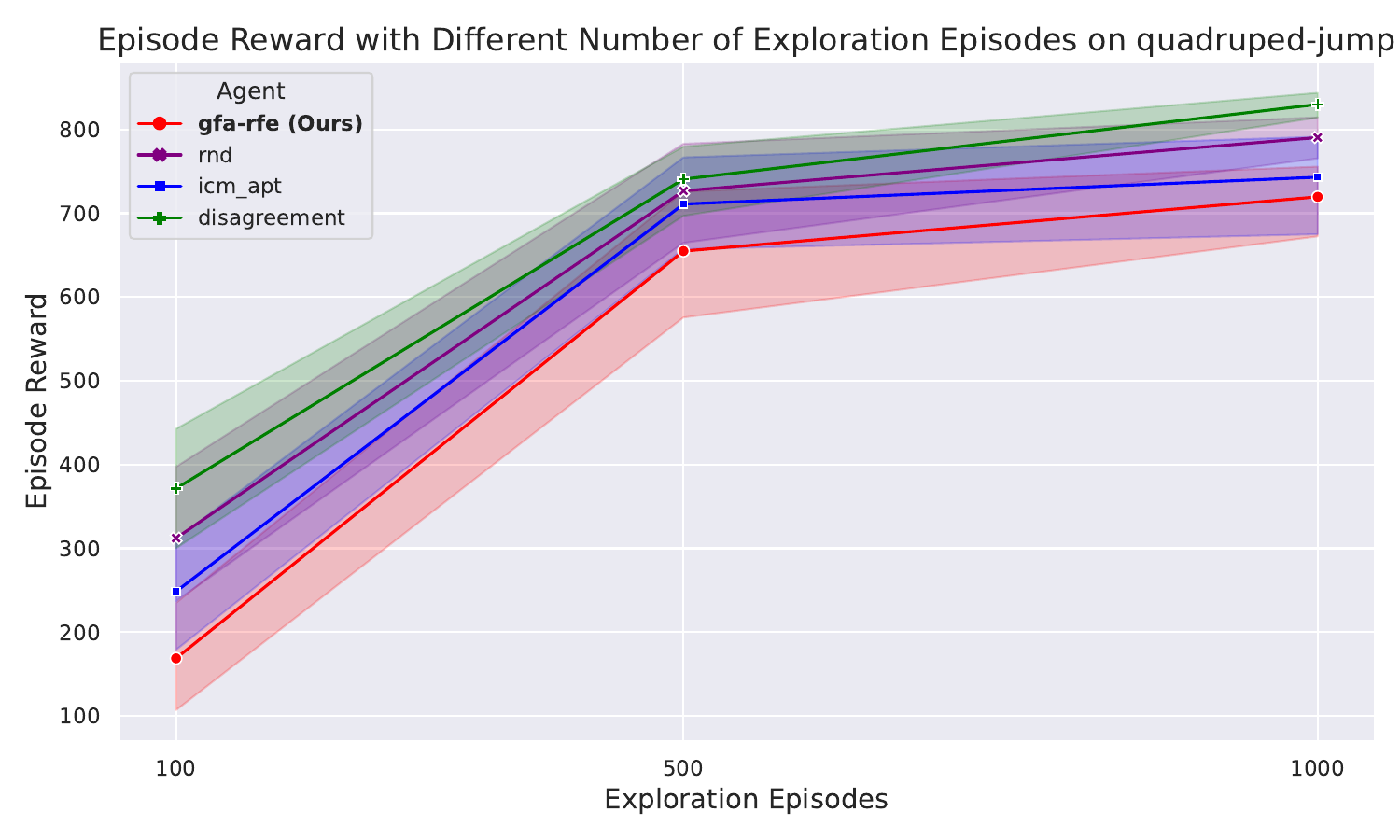}
    \caption{Jump}
    \label{fig:4second}
\end{subfigure}
\hfill
\begin{subfigure}{0.49\textwidth}
    \includegraphics[width=\textwidth]{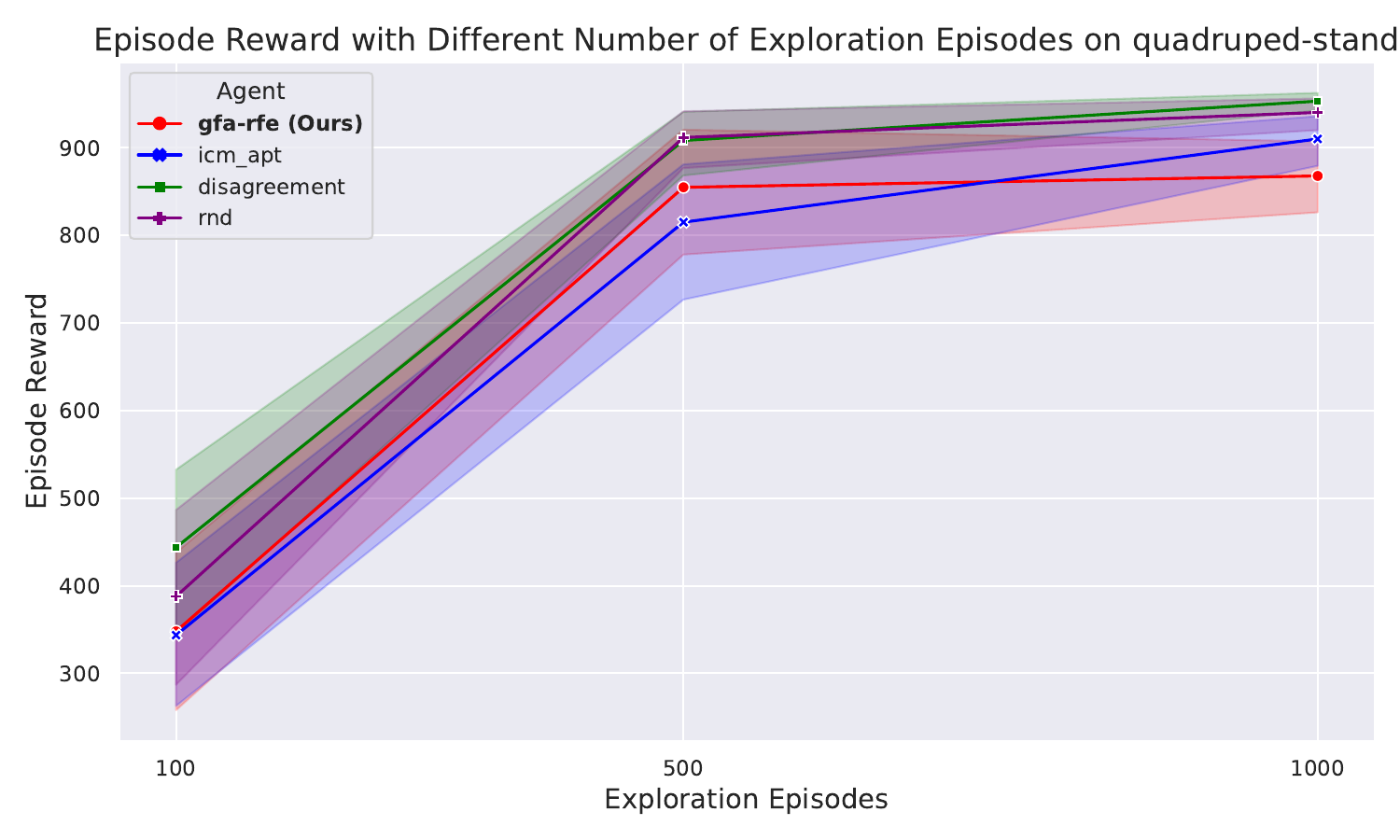}
    \caption{Stand}
    \label{fig:4third}
\end{subfigure}
\hfill
\begin{subfigure}{0.49\textwidth}
    \includegraphics[width=\textwidth]{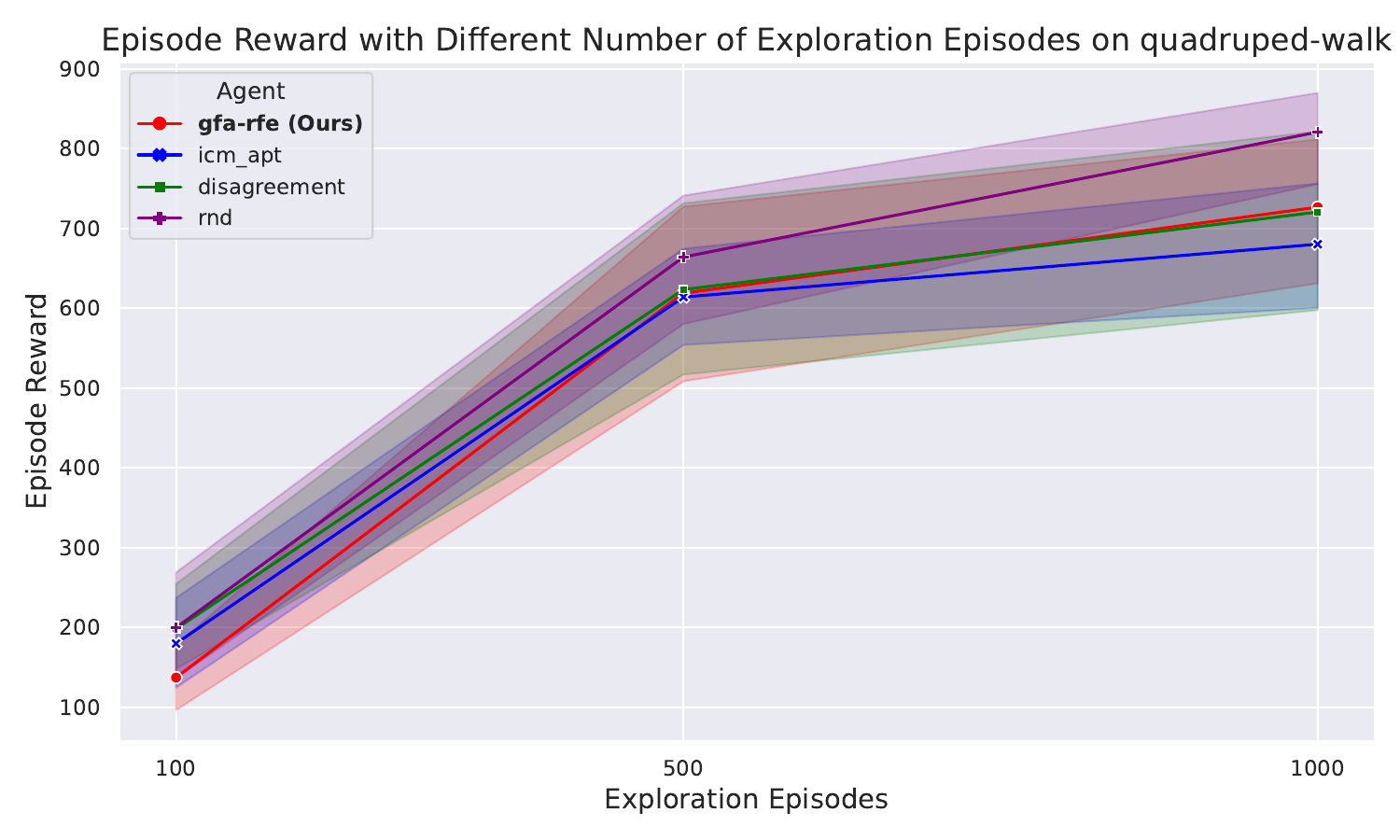}
    \caption{Walk}
    \label{fig:4fourth}
\end{subfigure}
  \caption{Episode reward with different numbers of exploration episodes for different tasks for the \emph{quadruped} environment:~\eqref{fig:4first}: \emph{quadruped-flip};~\eqref{fig:4second}: \emph{quadruped-run};~\eqref{fig:4third} \emph{quadruped-stand};~\eqref{fig:4fourth} \emph{quadruped-walk}.}  \label{fig:subfigs4}
\end{figure}

\end{document}